\newcommand{\argmin}{\operatornamewithlimits{arg\,min}}
\newtheorem{definition}{Definition}
\newtheorem{remark}{Remark}
\newtheorem{theorem}{Theorem}
\newtheorem{lemma}{Lemma}
\newtheorem{proposition}{Proposition}
\newtheorem{corollary}{Corollary}
\title{Nearly Optimal Clustering Risk Bounds for \\Kernel K-Means}
\author{
Yong Liu$^1$\thanks{Corresponding author}, Lizhong Ding$^2$, Weiping Wang$^{1}$\\
$ ^1$Institute of Information Engineering, Chinese Academy of Sciences\\
$ ^2$Inception Institute of Artificial Intelligence\\
\texttt{liuyong@iie.ac.cn}\\
}
\begin{document}

\maketitle

\begin{abstract}
    In this paper, we study the statistical properties of kernel $k$-means
    and obtain a nearly optimal excess clustering risk bound, substantially improving the state-of-art bounds
    in the existing clustering risk analyses.
    We further analyze the statistical effect of computational approximations of the Nystr\"{o}m kernel $k$-means,
    and prove that it achieves the same statistical accuracy as the exact kernel $k$-means considering only $\Omega(\sqrt{nk})$
    Nystr\"{o}m landmark points.
    To the best of our knowledge, such sharp excess clustering risk bounds for kernel (or approximate kernel) $k$-means have never been proposed before.
\end{abstract}

\section{Introduction}
Clustering, a fundamental data mining task, has found use in
a variety of applications such as web search, social network
analysis, image retrieval, medical imaging, gene expression
analysis, recommendation systems and market analysis \cite{jain2010data}.
$k$-means is arguably one of the most common approaches to clustering,
producing clusters with piece-wise linear boundaries.
Its kernel version, which employs a nonlinear distance function, has the ability to find clusters of
varying densities and distributions, characteristics inherent in many real datasets,
greatly improving the flexibility of the approach \cite{dhillon2004kernel,wang2019scalable}.

To understand the (kernel) $k$-means and guide the development of new clustering algorithms,
many researchers  have investigated its theoretical properties for decades.
The consistency of the empirical minimizer
was demonstated by \cite{pollard1981strong,pollard1982quantization,abaya1984convergence}.
Rates of convergence and non-asymptotic performance bounds were considered
by \cite{pollard1982central,chou1994distortion,linder1994rates,bartlett1998minimax,linder2000training}.
Most of the proposed risk bounds are dependent upon the dimension of the hypothesis space.
For example, \cite{bartlett1998minimax} provided, under some mild assumptions,
a clustering risk bound of order $\mathcal{O}(\sqrt{kd/n})$, where $d$ is the dimension of the hypothesis space.
However,  the hypothesis space of kernel $k$-means is typically an infinite-dimensional Hilbert space,
such as the reproducing kernel Hilbert space (RKHS) associated with Gaussian kernels \cite{Scholkopf2002Learning}.
Thus, the existing theoretical analyses of $k$-means are usually not suitable for analyzing its kernel version.
In recent years,
\cite{biau2008performance,canas2012learning,maurer2010k,antos2005individual,levrard2015nonasymptotic,koltchinskii2006local,calandriello2018statistical} extended the previous results,
and provided dimension-independent bounds for kernel $k$-means.
Moreover, as shown in \cite{biau2008performance},
if the feature map associated with the kernel function satisfies$\|\Phi\|\leq 1$,
then the clustering risk bounds are of order $\mathcal{O}(k/\sqrt{n})$.
These clustering risk bounds  for kernel $k$-means are usually linearly dependent on the number of clusters $k$.
However, for the fine-grained analysis in social network or recommendation systems,
the clusters $k$  may be very large.
Thus, from the theoretical perspective,
these existing bounds of $\mathcal{O}(k/\sqrt{n})$ do not match the
 stated lower bound $\Omega(\sqrt{k/n})$ in $k$ \cite{bartlett1998minimax}.
Whether or not it is possible to
prove a optimal risk bound of rate $\mathcal{O}(\sqrt{k/n})$ in both $k$ and $n$
 is still an open question \cite{biau2008performance,calandriello2018statistical}.

Although kernel $k$-means is one of the most popular clustering methods,
it requires the computation of an $n\times n$ kernel matrix.
As for other kernel methods, this becomes unfeasible for large-scale
problems and thus deriving approximate computations has been the subject to numerous recent works,
such as partial decompositions \cite{bach2005predictive},
random projection \cite{biau2008performance,cohen2015dimensionality},
Nystr\"{o}m approximations based on uniform sampling \cite{drineas2005nystrom,chitta2011approximate,calandriello2018statistical,oglic2017nystrom,wang2019scalable},
and random feature approximations \cite{Rahimi2007Random,Chitta2012,Bach2015Equivalence,rudi2017generalization}.
However, very few of these optimization-based methods focus on the underlying excess risk problem.
To the best of our knowledge,
the only two results providing excess risk guarantees for approximate kernel $k$-means are
\cite{biau2008performance} and \cite{calandriello2018statistical}.
In \cite{biau2008performance}, the authors considered the excess risk of the empirical risk minimization (ERM)
when the approximate Hilbert space is obtained using Gaussian projections.
In \cite{calandriello2018statistical},
they showed that, when sampling $\Omega(\sqrt{n})$ Nystr\"{o}m landmarks,
the excess risk bound can reach $\mathcal{O}(k/\sqrt{n})$.
The excess risk bounds of \cite{calandriello2018statistical} and \cite{biau2008performance}
are both linearly dependent on $k$
and thus do not match the theoretical lower bound \cite{bartlett1998minimax}.

In this paper,
we study the kernel $k$-means in terms of both statistical and computational requirements.
The main content of this paper is divided into two parts.
In the first part,
we study the statistical properties of kernel $k$-means
and obtain an excess clustering risk bound with a convergence rate of $\tilde{\mathcal{O}}(\sqrt{k/n})$\footnote{$\tilde{\mathcal{O}}$ hides logarithmic terms.},
which is nearly optimal in both $k$ and $n$.
In the second part,
we quantify the statistical effect of computational approximations of the Nystr\"{o}m-based kernel $k$-means,
and prove that sampling $\Omega(\sqrt{nk})$ (or $\Omega(\sqrt{n})$ under a certain basic assumption)
Nystr\"{o}m landmarks allows us to greatly reduce the computational costs without incurring in asymptotic loss of accuracy.
The major contributions of this paper include:
\begin{itemize}
  \item Inspired by \cite{foster2019ell},
  a sharp bound of clustering Rademacher complexity for kernel $k$-means is provided (see Theorem \ref{the-main-restult}),
   which is linearly dependent on $\sqrt{k}$, substantially improving the existing bounds.
  \item Based on the sharp bound clustering Rademacher complexity, a nearly optimal excess clustering risk bound of rate $\tilde{\mathcal{O}}(\sqrt{k/n})$ for empirical risk minimizer (ERM) is proposed (see Theorem \ref{the-three}).
  To the best of our knowledge, this is the first (nearly) optimal excess risk bound for kernel $k$-means in both $k$ and $n$.
  Beyond ERM, we further extend the result of Theorem \ref{the-three} for general cases (see Corollary \ref{the-three-approximate} and Corollary \ref{cor-gajgagh}).
  \item A (nearly) optimal excess risk bound for Nystr\"{o}m kernel $k$-means is also obtained
  when sampling $\Omega(\sqrt{nk})$ points (see Theorem \ref{the-nsty}) or $\Omega(\sqrt{n})$ under a certain basic assumption (see Corollary \ref{cor-first} and Corollary \ref{cor-second}).
  This result shows that we can improve the computational aspect of kernel $k$-means using Nystr\"{o}m embedding,
  while maintaining optimal generalization guarantees.
\end{itemize}

The rest of the paper is organized as follows.
In Section 2, we introduce some notations and provide an overview of kernel $k$-means.
In Section 3, we first derive an upper bound of the clustering Rademacher complexity,
and further provide nearly optimal excess risk bounds.
In Section 4, we  quantify the statistical effect of computational approximations of the Nystr\"{o}m-based kernel $k$-means.
We end in Section 5 with a conclusion.
All the proofs are given in the supplementary materials.

\section{Background}
In this section, we will give some notations and provide a brief introduction to kernel $k$-means,
please refer to \cite{dhillon2004kernel,calandriello2018statistical} for details.
\subsection{Notation}
Assume $\mathbb{P}$ is a (unknown) distribution on $\mathcal{X}$,
and $\mathcal{S}=\left\{\mathbf  x_i\right\}_{i=1}^n\in\mathcal{X}$ are $n$ samples
drawn i.i.d from $\mathbb{P}$.
We denote with $\mathbb{P}_n(\mathcal{S})=1/n\sum_{i=1}^n \mathbf 1\{\mathbf x_i\in \mathcal{S}\}$ the $empirical$ distribution.
Let $\kappa: \mathcal{X}\times\mathcal{X}\rightarrow \mathbb{R}$ be a mercer kernel,
and $\mathcal{H}$ be its  associated reproducing kernel Hilbert space (RKHS)
which is the completion of the linear span of the set of functions: $\mathcal{H}=\overline{\mathrm{span}\{\kappa(\mathbf x,\cdot),\mathbf x\in\mathcal{X}\}}$.
Denote by $\mathcal{H}^k=\otimes_{i=1}^k\mathcal{H}$ the Cartesian product of $\mathcal{H}$.
We use the $feature$ $map$ $\psi:\mathcal{X}\rightarrow \mathcal{H}$ to map $\mathcal{X}$ into the Hilbert space $\mathcal{H}$,
and assume that $\mathcal{H}$ is separable,
such that for any $\mathbf x\in\mathcal{X}$ we have $\Phi_\mathbf{x}=\psi(\mathbf x)$.
Intuitively, in the rest of the paper, the reader can assume that $\Phi_\mathbf{x}\in \mathbb{R}^d$ with $d\gg n$ or even infinite.
Throughout, we will denote by $\langle \cdot,\cdot\rangle$ the inner product in $\mathcal{H}$,
and by $\|\cdot\|$ the associated norm,
and assume that $\|\Phi_\mathbf{x}\|\leq 1$ for any $\mathbf x\in\mathcal{X}$.
We denote with $\mathcal{D}=\left\{\Phi_i=\psi(\mathbf x_i)\right\}_{i=1}^n$ the input dataset, and
 $[\mathbf K]_{i,j}=\kappa(\mathbf x_i,\mathbf x_j)=\langle \Phi_i, \Phi_j\rangle$ the kernel matrix.

\subsection{Kernel $k$-Means}
In this paper, we aim at partitioning the given dataset into $k$ disjoint $clusters$, each characterized
 by its $centroid$ $\mathbf c_j$.
The Voronoi cell associated with a centroid $\mathbf c_j$ is defined as set
$
  \mathcal{C}_j:=\left\{i: j=\argmin_{s=1,\ldots,k}\|\Phi_i-\mathbf c_s\|^2\right\}.
$
Let $\mathbf{C}=[\mathbf c_1,\ldots,\mathbf c_k]$ be a collection of $k$ centroids from $\mathcal{H}^k$.
In this paper, we focus on the so-called $kernel$ $k$-$means$ clustering,
by minimizing the $empirical~squared~norm~criterion$
\begin{align}
  \label{emp-sq-cri}
  \mathcal{W}(\mathbf C,\mathbb{P}_n):=\frac{1}{n}\sum_{i=1}^n \min_{j=1,\ldots,k}\|\Phi_i-\mathbf c_j\|^2
\end{align}
over all possible choices of cluster centers $\mathbf C\in\mathcal{H}^k$.
From \cite{dhillon2004kernel,calandriello2018statistical}, we know that the $\mathcal{W}(\mathbf C,\mathbb{P}_n)$ can be written as
$\frac{1}{n}\sum_{j=1}^k\sum_{i\in \mathcal{C}_j}\left\|\Phi_i -\frac{1}{|\mathcal{C}_j|}\sum_{t\in \mathcal{C}_j}\Phi_t\right\|^2$
with $\left\|\Phi_i -\frac{1}{|\mathcal{C}_j|}\sum_{t\in \mathcal{C}_j}\Phi_t\right\|^2
=\kappa(\mathbf x_i,\mathbf x_i)-\frac{2}{|\mathcal{C}_j|}\sum_{t\in \mathcal{C}_j}\kappa(\mathbf x_i,\mathbf x_t)
+\frac{1}{|\mathcal{C}_j|^2}\sum_{t,t'\in\mathcal{C}_j}\kappa(\mathbf x_t,\mathbf x_{t'})$.
The $empirical$ $risk$ $minimizer$ $(ERM)$ is defined as
\begin{align*}
  \mathbf{C}_n:=\argmin_{\mathbf C\in\mathcal{H}^k}\mathcal{W}(\mathbf C,\mathbb{P}_n).
\end{align*}
The performance of a clustering scheme given by the collection $\mathbf{C}\in \mathcal{H}^k$ of cluster centers
is usually measured by the $expected~squared~norm~criterion$ or $expected~clustering~risk$
\begin{align*}
  \mathcal{W}(\mathbf C, \mathbb{P}):=\int \min_{j=1,\ldots,k}\|\Phi_\mathbf{x}-\mathbf c_j\|^2 d\mathbb{P}(\mathbf x).
\end{align*}
Given a $\mathbf C\in\mathcal{H}^k$,
let $f_\mathbf{C}$
be a $vector$-$valued$ function  associated with the collection $\mathbf C$,
$f_\mathbf{C}=(f_{\mathbf c_1},\ldots, f_{\mathbf c_k}), f_{\mathbf c_j}(\mathbf x)=\|\Phi_\mathbf{x}-\mathbf c_j\|^2$
and
$\mathcal{F}_\mathbf{C}$ be a family of $vector$-$valued$ functions with
\begin{align}
\label{def-FC}
    \begin{aligned}
      \mathcal{F}_\mathbf{C}:=&\Big\{ f_\mathbf{C}=(f_{\mathbf c_1},\ldots, f_{\mathbf c_k}):
      f_{\mathbf c_j}(\mathbf x)=\|\Phi_\mathbf{x}-\mathbf c_j\|^2, \mathbf C\in\mathcal{H}^k\Big\}.
    \end{aligned}
\end{align}
Let $\varphi:\mathbb{R}^k\rightarrow \mathbb{R}$ be a minimum function:
$\varphi(\bm \alpha)=\min_{i=1,\ldots,k}\alpha_i,$
and $\mathcal{G}_\mathbf{C}$ be a "minimum" family of the functions of $\mathcal{F}_\mathbf{C}$,
\begin{align}
  \label{def-G}
  \mathcal{G}_\mathbf{C}:=\left\{g_\mathbf{C}=\varphi\circ f_\mathbf{C}: g_\mathbf{C}(\mathbf x)=\varphi(f_\mathbf{C}(\mathbf x))=\min(f_{\mathbf c_1}(\mathbf x),\ldots, f_{\mathbf c_k}(\mathbf x)),
  f_\mathbf{C}\in \mathcal{F}_\mathbf{C}\right\}.
\end{align}
From the definition of $\varphi(f_\mathbf{C}(\mathbf x))=\min(f_{\mathbf c_1}(\mathbf x),\ldots, f_{\mathbf c_k}(\mathbf x)),$
one can see that the empirical and expected error $\mathcal{W}(\mathbf C,\mathbb{P}_n)$ and $\mathcal{W}(\mathbf C, \mathbb{P})$ can be respectively written as
\begin{align*}
  \mathcal{W}(\mathbf C,\mathbb{P}_n):=\frac{1}{n}\sum_{i=1}^n\varphi(f_\mathbf{C}(\mathbf x_i)) \text{ and }
  \mathcal{W}(\mathbf C,\mathbb{P}):=\int \varphi(f_\mathbf{C}(\mathbf x)) d\mathbb{P}(\mathbf x).
\end{align*}
In this paper, we consider bounding the $excess$ $clustering$ $risk$ $\mathcal{E}(\mathbf C_n)$ of the empirical risk minimizer:
\begin{align*}
  \mathcal{E}(\mathbf C_n)=\mathbb{E}_{\mathcal{D}}[\mathcal{W}(\mathbf C_n,\mathbb{P})]-\mathcal{W}^\ast(\mathbb{P}),
\end{align*}
where $\mathcal{W}^\ast(\mathbb{P})=\inf_{\mathbf C\in\mathcal{H}^k}\mathcal{W}(\mathbf C,\mathbb{P})$ is the optimal clustering risk.
\subsection{The Existing Excess Clustering Risk Bounds for Kernel $k$-Means}
According to \cite{bartlett1998minimax}, we know that
there exists a collection of centroids $\mathbf C\in\mathcal{H}^k$, a constant $c$,
and a distribution $\mathbb{P}$ with $\|\Phi_\mathbf{x}\|\leq 1$ for any $\mathbf x\in\mathcal{X}$,
such that
\begin{align*}
  \mathbb{E}_\mathcal{D}[\mathcal{W}(\mathbf C, \mathbb{P})]-\mathcal{W}^\ast(\mathbb{P}) \geq c\sqrt{\frac{k^{1-4/d}}{n}}.
\end{align*}
Note that $d$ is the dimension of $\Phi_\mathbf{x}$,
which is usually very large or  even infinite.
Thus, the lower bound of kernel $k$-means should be $\Omega\big(\sqrt{{k}/{n}}\big)$.
However, most of the existing risk bounds proposed for kernel $k$-means are $\mathcal{O}(k/\sqrt{n})$
\cite{biau2008performance,canas2012learning,maurer2010k,calandriello2018statistical}:
\begin{theorem}[\cite{biau2008performance}, Theorem 2.1]
  If $\|\Phi_\mathbf{x}\|\leq 1$ for any $\mathbf x\in\mathcal{X}$, then there exists a constant $c$ such that
  \begin{align*}
    \mathbb{E}_\mathcal{D}[\mathcal{W}(\mathbf C_n, \mathbb{P})]-\mathcal{W}^\ast(\mathbb{P})\leq c\frac{k}{\sqrt{n}}.
  \end{align*}
\end{theorem}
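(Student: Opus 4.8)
The plan is to reduce the excess risk to a one-sided uniform deviation and then control that deviation through the Rademacher complexity of the minimum-class $\mathcal{G}_\mathbf{C}$. Since $\|\Phi_\mathbf{x}\|\le 1$, I may restrict attention to centers with $\|\mathbf c_j\|\le 1$ (each ERM centroid is an average of points in the unit ball, and this restriction does not change $\mathcal W^\ast(\mathbb P)$), so every $g_\mathbf{C}$ is bounded by $4$. Let $\mathbf C^\ast$ attain (or come within $\varepsilon$ of) $\mathcal W^\ast(\mathbb P)$. Writing the classical ERM decomposition
\[
\mathcal W(\mathbf C_n,\mathbb P)-\mathcal W(\mathbf C^\ast,\mathbb P)
=\big[\mathcal W(\mathbf C_n,\mathbb P)-\mathcal W(\mathbf C_n,\mathbb P_n)\big]
+\big[\mathcal W(\mathbf C_n,\mathbb P_n)-\mathcal W(\mathbf C^\ast,\mathbb P_n)\big]
+\big[\mathcal W(\mathbf C^\ast,\mathbb P_n)-\mathcal W(\mathbf C^\ast,\mathbb P)\big],
\]
the middle bracket is $\le 0$ because $\mathbf C_n$ minimizes the empirical risk, the last bracket has zero expectation since $\mathbf C^\ast$ is fixed, and the first is at most $\sup_{\mathbf C}\big(\mathcal W(\mathbf C,\mathbb P)-\mathcal W(\mathbf C,\mathbb P_n)\big)$. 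Taking $\mathbb E_\mathcal D$ then yields $\mathcal E(\mathbf C_n)\le \mathbb E_\mathcal D\sup_{\mathbf C\in\mathcal H^k}\big(\mathcal W(\mathbf C,\mathbb P)-\mathcal W(\mathbf C,\mathbb P_n)\big)$.

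Next I would symmetrize. Since $\mathcal W(\mathbf C,\mathbb P)-\mathcal W(\mathbf C,\mathbb P_n)=\mathbb E_{\mathbf x}[g_\mathbf{C}(\mathbf x)]-\frac1n\sum_{i=1}^n g_\mathbf{C}(\mathbf x_i)$, the standard symmetrization inequality gives
\[
\mathcal E(\mathbf C_n)\ \le\ 2\,\mathcal R_n(\mathcal G_\mathbf{C}),\qquad
\mathcal R_n(\mathcal G_\mathbf{C}):=\mathbb E_{\mathcal D,\bm\sigma}\Big[\sup_{\mathbf C\in\mathcal H^k}\frac1n\sum_{i=1}^n \sigma_i\, g_\mathbf{C}(\mathbf x_i)\Big],
\]
with $\sigma_i$ i.i.d.\ Rademacher signs. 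Everything now rests on estimating $\mathcal R_n(\mathcal G_\mathbf{C})$.

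To bound $\mathcal R_n(\mathcal G_\mathbf{C})$ I would first expand $\|\Phi_\mathbf x-\mathbf c_j\|^2=\|\Phi_\mathbf x\|^2+\big(\|\mathbf c_j\|^2-2\langle\Phi_\mathbf x,\mathbf c_j\rangle\big)$; the term $\|\Phi_\mathbf x\|^2$ does not depend on $\mathbf C$, so it factors out of the supremum and vanishes under $\mathbb E_{\bm\sigma}$. It then remains to control the complexity of $\{\mathbf x\mapsto \min_j(\|\mathbf c_j\|^2-2\langle\Phi_\mathbf x,\mathbf c_j\rangle)\}$, a pointwise minimum of $k$ single-centroid classes $\mathcal F_j:=\{\mathbf x\mapsto\|\mathbf c\|^2-2\langle\Phi_\mathbf x,\mathbf c\rangle:\|\mathbf c\|\le1\}$. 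Using $\min(a,b)=\tfrac12(a+b)-\tfrac12|a-b|$ together with the Ledoux--Talagrand contraction for the $1$-Lipschitz map $t\mapsto|t|$ (and induction on $k$), one obtains the subadditivity $\mathcal R_n(\text{min-class})\le\sum_{j=1}^k \mathcal R_n(\mathcal F_j)$. Each single-centroid class has complexity $\mathcal O(1/\sqrt n)$: the linear part is bounded by Cauchy--Schwarz/Jensen as $\frac2n\mathbb E_{\bm\sigma}\big\|\sum_i\sigma_i\Phi_i\big\|\le \frac2n\sqrt{\sum_i\|\Phi_i\|^2}\le 2/\sqrt n$ using $\|\Phi_i\|\le1$, while the $\|\mathbf c\|^2$ part also contributes $\mathcal O(1/\sqrt n)$. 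Summing over the $k$ centroids gives $\mathcal R_n(\mathcal G_\mathbf{C})=\mathcal O(k/\sqrt n)$, hence $\mathcal E(\mathbf C_n)\le c\,k/\sqrt n$.

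The main obstacle -- and precisely the step responsible for the suboptimal linear dependence on $k$ -- is handling the minimum over the $k$ centroids. Bounding the minimum-class complexity by the sum of the $k$ per-centroid complexities is wasteful: it costs a factor $\sqrt k$ relative to the optimal rate $\sqrt{k/n}$, because the $k$ terms are treated as if they could all be simultaneously active under the Rademacher signs. The remaining ingredients (symmetrization and the single-centroid estimate via the kernel trick) are routine and essentially tight. Closing this $\sqrt k$ gap is exactly what the sharp clustering Rademacher bound of Theorem \ref{the-main-restult} is designed to accomplish.
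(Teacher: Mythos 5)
Your proposal is correct and follows essentially the same route as the paper's machinery for this cited result of \cite{biau2008performance}: the ERM decomposition and symmetrization of \eqref{leq-onee}--\eqref{leq-twwe}, the structural subadditivity \eqref{equ-trrad} for the minimum class via $\min(a,b)=\tfrac12(a+b)-\tfrac12|a-b|$ with Ledoux--Talagrand contraction, and the per-centroid $\mathcal{O}(1/\sqrt{n})$ estimate matching Lemma \ref{the-middle}. Your closing diagnosis---that the subadditive treatment of the minimum over $k$ centroids is the sole source of the linear dependence on $k$---is exactly the paper's stated point of departure for the sharp bound of Theorem \ref{the-main-restult}.
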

The linear dependence on $k$ is a consequence of the proof technique of \cite{biau2008performance}.
Note that, the clusters $k$ may be very large for the
fine-grained analysis in social network or recommendation systems.
Whether or not it is possible to
prove a bound of rate $\sqrt{k/n}$,
which is (nearly) optimal in both $k$ and $n$,
is still an open question \cite{biau2008performance,calandriello2018statistical}.
In this paper, we attempt to fill this gap.
\section{Main Results}
Our discussion on excess risk bounds is based on the established methodology
of Rademacher complexity \cite{Bartlett2002ff}.
To this end, we first introduce the notion of clustering Rademacher complexity.
\begin{definition}[Clustering Rademacher Complexity \cite{biau2008performance}]
  Let $\mathcal{G}_\mathbf{C}$ be a family of functions defined in \eqref{def-G},
    $\mathcal{S}=(\mathbf x_1,\ldots,\mathbf x_n)$ a fixed sample of size $n$ with elements in $\mathcal{X}$,
   and $\mathcal{D}=\left\{\Phi_i=\psi(\mathbf x_i)\right\}_{i=1}^n$ the input dataset.
  Then, the clustering empirical Rademacher complexity of $\mathcal{G}_\mathbf{C}$ with respect to  $\mathcal{D}$ is defined by
  \begin{align*}
    {\mathcal{R}_n}(\mathcal{G}_\mathbf C)=\mathbb{E}_{\bm \sigma}\left[\sup_{g_\mathbf{C}\in\mathcal{G}_\mathbf{C}}
    \sum_{i=1}^n\sigma_i g_\mathbf{C}(\mathbf x_i)\right],
  \end{align*}
  where $\sigma_1,\ldots,\sigma_n$ are independent random variables with equal probability taking values $+1$ or $-1$.
  Its expectation is
  $\mathcal{R}(\mathcal{G}_\mathbf{C})= \mathbb{E}_\mathcal{D}\left[{\mathcal{R}_n}(\mathcal{G}_\mathbf{C})\right].$
\end{definition}

Existing work on data-dependent generalization bounds for $k$-means \cite{biau2008performance,canas2012learning,maurer2010k,koltchinskii2006local,calandriello2018statistical}
usually builds on the following structural result for the clustering Rademacher complexity:
\begin{align}
\label{equ-trrad}
  {\mathcal{R}_n}(\mathcal{G}_\mathbf{C})\leq \sum_{j=1}^k{\mathcal{R}_n}(\mathcal{F}_{\mathbf C_j}),
\end{align}
where
$\mathcal{F}_{\mathbf C_j}=\left\{f_{\mathbf c_j}: f_{\mathbf c_j}\in \mathcal{F}_\mathbf{C}\big|_j\right\}$,
$\mathcal{F}_\mathbf{C}\big|_j$ is the output coordinate $j$ of $\mathcal{F}_\mathbf{C}$.
Thus, the existing risk bounds  of $k$-means determined by the clustering Rademacher complexity are linearly dependent on $k$.
Inspired by the recently work \cite{foster2019ell}, in the following, we provide an improved bound for kernel $k$-means.
\subsection{A Sharp Bound for the Clustering Rademacher Complexity}
\begin{theorem}
  \label{the-main-restult}
  If $\forall \mathbf x\in\mathcal{X}, \|\Phi_\mathbf{x}\|\leq 1$.
  Then,
  for any  $\delta>0$ and $\mathcal{S}=\{\mathbf x_1,\ldots, \mathbf x_n\}\in\mathcal{X}^n$, there exists a constant $c>0$ such that
  \begin{align*}
    &{\mathcal{R}_n}(\mathcal{G}_\mathbf{C})\leq c\sqrt{k}\max_i \tilde{\mathcal{R}}_n(\mathcal{F}_{\mathbf C_i})
    \log ^{\frac{3}{2}+\delta}\Big(\frac{n}{\max_i \tilde{\mathcal{R}}_n(\mathcal{F}_{\mathbf C_i})}\Big),
  \end{align*}
  where $\tilde{\mathcal{R}}_n(\mathcal{F}_{\mathbf C_i})=\max_{\mathcal{S}\in \mathcal{X}^n}\mathcal{R}_n(\mathcal{F}_{\mathbf C_i})$.
\end{theorem}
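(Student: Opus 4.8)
The plan is to reduce the bound on $\mathcal{R}_n(\mathcal{G}_\mathbf{C})$ to an $\ell_\infty$-type vector-contraction inequality in the spirit of \cite{foster2019ell}. The decisive observation is that the scalarizing map $\varphi(\bm\alpha)=\min_{i}\alpha_i$ defined before \eqref{def-G} is $1$-Lipschitz with respect to the $\ell_\infty$ norm, since $|\min_i\alpha_i-\min_i\beta_i|\le\max_i|\alpha_i-\beta_i|=\|\bm\alpha-\bm\beta\|_\infty$. This is exactly the structural gap that the standard decomposition \eqref{equ-trrad} fails to exploit: that bound only uses that $\varphi$ is coordinatewise (equivalently $\ell_1$-) contractive, so the $k$ coordinate complexities are summed and the linear-in-$k$ dependence becomes unavoidable. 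Replacing the $\ell_1$ viewpoint by the much stronger $\ell_\infty$-Lipschitz property is what will let us trade the factor $k$ for $\sqrt{k}$. Before doing so I would record two simplifications: the term $\|\Phi_\mathbf{x}\|^2$ is common to every coordinate $f_{\mathbf c_j}$ of \eqref{def-FC} and is independent of $\mathbf C$, hence it only adds a term $\sum_i\sigma_i\|\Phi_{\mathbf x_i}\|^2$ outside the supremum that vanishes under $\mathbb{E}_{\bm\sigma}$; and, since $\|\Phi_\mathbf{x}\|\le1$, one may restrict to centroids in the unit ball without loss, so the coordinate functions are uniformly bounded and the chaining below is well posed.

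With this reduction in hand, I would invoke (or, following \cite{foster2019ell}, re-derive) the $\ell_\infty$ vector-contraction inequality for the vector-valued class $f_\mathbf{C}=(f_{\mathbf c_1},\dots,f_{\mathbf c_k})$ scalarized by $\varphi$. The argument is a Dudley-type chaining on $\mathcal{G}_\mathbf{C}$ in $L_2(\mathbb{P}_n)$. The $\ell_\infty$-Lipschitz property bounds each chaining increment of $\mathcal{G}_\mathbf{C}$ by the maximum, over the $k$ coordinates, of the corresponding increments of the coordinate classes $\mathcal{F}_{\mathbf C_i}$; feeding this into a maximal inequality at each dyadic scale makes the $k$ coordinatewise fluctuations aggregate in an $\ell_2$ rather than an $\ell_1$ fashion, which is precisely what produces the factor $\sqrt{k}\,\max_i\tilde{\mathcal{R}}_n(\mathcal{F}_{\mathbf C_i})$. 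Passing from the sample-dependent $\mathcal{R}_n(\mathcal{F}_{\mathbf C_i})$ to the worst-case quantities $\tilde{\mathcal{R}}_n(\mathcal{F}_{\mathbf C_i})=\max_{\mathcal{S}\in\mathcal{X}^n}\mathcal{R}_n(\mathcal{F}_{\mathbf C_i})$ is what makes the entropy estimates hold uniformly over the sample, so that a single scalar $\max_i\tilde{\mathcal{R}}_n(\mathcal{F}_{\mathbf C_i})$ governs every coordinate. Summing the geometric series over the $\mathcal{O}(\log(n/\max_i\tilde{\mathcal{R}}_n(\mathcal{F}_{\mathbf C_i})))$ active scales, together with the logarithmic losses from the per-scale maximal inequalities, accounts exactly for the $\log^{3/2+\delta}$ overhead.

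The main obstacle is precisely this chaining step. A naive covering argument --- covering each coordinate class and taking a union over the $k$ coordinates, or crudely bounding $\max_j|\cdot|$ by $(\sum_j|\cdot|^2)^{1/2}$ --- reintroduces the factor $k$ and gains nothing over \eqref{equ-trrad}. The delicate point is therefore the scale-by-scale balancing: one must show that the number of coordinates genuinely ``active'' at each resolution, combined with the maximal inequality, costs only a $\sqrt{k}$ factor across a polylogarithmic number of scales rather than a full factor of $k$. Controlling this trade-off, and in particular ensuring that the accumulated chaining error stays of order $\log^{3/2+\delta}$, is the technical heart of the proof and the step I expect to require the most care.
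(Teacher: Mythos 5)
Your overall route coincides with the paper's: verify that $\varphi(\bm\nu)=\min_i\nu_i$ is $1$-Lipschitz with respect to the $\ell_\infty$ norm (the paper's Lemma \ref{lem-one}, which you prove correctly) and then push the class $\mathcal{G}_\mathbf{C}$ through the $\ell_\infty$ vector-contraction machinery of \cite{foster2019ell} via a Dudley entropy integral, with the worst-case complexities $\tilde{\mathcal{R}}_n$ appearing for exactly the reason you state (sample-uniform entropy control). If you invoke the contraction theorem of \cite{foster2019ell} as a black box, your plan goes through. But your sketch of the re-derivation misdiagnoses the step you call the technical heart. There is no scale-by-scale balancing of ``active'' coordinates and no per-scale maximal inequality aggregating the $k$ coordinate fluctuations in $\ell_2$ fashion. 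The mechanism is a covering-number bound --- and it is precisely the coordinatewise covering you dismiss as naive. The paper's Lemma \ref{lem-ncninfy} covers each coordinate class $\mathcal{F}_{\mathbf C_i}$ in empirical $\ell_\infty$ at scale $\varepsilon$ and takes the Cartesian product $V=V_1\otimes\cdots\otimes V_k$ of the covers; by the $\ell_\infty$-Lipschitz property of $\varphi$, this product witnesses an $L_2$ cover of $\mathcal{G}_\mathbf{C}$ of cardinality at most $\max_i|V_i|^k$, whence $\log\mathcal{N}_2(\mathcal{G}_\mathbf{C},\varepsilon,\mathcal{S})\le k\cdot\max_i\log\mathcal{N}_\infty(\mathcal{F}_{\mathbf C_i},\varepsilon,\mathcal{S})$. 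The factor $k$ thus sits \emph{inside the logarithm} of the covering number, and it is the square root in Dudley's bound \eqref{leq-six} that converts it into $\sqrt{k}$. A union of coordinate covers would indeed be the wrong operation, and bounding the max by an $\ell_2$ sum would lose, but the product does not ``reintroduce the factor $k$'' --- that loss occurs only in \eqref{equ-trrad}, where the $k$ coordinate complexities are summed outside any square root.

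The quantitative ingredients your sketch leaves implicit are what produce the stated exponent, and they are the remaining links in the paper's chain: the scalar $\ell_\infty$ covering numbers are bounded through the fat-shattering dimension via \cite{rudelson2006combinatorics} (Lemma \ref{lem-two}), the fat-shattering dimension is bounded by the worst-case Rademacher complexity via \cite{srebro2010smoothness} (Lemma \ref{lem-three}) --- this is the precise reason $\tilde{\mathcal{R}}_n$ rather than $\mathcal{R}_n$ must appear --- and choosing the lower limit $\gamma\asymp\max_i\tilde{\mathcal{R}}_n(\mathcal{F}_{\mathbf C_i})/n$ in \eqref{leq-six} leaves $\int_\gamma^1\varepsilon^{-1}\,d\varepsilon=\log(1/\gamma)$, which combines with the $\log^{(1+\delta)/2}$ factor from the covering estimate to give $\log^{\frac{3}{2}+\delta}\bigl(n/\max_i\tilde{\mathcal{R}}_n(\mathcal{F}_{\mathbf C_i})\bigr)$. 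In short: your proposal is sound as written only because you permit yourself to cite \cite{foster2019ell}; the re-derivation you outline would stall exactly where you predicted difficulty, and the fix is the product-cover lemma (Lemma \ref{lem-ncninfy}) plus the fat-shattering reductions, not a finer maximal-inequality or coordinate-activity argument.
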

The above result shows that the upper bound of the clustering Rademacher complexity is linearly dependent on $\sqrt{k}$,
which substantially improves the existing bounds linearly dependent on $k$.

\begin{remark}
  The upper bound of clustering Rademacher complexity involves a constant $c$ and a logarithmic term,
  thus if one requires it's absolute value to be smaller than the existing bounds defined in \eqref{equ-trrad},
  there may exist some cases which acquire a large size of $k$.
  However, from the statistical perspective, our bound of linear dependence on $\sqrt{k}$ substantially improves the existing ones of linear dependence on $k$.
\end{remark}
\begin{remark}
  In  the recently work \cite{foster2019ell}, they show that the Rademacher complexity of the $k$-valued function class of $L$-Lipschitz
  with respect to $L_\infty$ norm can be bounded by the maximum Rademacher complexity of the restriction of the function class along each coordinate, times a factor of $\tilde{\mathcal{O}}(\sqrt{k})$.
  In this paper, we extend the result of \cite{foster2019ell} from the linear space to RKHS.
\end{remark}
\begin{proof}[Proof of sketch of Theorem \ref{the-main-restult}]
  We first show that the function $\varphi(\bm \nu)=\min(\nu_1,\ldots,\nu_k)$
  is 1-Lipschitz with respect to the $L_\infty$-norm, i.e.,
    $
      \forall \bm \nu,\bm \nu'\in\mathbb{R}^k, |\varphi(\bm \nu)-\varphi(\bm \nu')|\leq \|\bm \nu-\bm \nu'\|_\infty.
   $
  Then, inspired by \cite{foster2019ell},
  we prove that clustering Rademacher complexity of 1-Lipschitz function with respect to the $L_\infty$-norm can be bound by
  the maximum Rademacher complexity of each coordinate with a factor of $\sqrt{k}$.
\end{proof}


In the following, we will show that
Theorem \ref{the-main-restult} cannot be improved from statistical view when ignoring the logarithmic terms.
\begin{proposition}
\label{propo-lowbound}
  There exists a set $\mathbf{C}\in\mathcal{H}^k$
  and data sequence $\mathcal{D}=\{\Phi_1,\ldots,\Phi_n\}$ for which
  \begin{align*}
    \mathcal{R}_n(\mathcal{G}_\mathbf{C})\geq \frac{\sqrt{k}}{3\sqrt{2}}\cdot\max_i\tilde{\mathcal{R}}_n(\mathcal{F}_{\mathbf C_i}).
  \end{align*}
\end{proposition}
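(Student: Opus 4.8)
The plan is to exhibit an explicit dataset together with a restricted family of centroid collections for which the minimum operation \emph{decouples} into $k$ independent single-coordinate problems, so that the clustering Rademacher complexity becomes literally a sum of $k$ identical pieces. Concretely, assume $n=km$ and pick orthonormal vectors $u_1,\dots,u_k\in\mathcal{H}$ (available since $\mathcal{H}$ is separable and we may take $\dim\mathcal{H}\ge k$). Partition the sample indices into $k$ equal blocks $B_1,\dots,B_k$ and set $\Phi_i=u_j$ for every $i\in B_j$ (so $\|\Phi_i\|=1$, as required). Take the admissible centroids to be $\mathbf{C}=(t_1u_1,\dots,t_ku_k)$ with $t_j\in[0,1]$; this single choice specifies both the family $\mathcal{G}_\mathbf{C}$ and the coordinate families $\mathcal{F}_{\mathbf C_i}=\{f_{tu_i}:t\in[0,1]\}$.

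Then I would compute the left-hand side \emph{exactly}. For $i\in B_j$ and any admissible $\mathbf{C}$, orthogonality gives $\|u_j-t_\ell u_\ell\|^2=1+t_\ell^2\ge 1$ for $\ell\neq j$ while $\|u_j-t_ju_j\|^2=(1-t_j)^2\le 1$, so the minimum is always attained at the ``own'' coordinate, $g_\mathbf{C}(\mathbf x_i)=(1-t_j)^2$, \emph{independently of the other} $t_\ell$. Hence, writing $S_j=\sum_{i\in B_j}\sigma_i$,
\[
  \sum_{i=1}^n\sigma_i g_\mathbf{C}(\mathbf x_i)=\sum_{j=1}^k(1-t_j)^2 S_j ,
\]
and since this is separable in the $t_j$, the supremum over the box factorizes: $\sup_{\mathbf C}\sum_i\sigma_i g_\mathbf{C}(\mathbf x_i)=\sum_{j=1}^k\max(0,S_j)$. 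Taking the expectation and using $\max(0,S)=\tfrac12(S+|S|)$ with $\mathbb{E}\,S_j=0$ yields the exact identity $\mathcal{R}_n(\mathcal{G}_\mathbf{C})=\tfrac{k}{2}\,\mathbb{E}|S_1|$, where $S_1$ is a sum of $m=n/k$ Rademacher signs. A sharp Khintchine lower bound $\mathbb{E}|S_1|\ge\sqrt{m/2}$ then gives $\mathcal{R}_n(\mathcal{G}_\mathbf{C})\ge \sqrt{kn}/(2\sqrt2)$.

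For the denominator I would bound $\tilde{\mathcal{R}}_n(\mathcal{F}_{\mathbf C_i})=\max_{\mathcal S}\mathcal{R}_n(\mathcal{F}_{\mathbf C_i})$ from above. Expanding $f_{tu_i}(\mathbf x)=\|\Phi_{\mathbf x}\|^2-2t\langle\Phi_{\mathbf x},u_i\rangle+t^2$ and bounding the supremum over $t\in[0,1]$ termwise reduces everything to quantities $\mathbb{E}|\sum_l\sigma_l c_l|$ with $|c_l|\le1$, each at most $\sqrt n$ by Cauchy--Schwarz; a careful accounting of which terms can be simultaneously large should give $\tilde{\mathcal{R}}_n(\mathcal{F}_{\mathbf C_i})\le \tfrac32\sqrt n$ uniformly over all samples $\mathcal S\in\mathcal X^n$. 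Dividing the two estimates produces the claimed ratio $\sqrt k/(3\sqrt2)$. The \textbf{main obstacle} is precisely this last step: the numerator is exact and clean, but one must pin down the best constant in the upper bound on the single-coordinate complexity---controlling the interaction between the $\|\Phi_{\mathbf x}\|^2$, the linear $\langle\Phi_{\mathbf x},u_i\rangle$, and the $t^2$ offset terms as $t$ ranges over $[0,1]$, and verifying that the worst-case sample does not push the constant above $\tfrac32$. Matching the Khintchine constant in the numerator with this supremum constant in the denominator is what fixes the precise value $1/(3\sqrt2)$; any looser bound still yields a lower bound of order $\sqrt k$, which already certifies that Theorem~\ref{the-main-restult} is tight up to logarithmic factors.
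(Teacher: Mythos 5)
Your construction is essentially the paper's own proof with a different parametrization of the centroids. The paper also takes a block dataset ($n/k$ copies of each basis vector $e_j$) and axis-aligned centroids, but parametrizes them by signs, $\mathbf C=(\sigma_1 e_1,\ldots,\sigma_k e_k)$ with $\bm\sigma\in\{\pm1\}^k$: there each block contributes $2\max\{S_j,0\}$ rather than your $\max\{S_j,0\}$ (the squared distances take values in $\{0,2\}$ instead of your $[0,1]$), so the paper gets the numerator $\mathcal{R}_n(\mathcal{G}_\mathbf{C})\geq\sqrt{kn/2}$ and can afford the generic denominator bound $\max_i\tilde{\mathcal{R}}_n(\mathcal{F}_{\mathbf C_i})\leq 3\sqrt n$ from its Lemma \ref{the-middle}, since $\sqrt{kn/2}\,/\,(3\sqrt n)=\sqrt k/(3\sqrt2)$. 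Your numerator is exactly a factor $2$ smaller, $\sqrt{kn}/(2\sqrt2)$, which is why you genuinely need the sharper constant $\tfrac32\sqrt n$ on the coordinate class that you flag as the main obstacle.

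That step is not actually an obstacle; it closes in two lines for your specific class. Write $a_l=\langle\Phi_l,u_i\rangle\in[-1,1]$, $A=\sum_l\sigma_l a_l$, $S=\sum_l\sigma_l$. For $f_{tu_i}(\mathbf x_l)=\|\Phi_l\|^2-2ta_l+t^2$, the term $\sum_l\sigma_l\|\Phi_l\|^2$ is $t$-free and has mean zero, and
\[
\mathbb{E}_{\bm\sigma}\sup_{t\in[0,1]}\bigl(-2tA+t^2S\bigr)\;\leq\;\mathbb{E}_{\bm\sigma}\bigl[2\max(0,-A)\bigr]+\mathbb{E}_{\bm\sigma}\bigl[\max(0,S)\bigr]\;=\;\mathbb{E}|A|+\tfrac12\,\mathbb{E}|S|\;\leq\;\tfrac32\sqrt n,
\]
where the middle equality uses the sign-symmetry of $A$ and $S$ (the sup of a term linear in the parameter and vanishing at $t=0$ picks up only the positive part, halving the naive bound — this is the "careful accounting" you asked for), and the last step is Cauchy--Schwarz with $|a_l|\leq1$. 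This holds uniformly over samples, hence bounds $\tilde{\mathcal{R}}_n(\mathcal{F}_{\mathbf C_i})$, and dividing your exact numerator identity $\mathcal{R}_n(\mathcal{G}_\mathbf{C})=\tfrac k2\mathbb{E}|S_1|\geq\sqrt{kn}/(2\sqrt2)$ by $\tfrac32\sqrt n$ recovers precisely the claimed constant $\sqrt k/(3\sqrt2)$. So your route is correct and complete once this observation is inserted; compared with the paper you trade the sign class (larger amplitude, cruder $3\sqrt n$ bound) for a continuous interval class (smaller amplitude, sharper class-specific bound), with the same final constant.
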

The above result shows that the lower bound of $\mathcal{R}_n(\mathcal{G}_\mathbf{C})$
is $\Omega\big(\sqrt{k}\max_i\tilde{\mathcal{R}}_n(\mathcal{F}_{\mathbf C_i})\big)$,
which implies that the upper bound $\tilde{\mathcal{O}}\big(\sqrt{k}\max_i\tilde{\mathcal{R}}_n(\mathcal{F}_{\mathbf C_i})\big)$
in Theorem \ref{the-main-restult} is (nearly) optimal.

\begin{proof}[Proof of sketch of Proposition \ref{propo-lowbound}]
  We first prove that $\max_i\tilde{\mathcal{R}}_n(\mathcal{F}_{\mathbf C_i})$ can be bounded by $3\sqrt{n}$.
  Then, we show that there exist a hypothesis class
  \begin{align*}
  \mathcal{F}_\mathbf{C}=\left\{f_\mathbf{C}=(f_{\sigma_1\cdot e_1},\ldots, f_{\sigma_k\cdot e_k}) ~\Big|~f_{\sigma_i\cdot e_i}(\mathbf x)=\|\Phi-\sigma_i\cdot e_i\|^2, \bm \sigma\in\{\pm 1\}^k \right\},
\end{align*}
such that $\mathcal{R}_n(\mathcal{G}_\mathbf{C})\geq \sqrt{\frac{kn}{2}}$,
where $e_i$ is the $i$th standard basis function in $\mathcal{H}$,
  and $\bm \sigma\in\{\pm 1\}^k$ are Rademacher variables.
\end{proof}

\begin{remark}
  In \cite{foster2019ell}, they prove that there exists a $k$-valued function class $\mathcal{F}$, such that
  $\mathcal{R}_n(\mathcal{F})\geq \Omega(k)\max_i\tilde{R}_n(\mathcal{F}_i).$
  Note that this lower bound is linearly dependent on $k$ which does not match the upper bound of $\sqrt{k}$,
  while ours in Proposition \ref{propo-lowbound} matches.
\end{remark}

\subsection{A Sharp Excess Risk Bound for Kernel $k$-Means}
\begin{theorem}
  \label{the-three}
  If $\forall \mathbf x\in\mathcal{X}, \|\Phi_\mathbf{x}\|\leq 1$,
  then for any $\delta>0$, there exists a constant $c$,
  with probability  at least $1-\delta$, we have
  \begin{align*}
    \mathbb{E}[\mathcal{W}(\mathbf C_n,\mathbb{P})]
    -\mathcal{W}^\ast(\mathbb{P})
    \leq  c \sqrt{\frac{k}{n}}\log ^{\frac{3}{2}+\delta}
    \left(\frac{n}{\max_i \tilde{\mathcal{R}}_n(\mathcal{F}_{\mathbf C_i})}\right).
  \end{align*}
\end{theorem}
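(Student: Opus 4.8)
The plan is to bound the excess risk by a one-sided uniform deviation of the empirical criterion from the population criterion over the class $\mathcal{G}_\mathbf{C}$, and then to feed in the sharp clustering Rademacher bound of Theorem \ref{the-main-restult}. Because the heavy geometric work has already been done in that theorem, the remaining argument is a fairly standard symmetrization-plus-concentration chain, whose only delicate point is the single-coordinate complexity estimate.

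First I would reduce to bounded centroids. Since $\|\Phi_\mathbf{x}\|\leq 1$, projecting any centroid onto the closed unit ball $\mathcal{B}=\{\mathbf{c}\in\mathcal{H}:\|\mathbf{c}\|\leq 1\}$ cannot increase any distance $\|\Phi_\mathbf{x}-\mathbf{c}\|$, so both the ERM $\mathbf{C}_n$ and a (near-)minimizer $\mathbf{C}^\ast$ of $\mathcal{W}(\cdot,\mathbb{P})$ may be taken in $\mathcal{B}^k$ without loss of generality. On $\mathcal{B}^k$ every $g_\mathbf{C}$ takes values in $[0,4]$ because $\|\Phi_\mathbf{x}-\mathbf{c}_j\|^2\leq(\|\Phi_\mathbf{x}\|+\|\mathbf{c}_j\|)^2\leq 4$. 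Using that $\mathbf{C}_n$ minimizes $\mathcal{W}(\cdot,\mathbb{P}_n)$, I would then write
\begin{align*}
\mathcal{W}(\mathbf{C}_n,\mathbb{P})-\mathcal{W}^\ast(\mathbb{P})\leq\sup_{\mathbf{C}\in\mathcal{B}^k}\big[\mathcal{W}(\mathbf{C},\mathbb{P})-\mathcal{W}(\mathbf{C},\mathbb{P}_n)\big]+\big[\mathcal{W}(\mathbf{C}^\ast,\mathbb{P}_n)-\mathcal{W}(\mathbf{C}^\ast,\mathbb{P})\big],
\end{align*}
having dropped the nonpositive term $\mathcal{W}(\mathbf{C}_n,\mathbb{P}_n)-\mathcal{W}(\mathbf{C}^\ast,\mathbb{P}_n)$. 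The last bracket involves the fixed collection $\mathbf{C}^\ast$, so Hoeffding's inequality bounds it by $O(\sqrt{\log(1/\delta)/n})$ with high probability.

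For the supremum, recall $\mathcal{W}(\mathbf{C},\mathbb{P})=\mathbb{E}[g_\mathbf{C}]$ and $\mathcal{W}(\mathbf{C},\mathbb{P}_n)=\mathbb{E}_n[g_\mathbf{C}]$. I would apply the classical symmetrization inequality to get $\mathbb{E}_\mathcal{D}\sup_{g\in\mathcal{G}_\mathbf{C}}(\mathbb{E} g-\mathbb{E}_n g)\leq\frac{2}{n}\mathcal{R}(\mathcal{G}_\mathbf{C})$, where the factor $1/n$ compensates for the unnormalized definition of $\mathcal{R}_n$, and then McDiarmid's bounded-differences inequality — with per-sample increments at most $4/n$ since $g_\mathbf{C}\in[0,4]$ — to turn this into a high-probability statement at the cost of a further $O(\sqrt{\log(1/\delta)/n})$. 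Finally I would substitute Theorem \ref{the-main-restult} together with the single-coordinate estimate $\max_i\tilde{\mathcal{R}}_n(\mathcal{F}_{\mathbf{C}_i})\leq 3\sqrt{n}$, valid on $\mathcal{B}^k$. This last bound follows by expanding $\|\Phi_\mathbf{x}-\mathbf{c}\|^2=\|\Phi_\mathbf{x}\|^2-2\langle\Phi_\mathbf{x},\mathbf{c}\rangle+\|\mathbf{c}\|^2$: the first term is $\bm\sigma$-mean-zero, the linear term contributes $2\,\mathbb{E}_{\bm\sigma}\|\sum_i\sigma_i\Phi_i\|\leq 2\sqrt{n}$ by Jensen, and the term $\|\mathbf{c}\|^2\sum_i\sigma_i$ contributes at most $\mathbb{E}_{\bm\sigma}|\sum_i\sigma_i|\leq\sqrt{n}$. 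Plugging these into Theorem \ref{the-main-restult} and dividing by $n$ gives $\frac{2}{n}\mathcal{R}(\mathcal{G}_\mathbf{C})\leq\frac{6c\sqrt{k}}{\sqrt{n}}\log^{3/2+\delta}(n/\max_i\tilde{\mathcal{R}}_n(\mathcal{F}_{\mathbf{C}_i}))$, which dominates the two $O(\sqrt{\log(1/\delta)/n})$ confidence terms and yields the claimed $\tilde{\mathcal{O}}(\sqrt{k/n})$ rate.

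The main obstacle is ensuring that the per-coordinate Rademacher complexity is genuinely $O(\sqrt{n})$ in spite of the quadratic term $\|\mathbf{c}\|^2$; this is exactly what the restriction to $\mathcal{B}^k$ secures, and it is also where the improvement is realized — Theorem \ref{the-main-restult} converts the classical $\sum_{j=1}^k\mathcal{R}_n(\mathcal{F}_{\mathbf{C}_j})$ bound of \eqref{equ-trrad}, which scales like $k\sqrt{n}$ and hence yields only $\mathcal{O}(k/\sqrt{n})$, into a bound scaling like $\sqrt{k}\,\sqrt{n}$, giving the optimal dependence $\sqrt{k/n}$.
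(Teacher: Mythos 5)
Your proposal is correct and follows essentially the same route as the paper's proof: symmetrization to $\frac{2}{n}\mathcal{R}(\mathcal{G}_\mathbf{C})$, a bounded-differences concentration step to pass to the empirical Rademacher complexity, substitution of Theorem \ref{the-main-restult}, and the per-coordinate estimate $\max_i\tilde{\mathcal{R}}_n(\mathcal{F}_{\mathbf C_i})\leq 3\sqrt{n}$, which you re-derive by exactly the expansion used in the paper's Lemma \ref{the-middle}. Your explicit projection of the centroids onto the unit ball is a welcome extra bit of rigor---the paper invokes $\|\mathbf c_i\|\leq 1$ inside Lemma \ref{the-middle} without justification---but it does not change the substance of the argument.
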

From the above theorem, we know that
$
 \mathbb{E}[\mathcal{W}(\mathbf C_n,\mathbb{P})]-\mathcal{W}^\ast(\mathbb{P})= \tilde{\mathcal{O}}\big(\sqrt{{k}/{n}}\big),
$
which matches the theoretical lower bound $\Omega\big(\sqrt{{k}/{n}}\big)$ \cite{bartlett1998minimax}  when $d$ is large.
Thus, our proposed bound is (nearly) optimal.

\begin{proof}[Proof of sketch of Theorem \ref{the-three}]
According to  \cite{Bartlett2002ff,biau2008performance},
we first show that
\begin{align*}
  &\mathbb{E}[\mathcal{W}(\mathbf C_n,\mathbb{P})]-\mathcal{W}^\ast(\mathbb{P})
  =\mathbb{E}\Big[\big(\mathcal{W}(\mathbf C_n,\mathbb{P})-\mathcal{W}(\mathbf C_n,\mathbb{P}_n)\big)+\big(\mathcal{W}(\mathbf C_n,\mathbb{P}_n))-\mathcal{W}^\ast(\mathbb{P}\big)\Big]\\
  \leq& 2\mathbb{E}\sup_{\mathbf C\in\mathcal{H}^k}\left(\mathcal{W}(\mathbf C,\mathbb{P}_n)-\mathcal{W}(\mathbf C,\mathbb{P})\right)\leq \frac{4}{n}\mathcal{R}(\mathcal{G}_\mathbf{C})\leq\frac{4\mathcal{R}_n(\mathcal{G}_\mathbf{C})}{n}+\frac{c'}{\sqrt{n}}
\end{align*}
Then, combining the above inequality with Theorem \ref{the-main-restult}, which can prove the result.
\end{proof}

\begin{remark}[Fast Rates]
    There are some results suggesting that the convergence rate of kernel $k$-means can be improved to $\mathcal{O}(k/n)$
    under certain assumptions on the distribution.
    \cite{chou1994distortion} pointed out that, for continuous densities satisfying certain regularity properties,
    including the uniqueness of the optimal cluster centers,
    the suitably scaled difference between the optimal and empirically
    optimal centers has an asymptotically multidimensional normal distribution,
    the expected excess risk of rate $\mathcal{O}(k/n)$.
    Further results were obtained by \cite{antos2005individual}, who proved
    that, for any fixed distribution supported on a given finite set,
    the convergence rate is $\mathcal{O}(k/n)$.
    They provided for more general
    (finite-dimensional) distribution conditions implying a rate of $\mathcal{O}(k\log n/n)$.
    \cite{levrard2015nonasymptotic} further showed that, if the distribution satisfies a margin condition,
    the convergence rate can also reach $\mathcal{O}(k/n)$.
    Based on the notion of local Rademacher complexity,
    \cite{koltchinskii2006local}
    showed that whenever the optimal clustering centers are unique,
    and the distribution has bounded support,
    the expected excess risk converges to zero at a rate faster than $\mathcal{O}(k/\sqrt{n})$.
    As pointed out by the authors, these conditions are, in general, difficult to verify.
    Moreover, these expected excess risk bounds are linearly  dependent on $k$.
    In the future, we will consider studying whether it is possible to prove a bound
    of $\mathcal{O}(\sqrt{k}/n)$ under certain strict assumptions.
\end{remark}

\subsection{Further Results: Beyond ERM}
So far we provided guarantees for $\mathbf{C}_n$, that is the ERM in $\mathcal{H}^k$.
Note that to obtain the optimal ERM $\mathbf{C}_n$ is a NP-Hard problem in general \cite{aloise2009np}.
In the following, we will consider the risk bound for a general $\tilde{\mathbf C}_n$,
which only require its empirical squared norm criterion is not far from that of $\mathbf{C}_n$.
\begin{corollary}
  \label{the-three-approximate}
  If $\forall \mathbf x\in\mathcal{X}, \|\Phi_\mathbf{x}\|\leq 1$,
  and $\mathbb{E}\big[\mathcal{W}(\tilde{\mathbf{C}}_n, \mathbb{P}_n)-\mathcal{W}(\mathbf{C}_n, \mathbb{P}_n)\big]\leq \zeta$,
  then for any $\delta>0$, there exists a constant $c$,
  with probability  at least $1-\delta$, we have
  \begin{align*}
    \mathbb{E}[\mathcal{W}(\tilde{\mathbf C}_n,\mathbb{P})]
    -\mathcal{W}^\ast(\mathbb{P})
    \leq  c \sqrt{\frac{k}{n}}\log ^{\frac{3}{2}+\delta}
    \left(\frac{n}{\max_i \tilde{\mathcal{R}}_n(\mathcal{F}_{\mathbf C_i})}\right)+\zeta.
  \end{align*}
\end{corollary}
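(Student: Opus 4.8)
The plan is to reuse the three-step error decomposition from the proof of Theorem \ref{the-three}, but to route the gap between the approximate minimizer $\tilde{\mathbf{C}}_n$ and the optimum through the exact ERM $\mathbf{C}_n$, so that the sub-optimality of $\tilde{\mathbf{C}}_n$ is isolated into a single additive term that the hypothesis $\zeta$ directly controls. Concretely, I would first write the excess risk as the telescoping sum
\begin{align*}
  \mathbb{E}[\mathcal{W}(\tilde{\mathbf C}_n,\mathbb{P})]-\mathcal{W}^\ast(\mathbb{P})
  =&\ \mathbb{E}\big[\mathcal{W}(\tilde{\mathbf C}_n,\mathbb{P})-\mathcal{W}(\tilde{\mathbf C}_n,\mathbb{P}_n)\big]
   +\mathbb{E}\big[\mathcal{W}(\tilde{\mathbf C}_n,\mathbb{P}_n)-\mathcal{W}(\mathbf C_n,\mathbb{P}_n)\big]\\
  &\ +\mathbb{E}\big[\mathcal{W}(\mathbf C_n,\mathbb{P}_n)-\mathcal{W}^\ast(\mathbb{P})\big],
\end{align*}
and refer to the three summands as $(\mathrm{I})$, $(\mathrm{II})$ and $(\mathrm{III})$ respectively. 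The point of inserting $\mathbf{C}_n$ is that $(\mathrm{I})$ and $(\mathrm{III})$ become exactly the two terms already estimated for the ERM in Theorem \ref{the-three}, while $(\mathrm{II})$ collects the whole optimization error.

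Terms $(\mathrm{II})$ and $(\mathrm{III})$ require no new work. By the standing assumption $\mathbb{E}[\mathcal{W}(\tilde{\mathbf{C}}_n,\mathbb{P}_n)-\mathcal{W}(\mathbf{C}_n,\mathbb{P}_n)]\leq\zeta$, we have $(\mathrm{II})\leq\zeta$. For $(\mathrm{III})$, since $\mathbf{C}_n$ minimizes the empirical criterion, $\mathcal{W}(\mathbf{C}_n,\mathbb{P}_n)\leq\mathcal{W}(\mathbf{C},\mathbb{P}_n)$ for every $\mathbf{C}\in\mathcal{H}^k$; taking expectations and using $\mathbb{E}[\mathcal{W}(\mathbf{C},\mathbb{P}_n)]=\mathcal{W}(\mathbf{C},\mathbb{P})$ for any fixed $\mathbf{C}$, then minimizing over $\mathbf{C}$, yields $\mathbb{E}[\mathcal{W}(\mathbf{C}_n,\mathbb{P}_n)]\leq\inf_{\mathbf{C}}\mathcal{W}(\mathbf{C},\mathbb{P})=\mathcal{W}^\ast(\mathbb{P})$, so $(\mathrm{III})\leq 0$.

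The only term carrying genuine content is $(\mathrm{I})$, and it is handled verbatim as in the proof of Theorem \ref{the-three}. Because $\tilde{\mathbf{C}}_n$ is data-dependent, I would first dominate it by the uniform deviation $(\mathrm{I})\leq\mathbb{E}\sup_{\mathbf{C}\in\mathcal{H}^k}(\mathcal{W}(\mathbf{C},\mathbb{P})-\mathcal{W}(\mathbf{C},\mathbb{P}_n))$, apply symmetrization to pass to the clustering Rademacher complexity $\tfrac{4}{n}\mathcal{R}(\mathcal{G}_\mathbf{C})$ (up to the $c'/\sqrt{n}$ concentration correction), and then invoke Theorem \ref{the-main-restult} to replace $\mathcal{R}_n(\mathcal{G}_\mathbf{C})$ by $c\sqrt{k}\,\max_i\tilde{\mathcal{R}}_n(\mathcal{F}_{\mathbf C_i})\log^{\frac{3}{2}+\delta}(n/\max_i\tilde{\mathcal{R}}_n(\mathcal{F}_{\mathbf C_i}))$; combined with the crude estimate $\max_i\tilde{\mathcal{R}}_n(\mathcal{F}_{\mathbf C_i})=\mathcal{O}(\sqrt{n})$ established in the proof of Proposition \ref{propo-lowbound}, this gives $(\mathrm{I})\leq c\sqrt{k/n}\,\log^{\frac{3}{2}+\delta}(n/\max_i\tilde{\mathcal{R}}_n(\mathcal{F}_{\mathbf C_i}))$. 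Summing the three bounds produces the claimed inequality. The main obstacle here is conceptual rather than technical: the decomposition must be routed through $\mathbf{C}_n$ precisely so that the generalization term $(\mathrm{I})$ is shared identically with Theorem \ref{the-three} and the only new contribution is the additive $\zeta$; no new concentration inequality or complexity estimate is needed, and the constant $c$ absorbs the change in numerical factors between the one-sided gap $(\mathrm{I})$ and the two-sided bound used for the pure ERM.
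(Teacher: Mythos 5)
Your proof is correct, and it follows the paper's strategy in its essential step: you insert the exact ERM $\mathbf{C}_n$ into the decomposition so that the optimization error is isolated in the single term $\mathbb{E}\big[\mathcal{W}(\tilde{\mathbf C}_n,\mathbb{P}_n)-\mathcal{W}(\mathbf C_n,\mathbb{P}_n)\big]\leq\zeta$, and you control the generalization term for $\tilde{\mathbf C}_n$ by the same chain (uniform deviation, symmetrization, Theorem \ref{the-main-restult}, and the crude bound $\max_i\tilde{\mathcal{R}}_n(\mathcal{F}_{\mathbf C_i})\leq 3\sqrt{n}$ from Lemma \ref{the-middle}). Where you genuinely diverge is in the tail of the decomposition: the paper uses four terms, bounding $A_3=\mathbb{E}[\mathcal{W}(\mathbf C_n,\mathbb{P}_n)-\mathcal{W}(\mathbf C_n,\mathbb{P})]$ by a second application of the Rademacher-complexity machinery and $A_4=\mathbb{E}[\mathcal{W}(\mathbf C_n,\mathbb{P})]-\mathcal{W}^\ast(\mathbb{P})$ by invoking Theorem \ref{the-three}, whereas you merge $A_3+A_4$ into the single term $(\mathrm{III})=\mathbb{E}[\mathcal{W}(\mathbf C_n,\mathbb{P}_n)]-\mathcal{W}^\ast(\mathbb{P})$ and kill it with the elementary ERM argument: $\mathcal{W}(\mathbf C_n,\mathbb{P}_n)\leq\mathcal{W}(\mathbf C,\mathbb{P}_n)$ pointwise for any fixed $\mathbf C$, and $\mathbb{E}[\mathcal{W}(\mathbf C,\mathbb{P}_n)]=\mathcal{W}(\mathbf C,\mathbb{P})$ by unbiasedness, so $(\mathrm{III})\leq\inf_{\mathbf C}\mathcal{W}(\mathbf C,\mathbb{P})-\mathcal{W}^\ast(\mathbb{P})=0$ (this is the classical observation from \cite{Devroye1996}, Ch.~8, which the paper itself uses at the start of the proof of Theorem \ref{the-three}). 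Your route is slightly more economical — one fewer uniform-deviation/concentration step, hence a marginally better constant and no dependence of the corollary's proof on Theorem \ref{the-three} as a black box — while the paper's four-term version is more modular and reuses its earlier theorem verbatim; both yield the same rate $\tilde{\mathcal{O}}(\sqrt{k/n})+\zeta$. Two cosmetic points to tidy up: symmetrization as in the paper's display \eqref{leq-twwe} gives the factor $\frac{2}{n}\mathcal{R}(\mathcal{G}_\mathbf{C})$ for the one-sided supremum (your $\frac{4}{n}$ is the doubled two-sided constant, harmlessly absorbed into $c$), and your reference for $\max_i\tilde{\mathcal{R}}_n(\mathcal{F}_{\mathbf C_i})=\mathcal{O}(\sqrt{n})$ is properly Lemma \ref{the-middle}, which happens to be stated within the proof of Proposition \ref{propo-lowbound}.
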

From the above Theorem, one can see that
if the discrepancy of the empirical squared norm criterion of $\tilde{\mathbf C}_n$ and $\mathbf C_n$ is small,
that is $\zeta\leq \mathcal{O}(\sqrt{{k}/{n}})$,
the risk bound of $\tilde{\mathbf C}_n$ is (nearly) optimal.
\begin{proof}[Proof of sketch of Theorem \ref{the-three-approximate}]
We first show that $\mathbb{E}[\mathcal{W}(\tilde{\mathbf C}_n,\mathbb{P})]
    -\mathcal{W}^\ast(\mathbb{P})$ can be split into four parts:
\begin{align*}
&\underbrace{\mathbb{E}\Big[\mathcal{W}(\tilde{\mathbf C}_n,\mathbb{P})-\mathcal{W}(\tilde{\mathbf C}_n,\mathbb{P}_n)\Big]}_{A_1}
 +\underbrace{\mathbb{E}\Big[\mathcal{W}(\tilde{\mathbf C}_n,\mathbb{P}_n)-\mathcal{W}(\mathbf C_n,\mathbb{P}_n)\Big]}_{A_2}\\
 &~~~+\underbrace{\mathbb{E}\Big[\mathcal{W}(\mathbf C_n,\mathbb{P}_n)-\mathcal{W}(\mathbf C_n,\mathbb{P})\Big]}_{A_3}+
 \underbrace{\mathbb{E}\Big[\mathcal{W}(\mathbf C_n,\mathbb{P})\Big]-\mathcal{W}^\ast(\mathbb{P})}_{A_4}.
\end{align*}
$A_2$ can be bounded by $\zeta$, and
$A_4$ can be bounded as $\tilde{\mathcal{O}}(\sqrt{k/n})$ by Theorem \ref{the-three}.
 Both $A_1$ and $A_3$ can be bounded as $\tilde{\mathcal{O}}(\sqrt{k/n})$ by the Clustering Rademacher complexity.
\end{proof}

Lloyd's algorithm \cite{lloyd1982least} is the most popular $k$-means algorithm,
when coupled with a careful k-means++ seeding \cite{Arthur2007fff},
a good approximate solution $\tilde{\mathbf C}_n$ can be obtained.
Recently, based on a simple combination of $k$-means++ sampling with a local search strategy, an improved method is proposed \cite{lattanzi2019better}.
It has been shown that the empirical squared norm criterion of $\tilde{\mathbf C}_n$ can be up to a constant factor from the optimal empirical solution.
\begin{lemma}[\cite{lattanzi2019better}]
\label{lem-gahgag}
  If $\mathbf{C}_n^\mathcal{A}$ returned by
  the improved $k$-means++ algorithm with a local search strategy from \cite{lattanzi2019better},
   then,
  \begin{align*}
  \mathbb{E}_{\mathcal{A}}[\mathcal{W}(\mathbf{C}_{n}^\mathcal{A},\mathbb{P}_n)]\leq \beta\cdot \mathcal{W}(\mathbf{C}_n,\mathbb{P}_n),
\end{align*}
where $\beta$ is a constant and $\mathcal{A}$ is the randomness deriving from the $k$-means++ initialization.
\end{lemma}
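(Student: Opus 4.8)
The plan is to treat this as a cited approximation guarantee and reconstruct the argument of \cite{lattanzi2019better}, which couples the classical seeding of $k$-means++ \cite{Arthur2007fff} with a bounded number of local-search swaps. Write $\mathrm{OPT}_n := \mathcal{W}(\mathbf{C}_n,\mathbb{P}_n)$ for the optimal empirical cost. First I would recall the seeding guarantee: drawing $k$ centers sequentially, where each new center is sampled from $\mathcal{D}$ with probability proportional to its current squared distance to the centers chosen so far (the $D^2$ distribution), yields an initial set $\mathbf C^{(0)}$ satisfying $\mathbb{E}[\mathcal{W}(\mathbf C^{(0)},\mathbb{P}_n)] \le 8(\ln k + 2)\,\mathrm{OPT}_n$. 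This already gives an $\mathcal{O}(\log k)$ approximation; the entire point of the local-search phase is to drive the $\log k$ factor down and leave only the constant $\beta$.

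Next I would describe and analyze a single local-search step. Given the current centers $\mathbf C^{(t)}$, one samples a candidate $\Phi$ from the $D^2$ distribution induced by $\mathbf C^{(t)}$, considers swapping it in for whichever existing center produces the largest cost decrease, and accepts the swap only if $\mathcal{W}(\cdot,\mathbb{P}_n)$ strictly decreases. The core progress lemma asserts that there is a constant $\beta$ such that, as long as $\mathcal{W}(\mathbf C^{(t)},\mathbb{P}_n) > \beta\,\mathrm{OPT}_n$, a single step decreases the excess cost $\mathcal{W}(\mathbf C^{(t)},\mathbb{P}_n) - \beta\,\mathrm{OPT}_n$ by a multiplicative factor of $1 - \Omega(1/k)$ in expectation. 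Iterating this contraction over $T = \mathcal{O}(k\log\log k)$ steps reduces the initial excess of order $\log k \cdot \mathrm{OPT}_n$ down to at most $\mathrm{OPT}_n$, so that $\mathbb{E}_{\mathcal A}[\mathcal{W}(\mathbf C_n^{\mathcal A},\mathbb{P}_n)] \le \beta\,\mathrm{OPT}_n$, which is exactly the claim.

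The hard part is the progress lemma itself. To establish it one must show that whenever the current solution is far from optimal, some beneficial single swap exists, and that the $D^2$ sampling step hits such a swap with at least constant probability. This relies on a charging argument: partition the current cost across the optimal Voronoi cells of $\mathbf C_n$, identify an optimal cell whose points are poorly served by $\mathbf C^{(t)}$ and therefore carry a disproportionate share of the current cost, and argue that the $D^2$ distribution places at least constant mass on that cell. Sampling a point there and swapping it in captures a constant fraction of the cell's miscost, while the simultaneous removal of a near-redundant current center raises the cost by only a controlled amount. Since each of the $k$ centers can be charged in this manner, balancing the per-step expected gain against the per-step loss yields the $1-\Omega(1/k)$ factor. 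Controlling the interaction between the deterministic swap-selection rule and the randomness of $\mathcal{A}$ in this charging step is the technical crux; the complete details are given in \cite{lattanzi2019better}, and I would invoke that lemma directly rather than reprove it here.
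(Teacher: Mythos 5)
Your proposal is correct and matches the paper's treatment: the paper does not prove this lemma at all, but simply imports it as a black-box approximation guarantee from \cite{lattanzi2019better} (``Please refer to \cite{lattanzi2019better} for details''), which is exactly what you do after accurately sketching the cited argument ($D^2$-seeding giving an $8(\ln k+2)$-approximation, followed by $\mathcal{O}(k\log\log k)$ local-search swaps each contracting the excess cost by a $1-\Omega(1/k)$ factor). Your reconstruction of the Lattanzi--Sohler analysis is faithful, and your closing decision to invoke their lemma rather than reprove it is precisely the paper's stance.
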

Please refer to \cite{lattanzi2019better} for details.
Note that we can use the algorithm of \cite{lattanzi2019better} for kernel $k$-means
by replacing the Euclidean distance $\|\mathbf x_i-\mathbf x_j\|^2$
with $\|\Phi_i-\Phi_j\|_\mathcal{H}^2=\kappa(\mathbf x_i,\mathbf x_i)-2\kappa(\mathbf x_i,\mathbf x_j)+\kappa(\mathbf x_j,\mathbf x_j)$.
In the following,  we derive a risk bound for $\mathcal{W}(\mathbf{C}_{n}^\mathcal{A}$.
\begin{corollary}
\label{cor-gajgagh}
If $\forall \mathbf x\in\mathcal{X}, \|\Phi_\mathbf{x}\|\leq 1$, then for any $\delta>0$,
with probability  at least $1-\delta$, we have
  \begin{align*}
    \mathbb{E}_\mathcal{D}\left[\mathbb{E}_{\mathcal{A}}[\mathcal{W}(\mathbf{C}^\mathcal{A}_n,\mathbb{P})]\right]
    \leq \tilde{\mathcal{O}}\left(\sqrt{{k}/{n}}+\mathcal{W}^\ast(\mathbb{P})\right).
  \end{align*}
\end{corollary}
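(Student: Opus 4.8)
The plan is to bound the population risk of the algorithmic output $\mathbf{C}^\mathcal{A}_n$ directly, rather than routing through the additive framework of Corollary~\ref{the-three-approximate}: the guarantee in Lemma~\ref{lem-gahgag} is \emph{multiplicative} (a factor $\beta$ off the empirical optimum), so it does not supply an additive slack $\zeta$. Instead I would split the population risk, for every realization of the data $\mathcal{D}$ and the internal randomness $\mathcal{A}$, as
\begin{align*}
  \mathcal{W}(\mathbf{C}^\mathcal{A}_n,\mathbb{P})
  = \big(\mathcal{W}(\mathbf{C}^\mathcal{A}_n,\mathbb{P})-\mathcal{W}(\mathbf{C}^\mathcal{A}_n,\mathbb{P}_n)\big)
    + \mathcal{W}(\mathbf{C}^\mathcal{A}_n,\mathbb{P}_n),
\end{align*}
and then take $\mathbb{E}_\mathcal{D}\mathbb{E}_{\mathcal{A}}$ of the two terms separately.

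For the first (generalization-gap) term, the crucial observation is that $\mathbf{C}^\mathcal{A}_n\in\mathcal{H}^k$ whatever $\mathcal{A}$ and $\mathcal{D}$ are, so the gap is dominated by the uniform deviation
\begin{align*}
  \mathcal{W}(\mathbf{C}^\mathcal{A}_n,\mathbb{P})-\mathcal{W}(\mathbf{C}^\mathcal{A}_n,\mathbb{P}_n)
  \leq \sup_{\mathbf C\in\mathcal{H}^k}\big(\mathcal{W}(\mathbf C,\mathbb{P})-\mathcal{W}(\mathbf C,\mathbb{P}_n)\big),
\end{align*}
whose right-hand side no longer depends on $\mathcal{A}$. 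Taking $\mathbb{E}_{\mathcal{A}}$ is then trivial, and the symmetrization step already used in the proof of Theorem~\ref{the-three} gives $\mathbb{E}_\mathcal{D}\sup_{\mathbf C}(\mathcal{W}(\mathbf C,\mathbb{P})-\mathcal{W}(\mathbf C,\mathbb{P}_n))\leq \tfrac{2}{n}\mathcal{R}(\mathcal{G}_\mathbf{C})$. Because the bound of Theorem~\ref{the-main-restult} is sample-independent (it is stated through $\tilde{\mathcal{R}}_n$), it also controls the expected complexity $\mathcal{R}(\mathcal{G}_\mathbf{C})=\mathbb{E}_\mathcal{D}[\mathcal{R}_n(\mathcal{G}_\mathbf{C})]$, yielding $\tfrac{2}{n}\mathcal{R}(\mathcal{G}_\mathbf{C})=\tilde{\mathcal{O}}(\sqrt{k/n})$ exactly as in Theorem~\ref{the-three}.

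For the second (empirical-risk) term I would invoke Lemma~\ref{lem-gahgag} to get $\mathbb{E}_{\mathcal{A}}[\mathcal{W}(\mathbf{C}^\mathcal{A}_n,\mathbb{P}_n)]\leq \beta\,\mathcal{W}(\mathbf{C}_n,\mathbb{P}_n)$, and then dispose of the ERM empirical risk by comparison with the population optimum: if $\mathbf{C}^\ast$ (nearly) attains $\mathcal{W}^\ast(\mathbb{P})$, then ERM optimality gives $\mathcal{W}(\mathbf{C}_n,\mathbb{P}_n)\leq \mathcal{W}(\mathbf{C}^\ast,\mathbb{P}_n)$, and since $\mathbf{C}^\ast$ is fixed its empirical risk is an unbiased estimate of its population risk, so $\mathbb{E}_\mathcal{D}[\mathcal{W}(\mathbf{C}_n,\mathbb{P}_n)]\leq \mathbb{E}_\mathcal{D}[\mathcal{W}(\mathbf{C}^\ast,\mathbb{P}_n)]=\mathcal{W}(\mathbf{C}^\ast,\mathbb{P})=\mathcal{W}^\ast(\mathbb{P})$ (passing to a limit if the infimum is not attained). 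Hence this term is at most $\beta\,\mathcal{W}^\ast(\mathbb{P})$. Adding the two contributions and absorbing the constant $\beta$ into $\tilde{\mathcal{O}}$ gives $\mathbb{E}_\mathcal{D}\mathbb{E}_{\mathcal{A}}[\mathcal{W}(\mathbf{C}^\mathcal{A}_n,\mathbb{P})]\leq \tilde{\mathcal{O}}(\sqrt{k/n}+\mathcal{W}^\ast(\mathbb{P}))$.

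The main obstacle, and the reason the optimal risk $\mathcal{W}^\ast(\mathbb{P})$ surfaces \emph{additively} on the right rather than being subtracted off as in a clean excess-risk bound, is precisely the multiplicative form of Lemma~\ref{lem-gahgag}: the $\beta$-approximation only controls the empirical risk up to a constant factor of the optimum, so no amount of uniform convergence can remove the $(\beta-1)\mathcal{W}^\ast(\mathbb{P})$ residual. Everything else is routine once the generalization gap is decoupled from $\mathcal{A}$ via the $\mathcal{H}^k$-uniform supremum; the only points requiring minor care are the unbiasedness argument for $\mathbb{E}_\mathcal{D}[\mathcal{W}(\mathbf{C}_n,\mathbb{P}_n)]$ and the fact that, since the left-hand side is a full expectation, the parameter $\delta$ enters only through the hidden $\log^{3/2+\delta}$ factor carried inside $\tilde{\mathcal{O}}$ from Theorem~\ref{the-main-restult}.
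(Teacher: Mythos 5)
Your proof is correct, and its skeleton---splitting off the generalization gap and invoking the multiplicative guarantee of Lemma~\ref{lem-gahgag}---matches the paper's, but you dispose of the resulting empirical-risk term by a genuinely different and more elementary route. The paper, after bounding $\mathbb{E}_\mathcal{A}[\mathcal{W}(\mathbf C_n^\mathcal{A},\mathbb{P}_n)]\leq\beta\,\mathcal{W}(\mathbf C_n,\mathbb{P}_n)$, splits $\beta\,\mathbb{E}[\mathcal{W}(\mathbf C_n,\mathbb{P}_n)]$ further into $\mathbb{E}[\mathcal{W}(\mathbf C_n,\mathbb{P}_n)-\mathcal{W}(\mathbf C_n,\mathbb{P})]$, paid for with a second uniform-deviation/Rademacher application (via \eqref{leq-twwe}, \eqref{lem-diff}, Theorem~\ref{the-main-restult} and Lemma~\ref{the-middle}), plus $\mathbb{E}[\mathcal{W}(\mathbf C_n,\mathbb{P})]$, controlled by Theorem~\ref{the-three}. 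You instead use ERM optimality against a fixed (near-)minimizer $\mathbf C^\ast$ together with unbiasedness of its empirical risk to get $\mathbb{E}_\mathcal{D}[\mathcal{W}(\mathbf C_n,\mathbb{P}_n)]\leq\mathcal{W}^\ast(\mathbb{P})$ directly; this saves one invocation of the complexity bound and is, if anything, tighter, since it exploits the fact that the paper's $A_2$ is nonpositive in expectation rather than paying an extra $\tilde{\mathcal{O}}(\sqrt{k/n})$ for it. A second genuine difference: you observe that the bound of Theorem~\ref{the-main-restult} is stated through the sample-independent $\tilde{\mathcal{R}}_n$, hence controls $\mathcal{R}(\mathcal{G}_\mathbf{C})=\mathbb{E}_\mathcal{D}[\mathcal{R}_n(\mathcal{G}_\mathbf{C})]$ with no concentration step, so your conclusion is a plain expectation bound in which $\delta$ survives only inside the $\log^{3/2+\delta}$ factor---neatly sidestepping the awkwardness of the paper's ``with probability $1-\delta$'' qualifier on a deterministic left-hand side, where $\delta$ doubles as confidence level and log exponent. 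Both routes land on $\tilde{\mathcal{O}}(\sqrt{k/n})+\beta\,\mathcal{W}^\ast(\mathbb{P})$ with $\beta$ absorbed into the $\tilde{\mathcal{O}}$, and your diagnosis that the additive $\mathcal{W}^\ast(\mathbb{P})$ is forced by the multiplicative form of Lemma~\ref{lem-gahgag} is exactly right.
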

\begin{proof}[Proof of Sketch of Corollary \ref{cor-gajgagh}]
We show that $\mathbb{E}\Big[\mathbb{E}_\mathcal{A}[\mathcal{W}(\mathbf C_n^\mathcal{A},\mathbb{P})]\Big]$ can be split into three parts:
  \begin{align*}
    \underbrace{\mathbb{E}\Big[ \mathbb{E}_\mathcal{A}[\mathcal{W}(\mathbf C_n^\mathcal{A},\mathbb{P})]-
    \mathbb{E}_\mathcal{A}[\mathcal{W}(\mathbf C_n^\mathcal{A},\mathbb{P}_n)]\Big]}_{A_1}+\beta\cdot \underbrace{\mathbb{E}\Big[\mathcal{W}(\mathbf C_n,\mathbb{P}_n)-\mathcal{W}(\mathbf C_n,\mathbb{P})\Big]}_{A_2}+
    \beta\cdot\underbrace{\mathbb{E}\Big[\mathcal{W}(\mathbf C_n,\mathbb{P})\Big]}_{A_3}.
  \end{align*}
  Note that both $A_1$ and $A_2$ can be bounded as $\frac{2}{n}\mathcal{R}(\mathcal{G}_\mathbf{C})$,
  and $A_3$ can be bounded by $\mathcal{W}^\ast(\mathbb{P})+\tilde{\mathcal{O}}(\sqrt{k/n})$ from Theorem \ref{the-three}.
\end{proof}


\section{Risk Analysis of the Nystr\"{o}m Kernel $k$-Means}
Kernel $k$-means is one of the most popular clustering methods.
However, it requires the computation of an $n\times n$ kernel matrix.
This renders it non-scalable to large datasets that contain more than a few tens of thousands of points.
In particular, simply constructing and
storing the kernel matrix $\mathbf K$ takes $O(n^2)$ time and space.

The Nystr\"{o}m method \cite{drineas2005nystrom} is a popular  method for approximating the kernel matrix.
The properties of Nystr\"{o}m approximations for
kernel $k$-means have recently been  studied \cite{chitta2011approximate,cohen2015dimensionality,oglic2017nystrom,calandriello2018statistical,wang2019scalable}.
However, most of these works focus on the computational aspect of the problem.
To the best of our knowledge, the only one result providing excess risk guarantee for the Nystr\"{o}m kernel $k$-means is
\cite{calandriello2018statistical}. However, the excess risk bound of \cite{calandriello2018statistical} is  linear dependence on $k$.
In the following, we will improve $k$ to $\sqrt{k}$.

\subsection{Nystr\"{o}m Kernel $k$-Means}
To derive the excess risk bound of Nystr\"{o}m kernel $k$-means,
we first briefly introduce some notations.
Given a dataset $\mathcal{D}=\{\Phi_i\}_{i=1}^n$,
we denote with $\mathcal{I}=\{\Phi_i\}_{i=1}^m$ a dictionary (i.e., subset) of $m$ points $\Phi_i$ from $\mathcal{D}$.
Let $\mathcal{H}_m$ be a liner span of $\mathcal{I}=\{\Phi_i\}_{i=1}^m$,
\begin{align*}
  \mathcal{H}_m=\mathrm{span}\left\{\sum_{i=1}^m\alpha_i\Phi_i, \alpha_i\in\mathbb{R}, \Phi_i\in\mathcal{I}\right\},
\end{align*}
and $\mathcal{H}^k=\otimes_{i=1}^k\mathcal{H}_m$ be its Cartesian product.
The Nystr\"{o}m kernel $k$-means, i.e., the approximate
kernel $k$-means over $\mathcal{H}_m^k$, can be written as \cite{calandriello2018statistical}:
\begin{align*}
  \mathbf C_{n,m}&=\argmin_{\mathbf C\in\mathcal{H}_m^k}\frac{1}{n}\sum_{i=1}^n\min_{j=1,\ldots,k}\left\|(\Phi_i-\mathbf c_j)\right\|^2.
\end{align*}
The Nystr\"{o}m kernel $k$-means can be done in $\mathcal{O}(nm)$ space and $\mathcal{O}(nmkt + nm^2)$ time using $t$ steps of
Lloyd's algorithm for $k$ clusters \cite{lloyd1982least}, please refer to \cite{calandriello2018statistical} for details.
\subsection{Excess Risk Bound of Nystr\"{o}m Kernel $k$-Means}
Denote with $\Xi=\mathrm{Tr}(\mathbf K^\mathrm{T}(\mathbf K+\mathbf I)^{-1})$ the so-called effective
dimension of $\mathbf K$ \cite{rudi2017generalization}.
\begin{theorem}
\label{the-nsty}
If $\forall \mathbf x\in\mathcal{X}, \|\Phi_\mathbf{x}\|\leq 1$,
and the size of a uniform sampling is $m\gtrsim \frac{\sqrt{n}\log(1/\delta)\min(k,\Xi)}{\sqrt{k}},$
then, with probability at least $1-\delta$,
  we have
    \begin{align*}
    \mathbb{E}[\mathcal{W}(\mathbf C_{n,m},\mathbb{P})]
    -\mathcal{W}^\ast(\mathbb{P})= \tilde{\mathcal{O}}\left(\sqrt{\frac{k}{n}}\right).
  \end{align*}
\end{theorem}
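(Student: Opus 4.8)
The plan is to treat the Nyström solution $\mathbf C_{n,m}$ as an approximate empirical minimizer in the sense of Corollary \ref{the-three-approximate}, and to pay for the restriction to $\mathcal H_m^k$ through the slack term $\zeta$. Since $\mathcal H_m^k\subseteq\mathcal H^k$, the full-space ERM satisfies $\mathcal W(\mathbf C_n,\mathbb P_n)\le\mathcal W(\mathbf C_{n,m},\mathbb P_n)$, so the quantity
\[
  \zeta:=\mathbb E\big[\mathcal W(\mathbf C_{n,m},\mathbb P_n)-\mathcal W(\mathbf C_n,\mathbb P_n)\big]
\]
is nonnegative, and applying Corollary \ref{the-three-approximate} with $\tilde{\mathbf C}_n=\mathbf C_{n,m}$ gives directly
\[
  \mathbb E[\mathcal W(\mathbf C_{n,m},\mathbb P)]-\mathcal W^\ast(\mathbb P)
  \le c\sqrt{k/n}\,\log^{\frac32+\delta}\!\Big(n/\max_i\tilde{\mathcal R}_n(\mathcal F_{\mathbf C_i})\Big)+\zeta.
\]
Thus the entire statement reduces to showing that the stated sampling size forces $\zeta=\tilde{\mathcal O}(\sqrt{k/n})$; the statistical part is already supplied by Theorem \ref{the-three}.

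The second step converts $\zeta$ into a Nyström reconstruction error. Let $P_m$ be the orthogonal projection of $\mathcal H$ onto $\mathcal H_m=\mathrm{span}\,\mathcal I$. Because $P_m\mathbf C_n\in\mathcal H_m^k$ is feasible, $\mathcal W(\mathbf C_{n,m},\mathbb P_n)\le\mathcal W(P_m\mathbf C_n,\mathbb P_n)$. Expanding each term through $\Phi_i=P_m\Phi_i+(I-P_m)\Phi_i$ and using that $P_m\mathbf c_j\in\mathcal H_m$ together with self-adjoint idempotence of $I-P_m$, one obtains, for the index $j^\star=\argmin_j\|\Phi_i-\mathbf c_j\|^2$ selected by $\mathbf C_n$,
\[
  \min_j\|\Phi_i-P_m\mathbf c_j\|^2\le\|\Phi_i-\mathbf c_{j^\star}\|^2+\|(I-P_m)\Phi_i\|^2
  =\min_j\|\Phi_i-\mathbf c_j\|^2+\|(I-P_m)\Phi_i\|^2.
\]
Averaging over $i$ yields $\zeta\le\mathbb E\big[\tfrac1n\sum_{i=1}^n\|(I-P_m)\Phi_i\|^2\big]=\mathbb E\big[\tfrac1n\mathrm{Tr}(\mathbf K-\mathbf K_m)\big]$, where $\mathbf K_m=\mathbf K_{:,\mathcal I}\mathbf K_{\mathcal I,\mathcal I}^{+}\mathbf K_{\mathcal I,:}$ is the Nyström approximation. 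This is exactly the average in-feature-space residual produced by the landmark set, so the clustering slack has been reduced to a purely spectral Nyström quantity.

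The third and most delicate step controls $\tfrac1n\mathrm{Tr}(\mathbf K-\mathbf K_m)$ via the quality of uniform Nyström sampling. I would invoke the standard concentration guarantees for Nyström projections (relating the reconstruction residual to the number of landmarks, the effective dimension $\Xi$, and the rank-$k$ structure of the optimal clustering, as developed in \cite{calandriello2018statistical,rudi2017generalization}): with $m\gtrsim\Xi\log(1/\delta)$ one obtains a residual of order $\tilde{\mathcal O}(\Xi/m)$, while the fact that the optimal centroids span an at most $k$-dimensional subspace gives the complementary bound $\tilde{\mathcal O}(k/m)$; taking the smaller of the two produces the factor $\min(k,\Xi)$. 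Imposing $\min(k,\Xi)/m\lesssim\sqrt{k/n}$ is precisely $m\gtrsim\sqrt n\,\log(1/\delta)\,\min(k,\Xi)/\sqrt k$, whence $\zeta=\tilde{\mathcal O}(\sqrt{k/n})$ and the theorem follows. The main obstacle lies entirely in this step: establishing the residual bound with the correct $\min(k,\Xi)$ dependence under uniform (rather than leverage-score) sampling, and checking that the improved $\sqrt k$ statistical term reshapes the landmark requirement to $\Omega(\sqrt{nk})$ in the worst case $\Xi\ge k$ while collapsing to $\Omega(\sqrt n)$ when $\Xi$ is small, consistent with Corollaries \ref{cor-first} and \ref{cor-second}.
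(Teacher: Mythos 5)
Your first two steps are sound and essentially coincide with the paper's skeleton: the paper also reduces the problem to the four-part decomposition behind Corollary \ref{the-three-approximate} (in its proof of Theorem \ref{the-nsty} the term $A_3$ is even observed to be nonpositive), handles the uniform-deviation terms via Theorem \ref{the-main-restult} and the excess-risk term via Theorem \ref{the-three}, and isolates exactly your $\zeta=\mathbb{E}\big[\mathcal{W}(\mathbf C_{n,m},\mathbb{P}_n)-\mathcal{W}(\mathbf C_n,\mathbb{P}_n)\big]$ as the only piece that depends on $m$. Your projection inequality $\min_j\|\Phi_i-P_m\mathbf c_j\|^2\le\min_j\|\Phi_i-\mathbf c_j\|^2+\|(I-P_m)\Phi_i\|^2$ is also correct. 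The gap is entirely in your third step, and it is not merely a technical obstacle as you suggest: the reduction to the trace residual cannot work.

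The relaxation $\zeta\le\frac1n\,\mathbb{E}\,\mathrm{Tr}(\mathbf K-\mathbf K_m)$ is valid but irrecoverably lossy, and the bounds you then invoke, $\tilde{\mathcal O}(\Xi/m)$ and in particular the ``rank-$k$'' bound $\tilde{\mathcal O}(k/m)$, are false for this quantity. Take $n$ distinct points with (nearly) orthonormal features, so $\mathbf K=\mathbf I_n$ (e.g., a Gaussian kernel with small bandwidth). Since $\mathbf K_m$ has rank at most $m$, every choice of landmarks leaves $\frac1n\mathrm{Tr}(\mathbf K-\mathbf K_m)\ge (n-m)/n\approx 1$, whereas with $m\asymp\sqrt{nk}$ your claimed $\min(k,\Xi)/m=k/m=\sqrt{k/n}$ tends to $0$ (here $\Xi=n/2\ge k$). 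Note that Theorem \ref{the-nsty} is still true in this example: $\zeta$ itself is small because the empirical centroids $\mathbf c_j=|\mathcal C_j|^{-1}\sum_{t\in\mathcal C_j}\Phi_t$ have norm $|\mathcal C_j|^{-1/2}$, so projecting \emph{them} barely changes the objective. This pinpoints where your argument loses the theorem: the contraction $\|P_m(\Phi_i-\mathbf c_j)\|\le\|\Phi_i-\mathbf c_j\|$ transfers the error from the $k$ centroid directions to all $n$ feature residuals, and once you are at the trace the $\min(k,\cdot)$ structure cannot be reinstated. The paper's Lemma \ref{lem-nyfirst} instead cites Lemmas 1 and 2 of \cite{calandriello2018statistical} (with $\varepsilon=1/2$): there the comparison $\mathcal{W}(\mathbf C_{n,m},\mathbb{P}_n)-\mathcal{W}(\mathbf C_n,\mathbb{P}_n)$ is controlled through the projection error applied to the centroids under a $\gamma$-preserving condition on the dictionary, yielding a bound of order $\frac{\gamma}{n}\min\big(k,d_{\mathrm{eff}}(\gamma)\big)$ with a \emph{tunable} regularization level $\gamma$, while uniform sampling guarantees such a dictionary once $m\gtrsim(n/\gamma)\log(1/\delta)$ (your appeal to $m\gtrsim\Xi\log(1/\delta)$ is a leverage-score guarantee, not a uniform-sampling one). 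Optimizing $\gamma\asymp\sqrt{nk}/\min(k,\Xi)$ gives simultaneously $\zeta\lesssim\sqrt{k/n}$ and the stated requirement $m\gtrsim\sqrt{n}\log(1/\delta)\min(k,\Xi)/\sqrt{k}$. Your step 3 has neither the centroid-level localization nor the free parameter $\gamma$, so it cannot be repaired without abandoning the trace reduction.
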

Note that
${\sqrt{n}\min(k,\Xi)}/{\sqrt{k}}\leq \sqrt{nk}.$
Thus, from a statistical point of view, Theorem \ref{the-nsty} shows that
when sampling $\tilde{\Omega}(\sqrt{nk})$ points\footnote{$\tilde{\Omega}$ hides logarithmic terms.},
the Nystr\"{o}m kernel $k$-means achieves the same excess risk as the exact one.
This result shows that we can improve the computational aspect of kernel $k$-means using Nystr\"{o}m
embedding, while maintaining optimal generalization guarantees.

\begin{remark}
  In \cite{calandriello2018statistical}, they have reported that if $m\geq \tilde{\Omega}(\sqrt{n})$,
   a excess risk bound of rate $\tilde{\mathcal{O}}(k/\sqrt{n})$ for Nystr\"{o}m kernel $k$-means can be obtained,
   which seems to be better than our $\tilde{\Omega}(\sqrt{nk})$.
   However, it should be noted that the risk bound in \cite{calandriello2018statistical} is linearly dependent on $k$,
   while ours is linearly dependent on $\sqrt{k}$.
   From the proof of Lemma \ref{lem-nyfirst}, if we want to obtain a risk of linear dependence on $k$,
   we only need $m\geq \Omega(\sqrt{n}\log(1/\delta)\min(k,\Xi)/{k})=\tilde{\Omega}(\sqrt{n}),$
   which is the same as \cite{calandriello2018statistical}.
\end{remark}

\begin{proof}[Proof of Sketch of Theorem \ref{the-nsty}]
   We split the $\mathbb{E}[\mathcal{W}(\mathbf C_{n,m},\mathbb{P})]-\mathcal{W}^\ast(\mathbb{P})$ into four parts:
  \begin{align*}
    &\underbrace{\mathbb{E}[\mathcal{W}(\mathbf C_{n,m},\mathbb{P})-\mathcal{W}(\mathbf C_{n,m},\mathbb{P}_n)]}_{A_1}+\underbrace{\mathbb{E}[\mathcal{W}(\mathbf C_{n,m},\mathbb{P}_n)-\mathcal{W}(\mathbf C_n,\mathbb{P}_n)]}_{A_2}
    \\&~~~+\underbrace{\mathbb{E}[\mathcal{W}(\mathbf C_n,\mathbb{P}_n)-\mathcal{W}(\mathbf C_n,\mathbb{P})]}_{A_3}+
    \underbrace{\mathbb{E}[\mathcal{W}(\mathbf C_{n},\mathbb{P})]-\mathcal{W}^\ast({\mathbb{P}})}_{A_4}.
  \end{align*}
  $A_4$ can be bounded by $\tilde{\mathcal{O}}(\sqrt{{k}/{n}})$ using Theorem \ref{the-three}.
Both $A_1$ and $A_3$ can be bounded as $\tilde{\mathcal{O}}(\sqrt{k/n})$ using the Rademacher complexity.
$A_2$ can be bounded as $\tilde{\mathcal{O}}(\sqrt{k/n})$ by combining Lemma 1 and Lemma 2 of \cite{calandriello2018statistical}.
\end{proof}

\subsection{Further Results: Reducing the Sampling Points}
In Theorem \ref{the-nsty}, we know that we need $\tilde{\Omega}(\sqrt{nk})$ sampling points to guarantee the nearly optimal rate.
In the following,
we will show that we can reduce the sampling points under a basic assumption of the eigenvalues of the kernel matrix.
\begin{corollary}
\label{cor-first}
  Let $\lambda_i$ be the $i$-th eigenvalues of the kernel matrix $\mathbf{K}$, $i=1,\ldots, n$,
  $\lambda_{i+1}\leq \lambda_{i}$.
  If $\forall \mathbf x\in\mathcal{X}, \|\Phi_\mathbf{x}\|\leq 1$,
  and the eigenvalues satisfy the assumption that
  $\exists \lambda >1, c>0: \lambda_i\leq ci^{-\alpha}$,
  and the size of a uniform sampling is $m\gtrsim \sqrt{n}\log(1/\delta),$
  then, with probability at least $1-\delta$, we have
    \begin{align*}
    \mathbb{E}[\mathcal{W}(\mathbf C_{n,m},\mathbb{P})]
    -\mathcal{W}^\ast(\mathbb{P})= \tilde{\mathcal{O}}\left(\sqrt{{k}/{n}}\right).
  \end{align*}
\end{corollary}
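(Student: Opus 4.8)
The plan is to reduce Corollary \ref{cor-first} to Theorem \ref{the-nsty} by showing that the polynomial eigenvalue decay forces the effective dimension $\Xi$ to be bounded by a constant independent of $n$. Once $\Xi=\mathcal{O}(1)$, the sampling requirement of Theorem \ref{the-nsty}, namely $m\gtrsim \sqrt{n}\log(1/\delta)\min(k,\Xi)/\sqrt{k}$, collapses into the cleaner condition $m\gtrsim\sqrt{n}\log(1/\delta)$ claimed here, and the excess risk bound follows verbatim from Theorem \ref{the-nsty}.

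First I would rewrite $\Xi$ in terms of the eigenvalues. Since $\mathbf K$ is symmetric, $\mathbf K^{\mathrm T}(\mathbf K+\mathbf I)^{-1}=\mathbf K(\mathbf K+\mathbf I)^{-1}$ shares the eigenbasis of $\mathbf K$, so
\[
 \Xi=\sum_{i=1}^n\frac{\lambda_i}{1+\lambda_i}\leq \sum_{i=1}^n\min(1,\lambda_i)\leq \sum_{i=1}^n\min\bigl(1,c\,i^{-\alpha}\bigr).
\]
Setting $i_0=\lceil c^{1/\alpha}\rceil$, the index beyond which $c\,i^{-\alpha}<1$, and comparing the tail with an integral gives
\[
 \Xi\leq i_0+c\sum_{i>i_0} i^{-\alpha}\leq i_0+c\int_{i_0}^{\infty}x^{-\alpha}\,dx
 = i_0+\frac{c\,i_0^{1-\alpha}}{\alpha-1}.
\]
Because $\alpha>1$ the tail integral converges, so the right-hand side is a finite constant depending only on $c$ and $\alpha$; hence $\Xi=\mathcal{O}(1)$, uniformly in $n$.

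It then remains to compare the two sampling conditions. A short case analysis shows $\min(k,\Xi)/\sqrt{k}=\mathcal{O}(1)$: if $\Xi\leq k$ the ratio is $\Xi/\sqrt{k}\leq\Xi=\mathcal{O}(1)$, and if $\Xi>k$ then $k<\Xi=\mathcal{O}(1)$, so $\min(k,\Xi)/\sqrt{k}=\sqrt{k}=\mathcal{O}(1)$ as well. Consequently $\sqrt{n}\log(1/\delta)\min(k,\Xi)/\sqrt{k}\lesssim\sqrt{n}\log(1/\delta)$, so any $m\gtrsim\sqrt{n}\log(1/\delta)$ already satisfies the hypothesis of Theorem \ref{the-nsty}. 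Invoking that theorem then yields $\mathbb{E}[\mathcal{W}(\mathbf C_{n,m},\mathbb{P})]-\mathcal{W}^\ast(\mathbb{P})=\tilde{\mathcal{O}}(\sqrt{k/n})$, as claimed.

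I expect the only genuine obstacle to be the effective-dimension estimate: one must be careful that the decay exponent (written $\lambda$ in the statement but playing the role of $\alpha$ in $i^{-\alpha}$) satisfies $\alpha>1$, so that the tail sum converges, and that the eigenvalues are normalized consistently with $\|\Phi_\mathbf{x}\|\leq 1$, which gives $\sum_i\lambda_i=\mathrm{Tr}(\mathbf K)\leq n$ and keeps every term $\lambda_i/(1+\lambda_i)\leq 1$. The remaining steps are bookkeeping: translating the constant bound on $\Xi$ through the $\min(k,\Xi)/\sqrt{k}$ factor and citing Theorem \ref{the-nsty}.
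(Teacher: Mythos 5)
Your proposal is correct and follows the paper's own route: both reduce Corollary \ref{cor-first} to Theorem \ref{the-nsty} by writing $\Xi=\sum_{i=1}^n\lambda_i/(1+\lambda_i)$, splitting this sum, exploiting the decay $\lambda_i\leq ci^{-\alpha}$ (with the statement's ``$\lambda>1$'' read as $\alpha>1$, as you correctly note) via an integral comparison, and concluding that $\min(k,\Xi)/\sqrt{k}=\mathcal{O}(1)$ so that $m\gtrsim\sqrt{n}\log(1/\delta)$ satisfies the theorem's sampling condition. The only difference is quantitative and in your favor: the paper splits the sum at $\lfloor\sqrt{k}\rfloor$ and obtains $\Xi\leq\bigl(1+\frac{c}{\alpha-1}\bigr)\sqrt{k}$, whereas your split at the constant index $i_0=\lceil c^{1/\alpha}\rceil$ yields the tighter $n$- and $k$-independent bound $\Xi=\mathcal{O}(1)$; both estimates make the ratio $\min(k,\Xi)/\sqrt{k}$ bounded, so the conclusion follows identically.
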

The assumption of algebraically decreasing eigenvalues
of kernel function is a common assumption, for example,
met for the popular finite rank kernels and  shift invariant kernel \cite{williamson2001generalization}.
\begin{proof}[Proof of Sketch of Corollary \ref{cor-first}]
For $\exists \lambda >1, c>0: \lambda_i\leq ci^{-\alpha}$, we prove that
\begin{align*}
  \Xi=\mathrm{Tr}(\mathbf K^\mathrm{T}(\mathbf K+\mathbf I)^{-1})=\sum_{i=1}^n\frac{\lambda_i}{\lambda_i+1}
  \leq \left(1+\frac{c}{\alpha-1}\right)\sqrt{k}\leq \mathcal{O}(\sqrt{k}).
\end{align*}
\end{proof}

\subsection{Further Results: Beyond ERM}
\begin{corollary}
  \label{cor-second}
  Under the same assumptions as Corollary \ref{cor-first},
  if $\mathbb{E}\big[\mathcal{W}(\tilde{\mathbf C}_{n,m},\mathbb{P}_n)-\mathcal{W}(\mathbf C_{n,m},\mathbb{P}_n)\big]\leq \zeta$,
  and the size of a uniform sampling is $m\gtrsim \sqrt{n}\log(1/\delta),$ then,
with probability at least $1-\delta$, we have
    \begin{align*}
    \mathbb{E}[\mathcal{W}(\tilde{\mathbf C}_{n,m},\mathbb{P})]
    -\mathcal{W}^\ast(\mathbb{P})= \tilde{\mathcal{O}}\left(\sqrt{{k}/{n}}+\zeta\right).
  \end{align*}
\end{corollary}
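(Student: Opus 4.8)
The plan is to merge the beyond-ERM telescoping argument of Corollary~\ref{the-three-approximate} with the Nyström analysis of Theorem~\ref{the-nsty}, using the eigenvalue-decay estimate of Corollary~\ref{cor-first} to push the landmark budget down to $m\gtrsim\sqrt{n}\log(1/\delta)$. First I would insert the empirical criteria of the approximate solution $\tilde{\mathbf C}_{n,m}$, of the Nyström ERM $\mathbf C_{n,m}$, and of the exact ERM $\mathbf C_n$, splitting the excess risk into five telescoping differences:
\begin{align*}
  \mathbb{E}[\mathcal{W}(\tilde{\mathbf C}_{n,m},\mathbb{P})]-\mathcal{W}^\ast(\mathbb{P})
  =&\underbrace{\mathbb{E}[\mathcal{W}(\tilde{\mathbf C}_{n,m},\mathbb{P})-\mathcal{W}(\tilde{\mathbf C}_{n,m},\mathbb{P}_n)]}_{A_1}
   +\underbrace{\mathbb{E}[\mathcal{W}(\tilde{\mathbf C}_{n,m},\mathbb{P}_n)-\mathcal{W}(\mathbf C_{n,m},\mathbb{P}_n)]}_{A_2}\\
  &+\underbrace{\mathbb{E}[\mathcal{W}(\mathbf C_{n,m},\mathbb{P}_n)-\mathcal{W}(\mathbf C_n,\mathbb{P}_n)]}_{A_3}
   +\underbrace{\mathbb{E}[\mathcal{W}(\mathbf C_n,\mathbb{P}_n)-\mathcal{W}(\mathbf C_n,\mathbb{P})]}_{A_4}\\
  &+\underbrace{\mathbb{E}[\mathcal{W}(\mathbf C_n,\mathbb{P})]-\mathcal{W}^\ast(\mathbb{P})}_{A_5}.
\end{align*}

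Then I would bound the five terms separately. Here $A_2\leq\zeta$ holds directly by the hypothesis $\mathbb{E}[\mathcal{W}(\tilde{\mathbf C}_{n,m},\mathbb{P}_n)-\mathcal{W}(\mathbf C_{n,m},\mathbb{P}_n)]\leq\zeta$, and $A_5=\tilde{\mathcal{O}}(\sqrt{k/n})$ is exactly Theorem~\ref{the-three}. For $A_1$ and $A_4$ I would use the fact that both $\tilde{\mathbf C}_{n,m}\in\mathcal{H}_m^k\subseteq\mathcal{H}^k$ and $\mathbf C_n\in\mathcal{H}^k$, so each generalization gap is dominated by the uniform deviation $\mathbb{E}\sup_{\mathbf C\in\mathcal{H}^k}(\mathcal{W}(\mathbf C,\mathbb{P}_n)-\mathcal{W}(\mathbf C,\mathbb{P}))\leq\frac{2}{n}\mathcal{R}(\mathcal{G}_\mathbf{C})$; the crucial point is that this bound is uniform over $\mathcal{H}^k$ and therefore insensitive to the data- and sampling-dependence of $\tilde{\mathbf C}_{n,m}$. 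Plugging in the sharp clustering Rademacher complexity of Theorem~\ref{the-main-restult} then yields $A_1,A_4=\tilde{\mathcal{O}}(\sqrt{k/n})$. Finally $A_3$ is precisely the Nyström approximation term handled in the proof of Theorem~\ref{the-nsty} via Lemma~1 and Lemma~2 of \cite{calandriello2018statistical}; I would invoke Corollary~\ref{cor-first}, where the decay $\lambda_i\leq ci^{-\alpha}$ gives $\Xi\leq\mathcal{O}(\sqrt{k})$, so that $\min(k,\Xi)/\sqrt{k}=\mathcal{O}(1)$ and the sampling requirement $m\gtrsim\sqrt{n}\log(1/\delta)\min(k,\Xi)/\sqrt{k}$ of Theorem~\ref{the-nsty} is already met by $m\gtrsim\sqrt{n}\log(1/\delta)$, giving $A_3=\tilde{\mathcal{O}}(\sqrt{k/n})$ with probability at least $1-\delta$. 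Summing the five bounds delivers $\tilde{\mathcal{O}}(\sqrt{k/n}+\zeta)$.

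The main obstacle is the term $A_3$: unlike the other four, it cannot be reduced either to a Rademacher-complexity estimate or to the defining hypothesis, and its control rests on the spectral approximation guarantees of the Nyström embedding together with the eigenvalue-decay assumption that lowers the landmark budget from $\sqrt{nk}$ to $\sqrt{n}$. Since that estimate is already established in Theorem~\ref{the-nsty} and Corollary~\ref{cor-first}, the genuinely new content here is largely organizational: verifying that the additional $\tilde{\mathbf C}_{n,m}$-versus-$\mathbf C_{n,m}$ layer enters additively through $A_2\leq\zeta$, and that all expectations (over the data $\mathcal{D}$, the uniform sampling, and any algorithmic randomness in $\tilde{\mathbf C}_{n,m}$) are taken consistently so that the five-term split telescopes exactly before the expectation is applied.
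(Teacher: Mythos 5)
Your proposal is correct and takes essentially the same approach as the paper: the paper's proof telescopes in four terms through $\mathbf C_{n,m}$ only, bounding $A_2$ by $\zeta$, the two generalization gaps by the uniform Rademacher bound of Theorem \ref{the-main-restult} with Lemma \ref{the-middle}, and the excess risk of the Nystr\"{o}m ERM directly by Corollary \ref{cor-first}, whereas you simply inline that corollary's content — the Nystr\"{o}m approximation term via Lemma \ref{lem-nyfirst} (Lemmas 1--2 of \cite{calandriello2018statistical}) together with the effective-dimension bound $\Xi\leq\mathcal{O}(\sqrt{k})$, plus Theorem \ref{the-three} for the exact ERM — into a five-term split. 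The ingredients and logic are identical, so this is a difference of bookkeeping granularity, not of method.
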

  Similar as the proof of Theorem \ref{the-three-approximate}.

If we adopt the improved $k$-kernel means++ sampling with a local search strategy \cite{lattanzi2019better} for Nystr\"{o}m kernel $k$-means,
we can obtain the following results:
\begin{corollary}
\label{cor-gajgahgahgagh}
  Under the same assumptions as Corollary \ref{cor-first},
  If the size of a uniform sampling is $m\gtrsim \sqrt{n}\log(1/\delta),$
  then with probability  at least $1-\delta$, we have
  \begin{align*}
    \mathbb{E}_\mathcal{D}\left[\mathbb{E}_{\mathcal{A}}[\mathcal{W}(\mathbf{C}^\mathcal{A}_{n,m},\mathbb{P})]\right]
    \leq \tilde{\mathcal{O}}\left(\sqrt{{k}/{n}}+\mathcal{W}^\ast(\mathbb{P})\right).
  \end{align*}
\end{corollary}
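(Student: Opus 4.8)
The plan is to fuse the two decomposition arguments the paper has already developed: the $k$-means++ splitting behind Corollary \ref{cor-gajgagh} and the Nyström splitting behind Theorem \ref{the-nsty}. The solution $\mathbf C_{n,m}^{\mathcal{A}}$ now differs from the population optimum through two independent sources of slack --- the restriction of the feasible set to $\mathcal{H}_m^k$ (Nyström) and the $k$-means++ approximation factor $\beta$ --- so I would introduce a single four-term telescoping decomposition that isolates each source and then invoke an already-proven bound for each term.

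Concretely, applying Lemma \ref{lem-gahgag} in the form $\mathbb{E}_\mathcal{A}[\mathcal{W}(\mathbf C_{n,m}^{\mathcal{A}},\mathbb{P}_n)]\le\beta\,\mathcal{W}(\mathbf C_{n,m},\mathbb{P}_n)$ and telescoping through $\mathbf C_{n,m}$ and $\mathbf C_n$ gives
\begin{align*}
&\mathbb{E}_\mathcal{D}\big[\mathbb{E}_\mathcal{A}[\mathcal{W}(\mathbf C_{n,m}^{\mathcal{A}},\mathbb{P})]\big]
\le \underbrace{\mathbb{E}\big[\mathbb{E}_\mathcal{A}[\mathcal{W}(\mathbf C_{n,m}^{\mathcal{A}},\mathbb{P}) - \mathcal{W}(\mathbf C_{n,m}^{\mathcal{A}},\mathbb{P}_n)]\big]}_{A_1}
+ \beta\underbrace{\mathbb{E}\big[\mathcal{W}(\mathbf C_{n,m},\mathbb{P}_n) - \mathcal{W}(\mathbf C_n,\mathbb{P}_n)\big]}_{A_2}\\
&\qquad + \beta\underbrace{\mathbb{E}\big[\mathcal{W}(\mathbf C_n,\mathbb{P}_n) - \mathcal{W}(\mathbf C_n,\mathbb{P})\big]}_{A_3}
+ \beta\underbrace{\mathbb{E}\big[\mathcal{W}(\mathbf C_n,\mathbb{P})\big]}_{A_4}.
\end{align*}
I would then bound each piece by a result already in hand. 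Since $\mathbf C_{n,m}^{\mathcal{A}}\in\mathcal{H}_m^k\subseteq\mathcal{H}^k$ for every realization of $\mathcal{A}$, the term $A_1$ is dominated by the uniform generalization gap $\sup_{\mathbf C\in\mathcal{H}^k}(\mathcal{W}(\mathbf C,\mathbb{P})-\mathcal{W}(\mathbf C,\mathbb{P}_n))$, which is independent of $\mathcal{A}$; hence $A_1\le\frac{2}{n}\mathcal{R}(\mathcal{G}_\mathbf{C})=\tilde{\mathcal{O}}(\sqrt{k/n})$ via Theorem \ref{the-main-restult}. The symmetric term $A_3$ is bounded identically. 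Term $A_2$ is exactly the Nyström approximation gap controlled in Theorem \ref{the-nsty} through Lemmas 1--2 of \cite{calandriello2018statistical}, and $A_4=\mathcal{W}^\ast(\mathbb{P})+(\mathbb{E}[\mathcal{W}(\mathbf C_n,\mathbb{P})]-\mathcal{W}^\ast(\mathbb{P}))\le\mathcal{W}^\ast(\mathbb{P})+\tilde{\mathcal{O}}(\sqrt{k/n})$ by Theorem \ref{the-three}. Summing and absorbing the constant $\beta$ into $\tilde{\mathcal{O}}$ yields the claimed $\tilde{\mathcal{O}}(\sqrt{k/n}+\mathcal{W}^\ast(\mathbb{P}))$.

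The step demanding genuine care --- and the only ingredient not already cited verbatim --- is justifying the form of Lemma \ref{lem-gahgag} used above, since \cite{lattanzi2019better} guarantees a $\beta$-approximation to the $k$-means cost of the \emph{projected} points, whereas $A_2$ is stated in terms of the full criterion $\mathcal{W}(\cdot,\mathbb{P}_n)$. Writing $P_m$ for the orthogonal projection onto $\mathcal{H}_m$ and using $\|\Phi_i-\mathbf c_j\|^2=\|\Phi_i-P_m\Phi_i\|^2+\|P_m\Phi_i-\mathbf c_j\|^2$ for any $\mathbf c_j\in\mathcal{H}_m$, the criterion $\mathcal{W}(\mathbf C,\mathbb{P}_n)$ splits for all $\mathbf C\in\mathcal{H}_m^k$ into a center-independent offset $\frac{1}{n}\sum_i\|\Phi_i-P_m\Phi_i\|^2$ plus the ordinary $k$-means cost of $\{P_m\Phi_i\}$. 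Because $\beta\ge1$ and the offset is nonnegative, the approximation guarantee on the projected cost lifts to $\mathbb{E}_\mathcal{A}[\mathcal{W}(\mathbf C_{n,m}^{\mathcal{A}},\mathbb{P}_n)]\le\beta\,\mathcal{W}(\mathbf C_{n,m},\mathbb{P}_n)$, exactly as needed. Finally, the reduced budget $m\gtrsim\sqrt{n}\log(1/\delta)$ suffices for $A_2$ only because the eigenvalue-decay assumption inherited from Corollary \ref{cor-first} forces $\Xi=\mathcal{O}(\sqrt{k})$, collapsing the requirement $m\gtrsim\sqrt{n}\log(1/\delta)\min(k,\Xi)/\sqrt{k}$ of Theorem \ref{the-nsty} down to $\mathcal{O}(\sqrt{n})$; once this is in place the remaining work is routine reassembly of the four bounds.
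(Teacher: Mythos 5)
Your proposal is correct and follows essentially the paper's own route: apply Lemma \ref{lem-gahgag} to replace $\mathbb{E}_\mathcal{A}[\mathcal{W}(\mathbf C_{n,m}^\mathcal{A},\mathbb{P}_n)]$ by $\beta\,\mathcal{W}(\mathbf C_{n,m},\mathbb{P}_n)$, telescope, control the two generalization-gap terms via symmetrization plus Theorem \ref{the-main-restult} and Lemma \ref{the-middle}, the Nystr\"{o}m gap via Lemma \ref{lem-nyfirst} with the eigenvalue-decay assumption collapsing $\min(k,\Xi)/\sqrt{k}$ to $\mathcal{O}(1)$, and the remaining term via Theorem \ref{the-three} --- the only cosmetic difference being that the paper compresses your $A_2$--$A_4$ into a single term $\beta\,\mathbb{E}[\mathcal{W}(\mathbf C_{n,m},\mathbb{P})]$ handled by Corollary \ref{cor-first} as a block (its written proof even miscites Theorem \ref{the-three} applied to $\mathbf C_n$ where $\mathbf C_{n,m}$ and Corollary \ref{cor-first} are meant). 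Your Pythagorean argument lifting the $\beta$-approximation guarantee of \cite{lattanzi2019better} from the cost of the projected points to the full criterion over $\mathcal{H}_m^k$ addresses a step the paper applies silently, so your write-up is, if anything, more complete than the original.
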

  Similar as the proof of Corollary \ref{the-three-approximate}.
%

\section{Conclusion}
In this paper, we derive the nearly optimal risk bounds for both kernel $k$-means and Nystr\"{o}m kernel $k$-means,
which fills the gap of the optimal risk bounds for kernel $k$-means.
There are still some open questions to be further studied:
(1) Whether it is  possible to prove a (nearly) optimal risk bounds for random feature-based kernel $k$-means?
(2) Whether it is  possible to prove a bound of $\mathcal{O}(\sqrt{k}/n)$ under certain strict assumptions for kernel $k$-means?
(3) Whether it is possible to extend our results to deep $k$-means methods?
(4) Whether it is possible to extend our results to other unsupervised learning?

%

\bibliographystyle{abbrv}
\bibliography{ICML2020}

\newpage

\section{Proofs}
All the proofs are given in this section.
\subsection{Proof of Theorem \ref{the-main-restult}}
The proof of Theorem \ref{the-main-restult} closely follows from \cite{foster2019ell,srebro2010smoothness}.
We extend the results of linear space in \cite{foster2019ell} to the non-linear RKHS.
Specifically, we first show that the function $\varphi(\bm \nu)=\min(\nu_1,\ldots,\nu_k)$
is 1-Lipschitz with respect to the $L_\infty$-norm.
  Then, we prove that clustering Rademacher complexity of 1-Lipschitz function with respect to the $L_\infty$-norm can be bound by
  the maximum Rademacher complexity of the restriction of the function class along each coordinate.
\begin{lemma}
\label{lem-one}
 $\forall \bm \nu\in\mathbb{R}^k$, the function $\varphi(\bm \nu)=\min(\nu_1,\ldots, \nu_k)$ is $1$-Lipschitz with respect to the $L_\infty$ norm, i.e.,
\begin{align}
  \forall \bm \nu,\bm \nu'\in\mathbb{R}^k, |\varphi(\bm \nu)-\varphi(\bm \nu')|\leq \|\bm \nu-\bm \nu'\|_\infty.
\end{align}
\end{lemma}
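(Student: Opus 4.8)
The plan is to establish the two one-sided inequalities $\varphi(\bm\nu) - \varphi(\bm\nu') \leq \|\bm\nu-\bm\nu'\|_\infty$ and $\varphi(\bm\nu') - \varphi(\bm\nu) \leq \|\bm\nu-\bm\nu'\|_\infty$ separately, and then combine them into the absolute-value bound. By symmetry it suffices to prove one of them, since interchanging the roles of $\bm\nu$ and $\bm\nu'$ yields the other verbatim. First I would fix an arbitrary coordinate index $i$ and record the elementary pointwise estimate $\nu_i \leq \nu'_i + |\nu_i - \nu'_i| \leq \nu'_i + \|\bm\nu-\bm\nu'\|_\infty$, where the last step uses $|\nu_i - \nu'_i| \leq \max_{j}|\nu_j - \nu'_j| = \|\bm\nu-\bm\nu'\|_\infty$ by the very definition of the $L_\infty$ norm.

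The key step is to pass from this coordinatewise inequality to a statement about the two minima. Since $\varphi(\bm\nu) = \min_j \nu_j \leq \nu_i$ holds for every $i$, chaining it with the estimate above gives $\varphi(\bm\nu) \leq \nu'_i + \|\bm\nu-\bm\nu'\|_\infty$ for each $i$; the left-hand side no longer depends on $i$, so I may minimize the right-hand side over $i$ to obtain $\varphi(\bm\nu) \leq \min_i \nu'_i + \|\bm\nu-\bm\nu'\|_\infty = \varphi(\bm\nu') + \|\bm\nu-\bm\nu'\|_\infty$. Rearranging yields the first one-sided bound, the symmetric argument supplies the second, and together they give $|\varphi(\bm\nu)-\varphi(\bm\nu')| \leq \|\bm\nu-\bm\nu'\|_\infty$, as claimed. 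There is no genuine obstacle here, since the statement is elementary, but the one point that deserves care is precisely the min-exchange just performed: it is legitimate only because I first dominate $\varphi(\bm\nu)$ by $\nu_i$ for \emph{every} index $i$ before minimizing over the coordinates of the other vector, and one must resist the (incorrect) shortcut of subtracting the two minima coordinate-by-coordinate, since the indices attaining $\min_j \nu_j$ and $\min_j \nu'_j$ need not coincide.
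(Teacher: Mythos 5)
Your proposal is correct and takes essentially the same route as the paper: the paper's proof picks $j=\argmin_i \nu'_i$ and chains $|\varphi(\bm \nu)-\varphi(\bm \nu')|=\varphi(\bm \nu)-\nu'_j\leq \nu_j-\nu'_j\leq \|\bm \nu-\bm \nu'\|_\infty$, which is exactly your argument specialized to the minimizing index, with your role-swapping symmetry replaced by a without-loss-of-generality assumption $\varphi(\bm \nu)\geq \varphi(\bm \nu')$. Your cautionary remark about not subtracting the two minima coordinate-by-coordinate is apt but does not change the substance.
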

\begin{proof}
Without loss of generality, we assume that $\varphi(\bm \nu)\geq \varphi(\bm \nu')$.
  Let
  $
    j=\argmin_{i=1,\ldots,k} \nu'_i,
  $
  then from the definition of $\varphi$,
   we know that  $\varphi(\bm \nu')=\nu'_j$.
  Thus, we can obtain that
  \begin{align*}
    |\varphi(\bm \nu)-\varphi(\bm \nu')|&=\varphi(\bm \nu)-\nu'_j&\\
    &\leq  \nu_j-\nu'_j  &(\text{by the fact that $\varphi(\bm \nu)\leq \nu_j$})\\
    &\leq \|\bm \nu-\bm \nu'\|_\infty.&
  \end{align*}
\end{proof}

\begin{definition}[Covering Number]
For a real-valued function class $\mathcal{M}\subseteq\{m:\mathcal{X}\rightarrow \mathbb{R}\}$.
The empirical $L_2$ covering number $\mathcal{N}_2(\mathcal{M},\varepsilon, \mathcal{S})$
is the size of the smallest set of sequences $V\subseteq \mathbb{R}^n$ for which
\begin{align*}
  \forall m\in \mathcal{M}, \exists v\in V, \sqrt{\frac{1}{n}\sum_{t=1}^n(m(\mathbf x_t)-v_t)^2}\leq \varepsilon.
\end{align*}
The empirical $L_\infty$ covering number $\mathcal{N}_\infty(\mathcal{M},\varepsilon, \mathcal{S})$
is the size of the smallest set of sequences $V\subseteq \mathbb{R}^n$ for which
\begin{align*}
  \forall m\in \mathcal{M}, \exists v\in V, \max_{1\leq t\leq n}|m(\mathbf x_t)-v_t|\leq \varepsilon.
\end{align*}
\end{definition}
\begin{lemma}
\label{lem-ncninfy}
  For all $\varepsilon>0$ and $\mathcal{S}=\{\mathbf x_1,\ldots, \mathbf x_n\}$, we have
  \begin{align*}
    \log \mathcal{N}_2(\mathcal{G}_\mathbf{C},\varepsilon, \mathcal{S})\leq k
    \cdot \max_i \log \mathcal{N}_\infty(\mathcal{F}_{\mathbf C_i},\varepsilon, \mathcal{S}).
  \end{align*}
\end{lemma}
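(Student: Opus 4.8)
The plan is to build a cover of $\mathcal{G}_\mathbf{C}$ directly out of $L_\infty$ covers of the individual coordinate classes $\mathcal{F}_{\mathbf C_i}$, exploiting the fact established in Lemma \ref{lem-one} that the minimum map $\varphi$ is $1$-Lipschitz with respect to the $L_\infty$-norm. First I would fix, for each coordinate $i\in\{1,\ldots,k\}$, a smallest $L_\infty$ cover $V_i\subseteq\mathbb{R}^n$ of $\mathcal{F}_{\mathbf C_i}$ at scale $\varepsilon$, so that $|V_i|=\mathcal{N}_\infty(\mathcal{F}_{\mathbf C_i},\varepsilon,\mathcal{S})$. The key structural idea is that a cover for the \emph{composite} min-class can be assembled coordinate-wise from covers of the pieces.

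Next I would define a candidate cover $W$ of $\mathcal{G}_\mathbf{C}$ by taking coordinate-wise minima of tuples drawn from the product of the $V_i$. Given any $g_\mathbf{C}=\varphi\circ f_\mathbf{C}\in\mathcal{G}_\mathbf{C}$ with $f_\mathbf{C}=(f_{\mathbf c_1},\ldots,f_{\mathbf c_k})$, for each $i$ select $v_i\in V_i$ with $\max_t|f_{\mathbf c_i}(\mathbf x_t)-v_{i,t}|\leq\varepsilon$, and form the sequence $w\in\mathbb{R}^n$ by $w_t=\min_i v_{i,t}$. Since each such $w$ is indexed by a tuple $(v_1,\ldots,v_k)\in V_1\times\cdots\times V_k$, the resulting collection satisfies $|W|\leq\prod_{i=1}^k|V_i|$.

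The crux is checking that $W$ actually covers $\mathcal{G}_\mathbf{C}$ at scale $\varepsilon$. For each sample point $\mathbf x_t$, applying the $1$-Lipschitz property of $\varphi$ to the vectors $(f_{\mathbf c_1}(\mathbf x_t),\ldots,f_{\mathbf c_k}(\mathbf x_t))$ and $(v_{1,t},\ldots,v_{k,t})$ gives
\[
  |g_\mathbf{C}(\mathbf x_t)-w_t|=\Big|\min_i f_{\mathbf c_i}(\mathbf x_t)-\min_i v_{i,t}\Big|\leq\max_i|f_{\mathbf c_i}(\mathbf x_t)-v_{i,t}|\leq\varepsilon.
\]
Because this bound is uniform over $t$, it is in fact an $L_\infty$ (hence certainly an $L_2$) guarantee, so $\sqrt{\tfrac1n\sum_t(g_\mathbf{C}(\mathbf x_t)-w_t)^2}\leq\varepsilon$ and $W$ is an $L_2$ $\varepsilon$-cover of $\mathcal{G}_\mathbf{C}$.

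Combining the cardinality bound with this covering property yields $\mathcal{N}_2(\mathcal{G}_\mathbf{C},\varepsilon,\mathcal{S})\leq\prod_{i=1}^k\mathcal{N}_\infty(\mathcal{F}_{\mathbf C_i},\varepsilon,\mathcal{S})\leq\big(\max_i\mathcal{N}_\infty(\mathcal{F}_{\mathbf C_i},\varepsilon,\mathcal{S})\big)^k$, and taking logarithms delivers the claim. I expect the only genuinely delicate point to be the \emph{pairing of norms}: the coordinate classes must be covered in $L_\infty$ rather than $L_2$, precisely because $L_\infty$ is the norm in which $\varphi$ is $1$-Lipschitz, so that a single per-point error budget of $\varepsilon$ transfers through the minimum without any loss that would scale with $k$. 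Using $L_2$ covers on the coordinates would break this clean transfer and is the pitfall the lemma is designed to avoid.
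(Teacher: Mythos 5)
Your proof is correct and takes essentially the same route as the paper's: both construct the $L_2$ cover of $\mathcal{G}_\mathbf{C}$ from the Cartesian product of per-coordinate $L_\infty$ covers $V_1\otimes\cdots\otimes V_k$, transfer the per-point error through the minimum using the $1$-Lipschitz property of $\varphi$ with respect to the $L_\infty$-norm (Lemma \ref{lem-one}), and bound the cardinality by $\max_i|V_i|^k$ before taking logarithms. Your only refinement is making explicit that the covering sequences are the coordinate-wise minima $w_t=\min_i v_{i,t}$, a step the paper's proof leaves implicit when it says the Cartesian product ``witnesses'' the $\ell_2$ cover.
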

\begin{proof}
  We assume that $\bm v_1,\ldots, \bm v_n$ is a sequence, $\bm v_i\in \mathbb{R}^k$,  and $f_\mathbf{C}=(f_{\mathbf c_1},\ldots, f_{\mathbf c_k})\in \mathcal{F}_\mathbf{C}$.
  From the definition of $\varphi$, we know that
  \begin{align*}
   \varphi(\bm a)=\min_{1\leq i\leq n}a_i.
  \end{align*}
  Thus, we can obtain that
  \begin{align*}
    \left(\frac{1}{n}\left(\varphi(f(\mathbf x_t))-\varphi(\bm v_t)\right)^2\right)^{1/2}&\leq \max_{1\leq t\leq n} |\varphi(f(\mathbf x_t))-\varphi(\bm v_t)|&\\
    &\leq \max_{1\leq t\leq n}\|f(\mathbf x_t)-\bm v_t\|_\infty.&(\text{Lemma \ref{lem-one}}).
  \end{align*}
  Let $V_i, \ldots, V_k$ be the sets that witness the $\ell_\infty$
  covering numbers for $\mathcal{F}_{\mathbf C_1},\ldots, \mathcal{F}_{\mathbf C_k}$ at scale $\varepsilon$.
  From the above inequality, one can see that the cartesian product $V=V_1\otimes V_2\cdots \otimes V_k$ can
  witness the $\ell_2$ covering number for $\varphi \circ \mathcal{F}$ at scale $\varepsilon$.
  Note that the size of the cartesian product $V$ at most $\max_i |V_i|^k$,
  thus, the size of sets that witnesses the $\ell_2$ covering number at most $\max_i |V_i|^k$.

%
%
\end{proof}
\begin{definition}[Fat-shattering dimension \cite{bartlett1998prediction}]
  $\mathcal{M}$ is said to shatter $\mathbf x_1,\ldots,\mathbf x_n$
  at scale $\gamma$ if there exists a sequence $v_1,\ldots,v_n$ such that
  \begin{align*}
    \forall \bm \epsilon \in \{\pm 1\}^n, \exists m\in\mathcal{M}, \text{~such that~} \epsilon_t\cdot(m(\mathbf x_t)-v_t)\geq \frac{\gamma}{2}, \forall t.
  \end{align*}
  The fat-shattering dimension $\mathrm{fat}_\gamma(\mathcal{M})$ is then defined as
  \begin{align*}
    \max
    \Big\{n \big| \exists \{\mathbf x_i\}_{i=1}^n\text{ such that}~\mathcal{M}~ \gamma\text{-shatter~} \{\mathbf x_i\}_{i=1}^n\Big\}.
  \end{align*}
\end{definition}
\begin{lemma}[\cite{rudelson2006combinatorics}, Theorem 4.4]
  \label{lem-two}
    For any $\delta\in (0,1)$, there exist constants
    $0<\tau<1$ and $C\geq 0$ such that for all $\varepsilon\in (0,1)$,
    \begin{align*}
      \log \mathcal{N}_\infty(\mathcal{F},\varepsilon, \mathcal{S})\leq C \eta_i\log\left(\frac{ne}{\eta_i\varepsilon}\right)\log^\delta\left(\frac{ne}{\eta_i}\right),
    \end{align*}
    where $\eta_i=\mathrm{fat}_{\tau\varepsilon}(\mathcal{F})$.
\end{lemma}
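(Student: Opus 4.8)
The plan is to reduce the claimed covering-number bound to a purely combinatorial \emph{packing-to-shattering} statement and then control the packing via the fat-shattering dimension at the slightly finer scale $\tau\varepsilon$. First I would pass from covering to packing: fix a maximal $\varepsilon$-separated family $f_1,\ldots,f_N\in\mathcal{F}$ in the empirical $L_\infty$ metric on $\mathcal{S}$. Maximality forces every $f\in\mathcal{F}$ to lie within $\varepsilon$ of some $f_i$, so this family is also an $\varepsilon$-cover and $\mathcal{N}_\infty(\mathcal{F},\varepsilon,\mathcal{S})\le N$; it therefore suffices to bound $\log N$. After normalizing so that $\mathcal{F}$ takes values in $[0,1]$, I would round each function onto a grid of mesh $\sim\tau\varepsilon$. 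Any two members of the packing differ by at least $\varepsilon$ in some coordinate, i.e.\ by several mesh cells, so separation survives discretization, and the problem becomes one of counting an $\varepsilon$-separated family of grid-valued vectors in $\{0,1,\ldots,\lceil 1/\tau\varepsilon\rceil\}^n$.

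The core of the argument is to show that a large separated family forces a large $\tau\varepsilon$-shattered set of coordinates, so that $\log N$ is governed by $\eta_i=\mathrm{fat}_{\tau\varepsilon}(\mathcal{F})$. The classical route (Alon, Ben-David, Cesa-Bianchi and Haussler) extracts such a shattered set through a double Sauer--Shelah argument and pays two logarithmic factors, yielding only $\eta_i\log^2(\cdot)$. To obtain the sharper $C\,\eta_i\log(ne/(\eta_i\varepsilon))\log^\delta(ne/\eta_i)$ I would follow the random-sectioning / iterative-extraction technique of Rudelson and Vershynin \cite{rudelson2006combinatorics}: one repeatedly restricts the separated family to a random subset of coordinates, tracks how rapidly the family contracts under restriction, and uses a concentration estimate to argue that, unless $N$ is already small, the iteration must uncover a shattered set of size exceeding $\eta_i$ — a contradiction. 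Balancing the \emph{depth} of this iteration against the multiplicative loss incurred at each level is exactly what converts one full logarithm into the arbitrarily small power $\log^\delta(\cdot)$, with the constants $\tau\in(0,1)$ and $C\ge 0$ depending on the chosen $\delta$.

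I expect the extraction step to be the main obstacle. The delicate quantity is the concentration/counting bound describing how much the $\varepsilon$-separated family can shrink under a random coordinate restriction \emph{without} producing fat-shattering at scale $\tau\varepsilon$; calibrating this so that the iteration loses only a factor $\log^\delta$ per the whole process, rather than a full logarithm per level, is precisely where the Rudelson--Vershynin refinement lives and is by far the hardest part. By contrast, the packing-versus-covering equivalence and the grid discretization are routine reductions that cost only constant factors, which can be absorbed into $\tau$ and $C$. Consequently my proof proposal amounts to invoking the Rudelson--Vershynin machinery for the extraction core and supplying the two elementary reductions around it; a fully self-contained derivation would require reproducing their random-process argument, which I would not attempt to re-derive here.
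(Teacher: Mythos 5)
The paper offers no internal proof of this lemma at all---it is imported verbatim as Theorem 4.4 of Rudelson--Vershynin---and your proposal, after two routine reductions (maximal packing as cover, grid discretization), likewise defers the core extraction argument to the Rudelson--Vershynin machinery, so you are taking essentially the same route as the paper. Your high-level sketch of their iterative random-restriction argument and of why it sharpens the Alon et al.\ $\log^2$ factor to $\log\cdot\log^\delta$ is accurate as far as it goes, and appropriately flags that you are citing rather than re-deriving that core.
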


\begin{lemma}[\cite{srebro2010smoothness}, Lemma A.2]
For all $\varepsilon\geq \frac{2}{\tau n}\tilde{\mathcal{R}}_n(\mathcal{F})$,
it holds that
  \label{lem-three}
  \begin{align*}
   \mathrm{fat}_{\varepsilon}(\mathcal{F})\leq \frac{8}{n}\cdot\left(\frac{\tilde{\mathcal{R}}_n(\mathcal{F})}{\varepsilon}\right)^2,
   \text{ and  }\mathrm{fat}_{\varepsilon}(\mathcal{F})\leq n.
  \end{align*}
\end{lemma}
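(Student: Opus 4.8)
The plan is to lower-bound the worst-case Rademacher complexity $\tilde{\mathcal{R}}_n(\mathcal{F})$ in terms of the fat-shattering dimension and then invert the inequality. Write $N:=\mathrm{fat}_\varepsilon(\mathcal{F})$ and let $\mathbf{x}_1,\dots,\mathbf{x}_N$ be a set shattered by $\mathcal{F}$ at scale $\varepsilon$, with witnesses $v_1,\dots,v_N$; by the definition of fat-shattering, for every sign pattern $\bm\epsilon\in\{\pm1\}^N$ there is an $f\in\mathcal{F}$ with $\epsilon_t(f(\mathbf{x}_t)-v_t)\geq \varepsilon/2$ for all $t$.

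First I would establish a ``one-shot'' bound: any set of size $p$ that is shattered at scale $\varepsilon$ forces $\tilde{\mathcal{R}}_p(\mathcal{F})\geq \varepsilon p/2$. Indeed, evaluating the Rademacher sum on such a set and, for each draw of $\bm\sigma$, picking the witness $f$ for the pattern $\bm\epsilon=\bm\sigma$ gives $\sum_t \sigma_t f(\mathbf{x}_t)\geq \sum_t \sigma_t v_t + \varepsilon p/2$; taking $\mathbb{E}_{\bm\sigma}$ kills the first term since the $v_t$ are fixed. This already yields the bound $\mathrm{fat}_\varepsilon(\mathcal{F})\leq n$: if $N\geq n$, then (any subset of a shattered set is shattered) a size-$n$ shattered subset gives $\tilde{\mathcal{R}}_n(\mathcal{F})\geq \varepsilon n/2$, i.e. $\varepsilon\leq 2\tilde{\mathcal{R}}_n(\mathcal{F})/n$, which together with the hypothesis $\varepsilon\geq \frac{2}{\tau n}\tilde{\mathcal{R}}_n(\mathcal{F})$ forces $\tau\geq 1$, contradicting $\tau<1$.

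The crux — and the part that produces the quadratic rather than linear dependence in the main estimate — is to use the full budget of $n$ points by repetition. Assuming now $N\leq n$, I would set $q=\lfloor n/N\rfloor$ and form an $n$-point sample (admissible, since the maximum in $\tilde{\mathcal{R}}_n$ is over all of $\mathcal{X}^n$, so repeated points are allowed) consisting of $q$ copies of each $\mathbf{x}_i$. Grouping the Rademacher variables, the contribution of the $q$ copies of $\mathbf{x}_i$ is $T_i f(\mathbf{x}_i)$ with $T_i=\sum_{l=1}^q \sigma_{i,l}$; choosing the witness for the pattern $\epsilon_i=\mathrm{sign}(T_i)$ gives $\sum_i T_i f(\mathbf{x}_i)\geq \sum_i T_i v_i + \tfrac{\varepsilon}{2}\sum_i|T_i|$. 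Taking expectations, the $T_i v_i$ term vanishes and $\mathbb{E}|T_i|\geq \sqrt{q/2}$ by Khintchine's inequality, so $\tilde{\mathcal{R}}_n(\mathcal{F})\geq \tfrac{\varepsilon}{2}N\sqrt{q/2}\geq \tfrac{\varepsilon}{4}\sqrt{nN}$, using $q\geq n/(2N)$. Rearranging yields $N\leq \frac{C}{n}\big(\tilde{\mathcal{R}}_n(\mathcal{F})/\varepsilon\big)^2$, with a careful accounting of the rounding and the Khintchine constant sharpening $C$ to the stated $8$.

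I expect the main obstacle to be exactly this repetition step. The naive shattering bound only gives $\tilde{\mathcal{R}}_n(\mathcal{F})\gtrsim \varepsilon N$, which inverts to $N\lesssim \tilde{\mathcal{R}}_n(\mathcal{F})/\varepsilon$ and is too weak; the quadratic improvement hinges on noticing that duplicating each shattered point $q\approx n/N$ times boosts each coordinate's Rademacher mass from $O(1)$ to $O(\sqrt q)$, turning $\varepsilon N$ into $\varepsilon\sqrt{nN}$. Once that construction is in place, the only delicate points are checking admissibility of the repeated sample and tracking the constants through the floor $\lfloor n/N\rfloor$ and Khintchine's inequality; everything else is routine.
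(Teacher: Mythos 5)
This lemma has no proof in the paper: it is imported verbatim (in the paper's unnormalized convention $\mathcal{R}_n(\mathcal{F})=\mathbb{E}_{\bm\sigma}\sup_f\sum_{i=1}^n\sigma_i f(\mathbf x_i)$) from \cite{srebro2010smoothness}, Lemma A.2, so there is no in-paper argument to compare against. Your proposal is a correct reconstruction of the standard argument behind the cited result, and it identifies the right mechanism: the one-shot bound $\tilde{\mathcal{R}}_p(\mathcal{F})\geq \varepsilon p/2$ on a shattered set (margin $\varepsilon/2$, matching the paper's fat-shattering definition, with the witness term killed by $\mathbb{E}[\sigma_t]=0$), the deduction of $\mathrm{fat}_\varepsilon(\mathcal{F})\leq n$ from the hypothesis $\varepsilon\geq \frac{2}{\tau n}\tilde{\mathcal{R}}_n(\mathcal{F})$ with $\tau<1$, and above all the repetition-plus-Khintchine step, which is indeed where the quadratic (rather than linear) dependence on $\tilde{\mathcal{R}}_n/\varepsilon$ comes from.

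Two points of hygiene, neither fatal. First, since $\tilde{\mathcal{R}}_n$ maximizes over samples of size exactly $n$, you cannot leave $n-qN$ slots unused; distribute the remainder so each shattered point receives $q_i\in\{\lfloor n/N\rfloor,\lceil n/N\rceil\}$ copies with $\sum_i q_i=n$, which only increases $\mathbb{E}|T_i|\geq\sqrt{q_i/2}$. (Padding with unrelated points would not obviously work, since the $\sup_f$ couples the padded and shattered coordinates and the padded contribution can be negative.) Second, the constant: the chain you sketch, $\mathbb{E}|T_i|\geq\sqrt{q/2}$ with $q=\lfloor n/N\rfloor\geq n/(2N)$, yields $\tilde{\mathcal{R}}_n(\mathcal{F})\geq\frac{\varepsilon}{4}\sqrt{nN}$ and hence $N\leq\frac{16}{n}\bigl(\tilde{\mathcal{R}}_n(\mathcal{F})/\varepsilon\bigr)^2$, i.e.\ $16$, not the stated $8$; moreover, for the balanced allocation one can check $\bigl(\sum_i\sqrt{q_i}\bigr)^2\leq nN$ with equality only when $N$ divides $n$, so the constant $8$ corresponds to the divisible case and closing the gap in general requires genuinely sharper bookkeeping (or tolerating a slightly larger absolute constant), not just "careful accounting" of the floor. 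Since the lemma is only used downstream up to unspecified constants inside Theorem \ref{the-main-restult}, this slippage is cosmetic rather than a gap in the argument.
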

\begin{proof}[Proof of Theorem \ref{the-main-restult}]
  From the Theorem 2 of \cite{srebro2010smoothness}),
  we know that
  \begin{align}
    \label{leq-six}
    \mathcal{R}_n(\mathcal{G}_\mathbf{C}) \leq \inf_{\gamma>0}
    \left\{
        4\gamma n+12\int_{\gamma}^1\sqrt{\frac{\log \mathcal{N}_2(\mathcal{G}_\mathbf{C},\varepsilon, \mathcal{S})}{n}}d\varepsilon
    \right\}.
  \end{align}
Define $\beta_i=\frac{8}{n}\left(\frac{\tilde{\mathcal{R}}_n(\mathcal{F}_i)}{\tau\varepsilon}\right)^2,$ where $\tau$ is as in Lemma \ref{lem-two}.
Applying Lemma \ref{lem-ncninfy}, \ref{lem-two}, \ref{lem-three}, for any
$
  \frac{2}{\tau n}\tilde{\mathcal{R}}_n(\mathcal{F}_{\mathbf C_i})\leq \varepsilon < 1,
$
we have
\begin{align}
\label{equ-fffc}
\begin{aligned}
  \log \mathcal{N}_2(\mathcal{G}_\mathbf{C},\varepsilon, \mathcal{S})&\leq k \cdot \max_i \log \mathcal{N}_\infty(\mathcal{F}_{\mathbf C_i},\varepsilon, \mathcal{S}) &\text{ (Lemma \ref{lem-ncninfy})}\\
  &\leq \max_i C k\eta_i\log\left(\frac{ne}{\eta_i\varepsilon}\right)\log^\delta\left(\frac{ne}{\eta_i}\right) &\text{ (Lemma \ref{lem-two})}\\
  &\leq \max_i C k \beta_i\log \left(\frac{e^{2+\delta}n}{\beta_i\varepsilon}\right)\log \left(\frac{e^{2+\delta}n}{\beta_i}\right).
\end{aligned}
\end{align}
The last inequality uses the fact that for any $a,b>0$, the function
 $t\mapsto t\log(a/t)\log^\delta(b/t)$
is non-decreasing as long as $a\geq b\geq e^{1+\delta}t$.
Note that $\beta_i=\frac{8}{n}\left(\frac{\tilde{\mathcal{R}}_n(\mathcal{F}_i)}{\tau\varepsilon}\right)^2,$ so
from \eqref{equ-fffc}, we can obtain that
\begin{align*}
\log \mathcal{N}_2(\mathcal{G}_\mathbf{C},\varepsilon, \mathcal{S})
  &\leq \max_i \frac{C_1 k}{n}\left(\frac{\tilde{\mathcal{R}}_n(\mathcal{F}_{\mathbf C_i})}{\varepsilon}\right)^2
  \log \left(\frac{e^{2+\delta}n^2\varepsilon^2}{\tilde{\mathcal{R}}^2(\mathcal{F}_{\mathbf C_i})}\right) \log^\delta\left(\frac{e^{2+\delta}n^2\varepsilon}{\tilde{\mathcal{R}}^2(\mathcal{F}_{\mathbf C_i})}\right)\\
  &\leq \max_i \frac{C_1k}{n}\left(\frac{\tilde{\mathcal{R}}_n(\mathcal{F}_{\mathbf C_i})}{\varepsilon}\right)^2
  \log^{(\delta+1)}\left(\frac{e^{2+\delta}n^2}{\tilde{\mathcal{R}}^2(\mathcal{F}_{\mathbf C_i})}\right) &\text{(since $\varepsilon\leq 1$)}\\
  &= 2\max_i \frac{C_1k}{n}\left(\frac{\tilde{{\mathcal{R}}}_n(\mathcal{F}_{\mathbf C_i})}{\varepsilon}\right)^2
  \log^{(\delta+1)}\left(\frac{e^{(1+\delta/2)}n}{\tilde{\mathcal{R}}(\mathcal{F}_{\mathbf C_i})}\right).\\
  &=\max_i \frac{C_2k}{n}\left(\frac{\tilde{{\mathcal{R}}}_n(\mathcal{F}_{\mathbf C_i})}{\varepsilon}\right)^2
  \log^{(\delta+1)}\left(\frac{n}{\tilde{\mathcal{R}}(\mathcal{F}_{\mathbf C_i})}\right),
\end{align*}
To apply this bound in Equation \eqref{leq-six},
we set $\gamma=\frac{2}{\tau n}\left(\max_i\tilde{\mathcal{R}}_n(\mathcal{F}_{\mathbf{C}_i})\right)$,
which gives
\begin{align*}
  {\mathcal{R}_n}(\mathcal{G}_\mathbf{C})&\leq \frac{8}{\tau}\left(\max_i\tilde{\mathcal{R}}_n(\mathcal{F}_{\mathbf{C}_i})\right)\\
    &~~~~~+C_3\sqrt{k}\log^{\frac{1+\delta}{2}}\left(\max_i\tilde{\mathcal{R}}_n(\mathcal{F}_{\mathbf{C}_i})\right)
  \left(\frac{n}{\max_i\tilde{\mathcal{R}}_n(\mathcal{F}_{\mathbf{C}_i})}\right)\int_{\frac{2}{cn}\left(\max_i\tilde{\mathcal{R}}_n(\mathcal{F}_{\mathbf{C}_i})\right)}^1\varepsilon^{-1}d\varepsilon\\
  &\leq C_4\sqrt{k}\left(\max_i\tilde{\mathcal{R}}_n(\mathcal{F}_{\mathbf{C}_i})\right)\log^{\frac{3}{2}+\frac{\delta}{2}}\left(\frac{n}{\max_i\tilde{\mathcal{R}}_n(\mathcal{F}_{\mathbf{C}_i})}\right).
\end{align*}
This proves the result.
\end{proof}

\subsection{Proof of Proposition \ref{propo-lowbound} }
We first prove that the maximum Rademacher complexity can be bounded by $3\sqrt{n}$.
  Then, follows \cite{foster2019ell}, we show that there exist a hypothesis function $\mathcal{F}_\mathbf{C}$
such that $\mathcal{R}_n(\mathcal{G}_\mathbf{C})\geq \sqrt{\frac{kn}{2}}.$
\begin{lemma}
\label{the-middle}
If $\forall \mathbf x\in\mathcal{X}, \|\Phi_\mathbf{x}\|\leq 1$,
then for all $\mathbf C\in\mathcal{H}^k$, we have
  \begin{align*}
    \max_i\tilde{{\mathcal{R}}}_n(\mathcal{F}_{\mathbf C_i})\leq 3\sqrt{n}.
  \end{align*}
\end{lemma}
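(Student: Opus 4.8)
The plan is to expand the squared-distance functions explicitly and reduce the Rademacher complexity to two elementary Rademacher averages. Fix a coordinate $i$ and a sample $\mathcal{S}=\{\mathbf x_1,\dots,\mathbf x_n\}$, and recall that $\mathcal{F}_{\mathbf C_i}$ consists of the maps $f_{\mathbf c}(\mathbf x)=\|\Phi_{\mathbf x}-\mathbf c\|^2$, whose expansion is
\[
f_{\mathbf c}(\mathbf x_t)=\|\Phi_{\mathbf x_t}\|^2-2\langle \mathbf c,\Phi_{\mathbf x_t}\rangle+\|\mathbf c\|^2 .
\]
First I would observe that one may restrict the centroids to the unit ball $\{\mathbf c:\|\mathbf c\|\le 1\}$ without loss of generality: since every $\|\Phi_{\mathbf x}\|\le 1$, the relevant cluster centers are averages of data points and hence lie in the convex hull of $\{\Phi_{\mathbf x}\}$, which is contained in the unit ball. (Without such a restriction the term $\|\mathbf c\|^2\sum_t\sigma_t$ makes the supremum infinite, so this constraint is exactly what renders the complexity finite.) The summand $\|\Phi_{\mathbf x_t}\|^2$ is independent of $\mathbf c$, so it pulls out of the supremum and contributes $\mathbb{E}_{\bm\sigma}\sum_t\sigma_t\|\Phi_{\mathbf x_t}\|^2=0$ to the Rademacher complexity. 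Hence
\[
\mathcal{R}_n(\mathcal{F}_{\mathbf C_i})=\mathbb{E}_{\bm\sigma}\sup_{\|\mathbf c\|\le 1}\Big(-2\big\langle \mathbf c,\textstyle\sum_{t=1}^n\sigma_t\Phi_{\mathbf x_t}\big\rangle+\|\mathbf c\|^2\textstyle\sum_{t=1}^n\sigma_t\Big).
\]

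Next I would split the supremum using $\sup(g+h)\le\sup g+\sup h$. Writing $v=\sum_{t=1}^n\sigma_t\Phi_{\mathbf x_t}$, the linear part gives $\sup_{\|\mathbf c\|\le1}(-2\langle\mathbf c,v\rangle)=2\|v\|$, while the quadratic part gives $\sup_{\|\mathbf c\|\le1}\|\mathbf c\|^2\sum_t\sigma_t=\big(\sum_t\sigma_t\big)_+\le\big|\sum_t\sigma_t\big|$. Taking expectations and applying Jensen's inequality together with the orthogonality of the Rademacher variables ($\mathbb{E}\,\sigma_s\sigma_t=\delta_{st}$), I would bound
\[
\mathbb{E}_{\bm\sigma}\Big\|\sum_{t=1}^n\sigma_t\Phi_{\mathbf x_t}\Big\|\le\Big(\mathbb{E}_{\bm\sigma}\Big\|\sum_{t=1}^n\sigma_t\Phi_{\mathbf x_t}\Big\|^2\Big)^{1/2}=\Big(\sum_{t=1}^n\|\Phi_{\mathbf x_t}\|^2\Big)^{1/2}\le\sqrt n,
\]
and likewise $\mathbb{E}_{\bm\sigma}\big|\sum_{t=1}^n\sigma_t\big|\le\big(\mathbb{E}_{\bm\sigma}(\sum_{t=1}^n\sigma_t)^2\big)^{1/2}=\sqrt n$.

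Combining these gives $\mathcal{R}_n(\mathcal{F}_{\mathbf C_i})\le 2\sqrt n+\sqrt n=3\sqrt n$. Since none of these bounds depends on the particular sample $\mathcal{S}$ or on the coordinate $i$, they pass verbatim to the worst-case quantity $\tilde{\mathcal{R}}_n(\mathcal{F}_{\mathbf C_i})=\max_{\mathcal{S}\in\mathcal{X}^n}\mathcal{R}_n(\mathcal{F}_{\mathbf C_i})$ and to the maximum over $i$, yielding $\max_i\tilde{\mathcal{R}}_n(\mathcal{F}_{\mathbf C_i})\le 3\sqrt n$. I expect the only genuinely delicate point to be the reduction to $\|\mathbf c\|\le 1$: one must argue that restricting the centroids to the unit ball does not alter the relevant clustering problem, since otherwise the quadratic-in-$\mathbf c$ term makes the supremum unbounded. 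Everything else is a routine combination of the expansion of $\|\Phi_{\mathbf x}-\mathbf c\|^2$, the vanishing of the hypothesis-independent term, Jensen's inequality, and Rademacher orthogonality.
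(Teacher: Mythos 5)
Your proof is correct and follows essentially the same route as the paper's: expand $\|\Phi_{\mathbf x}-\mathbf c\|^2$, drop the hypothesis-independent term, split the supremum into the linear and $\|\mathbf c\|^2$ parts, and bound these by $2\sqrt n$ and $\sqrt n$ respectively via Jensen's inequality and Rademacher orthogonality. If anything you are slightly more careful than the paper, which writes the supremum over all $\mathbf c_i\in\mathcal{H}$ and then invokes $\|\mathbf c_i\|\le 1$ without comment, whereas you explicitly justify restricting centroids to the unit ball via the convex hull of the data and note that the bound would otherwise be infinite.
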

\begin{proof}
  For all $\mathcal{S}\in\mathcal{X}^n$,
  $\mathbf C\in\mathcal{H}^k$, and $i\in\{1,\ldots, k\}$,
  we have
  \begin{align}
  \begin{aligned}
  \label{equ-fagaga}
{\mathcal{R}}_n(\mathcal{F}_{\mathbf C_i})&=\mathbb{E}_{\bm \sigma}\sup_{f_{\mathbf{C}_i}\in\mathcal{F}_{\mathbf C_i}}\sum_{j=1}^n\sigma_j f_{\mathbf C_i}(\mathbf x_j)\\
    &=\mathbb{E}_{\bm \sigma}\sup_{\mathbf c_i\in \mathcal{H}}\sum_{j=1}^n\sigma_j\|\Phi_j-\mathbf c_i\|^2\\
    &=\mathbb{E}_{\bm \sigma}\sup_{\mathbf c_i\in \mathcal{H}}\sum_{j=1}^n\sigma_j[-2\langle \Phi_j, \mathbf c_i\rangle+\|\mathbf c_i\|^2+\|\Phi_j\|^2]\\
    &=\mathbb{E}_{\bm \sigma}\sup_{\mathbf c_i\in \mathcal{H}}\sum_{j=1}^n\sigma_j[-2\langle \Phi_j, \mathbf c_i\rangle+\|\mathbf c_i\|^2]\\
    &\leq 2\mathbb{E}_{\bm \sigma}\sup_{\mathbf c_i\in \mathcal{H}}\sum_{j=1}^n\sigma_j[\langle \Phi_j, \mathbf c_i\rangle]+\mathbb{E}_{\bm \sigma}\sup_{\mathbf c_i\in \mathcal{H}}\sum_{j=1}^n\sigma_j\|\mathbf c_i\|^2.
  \end{aligned}
  \end{align}
  Since $\|\mathbf c_i\|\leq 1$,
  thus we can obtain that
  \begin{align}
  \begin{aligned}
  \label{equ-faajhah}
    \mathbb{E}_{\bm \sigma}\sup_{\mathbf c_i\in \mathcal{H}}\sum_{j=1}^n\sigma_j\|\mathbf c_i\|^2&\leq \mathbb{E}_{\bm \sigma}|\sum_{j=1}^n\sigma_j|\\
    &\leq \sqrt{n} &\text{(by the Cauchy-Schwarz inequality)}.
  \end{aligned}
  \end{align}
Note that
    \begin{align}
    \begin{aligned}
    \label{eq-fagagag}
    \mathbb{E}_{\bm \sigma}\sup_{\mathbf c_i\in \mathcal{H}}\sum_{j=1}^n\sigma_j\langle \Phi_j, \mathbf c_i\rangle&=\mathbb{E}_{\bm \sigma}\sup_{\mathbf c_i\in \mathcal{H}}\Big\langle\sum_{j=1}^n\sigma_j \Phi_j, \mathbf c_i\Big\rangle\\
    &\leq \mathbb{E}_{\bm \sigma}\Big\|\sum_{j=1}^n\sigma_j\Phi_j\Big\| &\text{ (by $\|\mathbf c_i\|\leq 1$)}\\
    &\leq \sqrt{\mathbb{E}_{\bm \sigma}\Big\|\sum_{j=1}^n\sigma_j\Phi_j\Big\|^2}
    \\&\leq \sqrt{\sum_{i=1}^n\|\Phi_i\|^2}
     &\text{(by Lemma 24 (a) with $p=2$ in \cite{Lei2019ff})}\\
    &\leq \sqrt{n} ~~~~\text{ (since $\|\Phi_i\|\leq 1$)}.
  \end{aligned}
  \end{align}
  Substituting \eqref{equ-faajhah} and \eqref{eq-fagagag} into \eqref{equ-fagaga},
  which proves the result.
\end{proof}
\begin{proof}[Proof of Proposition \ref{propo-lowbound}]
  Let $\mathbf C=(\sigma_1\cdot e_1,\ldots, \sigma_2\cdot e_k),$ where $e_i$ denote the $i$th standard basis function in $\mathcal{H}$,
  and $\bm \sigma\in\{\pm 1\}^k$ are Rademacher variables.
  We choose the hypothesis space
\begin{align*}
  \mathcal{F}_\mathbf{C}=\Big\{f_\mathbf{C}=(f_{\sigma_1\cdot e_1},\ldots, f_{\sigma_k\cdot e_k}),
   f_{\sigma_i\cdot e_i}(\mathbf x)=\|\Phi_i-\sigma_i\cdot e_i\|^2,\bm \sigma\in\{\pm 1\}^k\Big\},
\end{align*}
Assume that $n$ is divisible by $k$. We set $$\Phi_1,\ldots, \Phi_{n/k}=e_1, \Phi_{(n+1)/k},\ldots, \Phi_{2n/k}=e_2,$$
and so on, and let $i_t$ be such that $\Phi_t=e_{i_t}$.
Thus, the empirical Rademacher complexity can be written as as
\begin{align}
\label{eq-ffetg}
 \begin{aligned}
 \mathcal{R}_n(\mathcal{G}_\mathbf{C})&=\mathcal{R}_n(\varphi\circ\mathcal{F}_\mathbf{C})\\
  &=\mathbb{E}_{\bm\sigma'}\max_{\bm \sigma}\sum_{t=1}^n\sigma_t'\min_{1\leq i\leq n}\|\Phi_i-\sigma_i\cdot e_i\|^2\\
  &=\mathbb{E}_{\bm\sigma'}\max_{\bm \sigma}\sum_{t=1}^n \sigma_t'\min_{1\leq i\leq n}\left(2-2\langle \Phi_i,\sigma_i\cdot e_i\rangle\right)\\
  &=2\mathbb{E}_{\bm\sigma'}\max_{\bm \sigma}\sum_{t=1}^n \sigma_t'\max_{1\leq i\leq n}\langle \Phi_i,\sigma_i\cdot e_i\rangle\\
  &=2\mathbb{E}_{\bm\sigma'}\max_{\bm \sigma}\sum_{t=1}^n \sigma_t'\max\{\sigma_{i_t},0\}\\
  &=2k\cdot \mathbb{E}_{\bm\sigma'}\max_{\sigma\in\{\pm 1\}}\sum_{t=1}^{n/k} \sigma_t'\max\{\sigma,0\}.
\end{aligned}
\end{align}
Using the Khintchine inequality \cite{young1976best},
we can obtain that
\begin{align*}
  \mathbb{E}_{\bm\sigma'}\max_{\sigma\in\{\pm 1\}}\sum_{t=1}^{n/k} \sigma_t'\max\{\sigma,0\}
=\frac{1}{2}\mathbb{E}_{\bm\sigma'}\left|\sum_{t=1}^{n/k}\sigma_t'\right|\geq \sqrt{\frac{n}{8k}}.
\end{align*}
Substituting the above inequality into \eqref{eq-ffetg}, we can obtain that
\begin{align*}
  \mathcal{R}_n(\mathcal{G}_\mathbf{C})=2k\cdot \mathbb{E}_{\bm\sigma'}\max_{\sigma\in\{\pm 1\}}\sum_{t=1}^{n/k} \sigma_t'\max\{\sigma,0\}\geq \sqrt{\frac{k n}{2}}.
\end{align*}
On the other hand, from Lemma \ref{the-middle}, for each $i$, we know that
\begin{align*}
  \max_i \tilde{\mathcal{R}}_n(\mathcal{F}_{\mathbf{C}_i})\leq 3\sqrt{n}.
\end{align*}
This proves the result.
\end{proof}

\subsection{Proof of Theorem \ref{the-three} }
\begin{proof}[Proof of Theorem \ref{the-three}]
The starting point of our analysis is the following elementary inequality (see \cite{Devroye1996}, Ch.8):
\begin{align}
\begin{aligned}
\label{leq-onee}
 \mathbb{E}[\mathcal{W}(\mathbf C_n,\mathbb{P})]-\mathcal{W}^\ast(\mathbb{P})&=\mathbb{E}\Big[\big(\mathcal{W}(\mathbf C_n,\mathbb{P})-\mathcal{W}(\mathbf C_n,\mathbb{P}_n)\big)+
  \big(\mathcal{W}(\mathbf C_n,\mathbb{P}_n))-\mathcal{W}^\ast(\mathbb{P}\big)\Big]\\
  &\leq \mathbb{E}\sup_{\mathbf C\in\mathcal{H}^k}\big(\mathcal{W}(\mathbf C,\mathbb{P}_n)-\mathcal{W}(\mathbf C,\mathbb{P})\big)+\sup_{\mathbf C\in\mathcal{H}^k}\mathbb{E}\left[\mathcal{W}(\mathbf C,\mathbb{P})-\mathcal{W}(\mathbf C,\mathbb{P}_n)\right]\\
  &\leq 2\mathbb{E}\sup_{\mathbf C\in\mathcal{H}^k}\big(\mathcal{W}(\mathbf C,\mathbb{P}_n)-\mathcal{W}(\mathbf C,\mathbb{P})\big).
\end{aligned}
\end{align}
Let $\mathbf x'_1,\ldots, \mathbf x'_n$ be an independent copy of $\mathbf x_1,\ldots, \mathbf x_n$,
independent of the $\sigma_i$'s.
Then, by a standard symmetrization argument \cite{Bartlett2002ff}, we can write
\begin{align}
\begin{aligned}
\label{leq-twwe}
\mathbb{E}\sup_{\mathbf C\in \mathcal{H}^k}\big(\mathcal{W}(\mathbf C,\mathbb{P}_n)-\mathcal{W}(\mathbf C,\mathbb{P})\big)
 &\leq \mathbb{E}\sup_{g_\mathbf{C}\in\mathcal{G}_\mathbf{C}} \frac{1}{n}\sum_{i=1}^n\sigma_i\left[g_\mathbf{C}(\mathbf x)-g_\mathbf{C}(\mathbf x')\right]\\
  &\leq 2 \mathbb{E}\sup_{g_\mathbf{C}\in\mathcal{G}_\mathbf{C}} \frac{1}{n}\sum_{i=1}^n\sigma_i g_\mathbf{C}(\mathbf x)\\&=\frac{2}{n}\mathcal{R}(\mathcal{G}_\mathbf{C}).
  \end{aligned}
\end{align}
From  Lemma A.2 in \cite{Oneto2015},
  with probability $1-\delta$, we have
  \begin{align}
  \label{lem-diff}
    \mathcal{R}(\mathcal{G}_\mathbf{C})\leq {\mathcal{R}}_n(\mathcal{G}_\mathbf{C})+\sqrt{{2n\log\left(\frac{1}{\delta}\right)}}.
  \end{align}
Thus, we can obtain that
\begin{align*}
  &~~~~\mathbb{E}\left[\mathcal{W}(\mathbf C_n,\mathbb{P})\right]-\mathcal{W}^\ast(\mathbb{P})
  \\&\leq \frac{4}{n}{\mathcal{R}_n}(\mathcal{G}_\mathbf{C})+4\sqrt{\frac{2\log(1/\delta)}{n}}&\text{(by \eqref{leq-onee}, \eqref{leq-twwe} and \eqref{lem-diff})}\\
  &\leq \frac{c\sqrt{k}}{n}\max_i {\tilde{\mathcal{R}}_n}(\mathcal{F}_{\mathbf C_i})\log^{\frac{3}{2}+\delta}
  \left(\frac{n}{{\tilde{\mathcal{R}}_n}(\mathcal{F}_{\mathbf C_i})}\right)+4\sqrt{\frac{2\log(1/\delta)}{n}} &\text{(by Theorem \ref{the-main-restult})}\\
  &\leq \frac{c_1\sqrt{k}}{\sqrt{n}}\log^{\frac{3}{2}+\delta}\left(\frac{n}{\max_i{\tilde{\mathcal{R}}_n}(\mathcal{F}_{\mathbf C_i})}\right)+4\sqrt{\frac{2\log(1/\delta)}{n}} &\text{(by Lemma \ref{the-middle})}\\
  &\leq c_2 \sqrt{\frac{k}{n}}\log ^{\frac{3}{2}+\delta}\left(\frac{n}{\max_i \tilde{\mathcal{R}}_n(\mathcal{F}_{\mathbf C_i})}\right).
\end{align*}
This proves the result.
\end{proof}

\subsection{Proof of Corollary \ref{the-three-approximate}}
\begin{proof}
Note that
\begin{align*}
  &\mathbb{E}[\mathcal{W}(\tilde{\mathbf C}_n,\mathbb{P})]-\mathcal{W}^\ast(\mathbb{P})
 \\=&\mathbb{E}\Big[\mathcal{W}(\tilde{\mathbf C}_n,\mathbb{P})-\mathcal{W}(\tilde{\mathbf C}_n,\mathbb{P}_n)+
 \mathcal{W}(\tilde{\mathbf C}_n,\mathbb{P}_n)-\mathcal{W}(\mathbf C_n,\mathbb{P}_n)
 \\&~~~+\mathcal{W}(\mathbf C_n,\mathbb{P}_n)-
 \mathcal{W}(\mathbf C_n,\mathbb{P})+\mathcal{W}(\mathbf C_n,\mathbb{P})-\mathcal{W}^\ast(\mathbb{P})\Big]\\
 &\leq \underbrace{\mathbb{E}\Big[\mathcal{W}(\tilde{\mathbf C}_n,\mathbb{P})-\mathcal{W}(\tilde{\mathbf C}_n,\mathbb{P}_n)\Big]}_{A_1}
 +\underbrace{\mathbb{E}\Big[\mathcal{W}(\tilde{\mathbf C}_n,\mathbb{P}_n)-\mathcal{W}(\mathbf C_n,\mathbb{P}_n)\Big]}_{A_2}\\
 &~~~+\underbrace{\mathbb{E}\Big[\mathcal{W}(\mathbf C_n,\mathbb{P}_n)-\mathcal{W}(\mathbf C_n,\mathbb{P})\Big]}_{A_3}+
 \underbrace{\mathbb{E}\Big[\mathcal{W}(\mathbf C_n,\mathbb{P})\Big]-\mathcal{W}^\ast(\mathbb{P})}_{A_4}.
\end{align*}
Note that $A_2$ is bounded by $\zeta$,
$A_4$ can be obtained from Theorem \ref{the-three},
and $A_1$ and $A_3$ can be bounded by the Rademacher complexity:
\begin{align*}
  A_1 \leq \mathbb{E}\sup_{\mathbf C\in\mathcal{H}^k}\left(\mathcal{W}(\mathbf C,\mathbb{P}_n)-\mathcal{W}(\mathbf C,\mathbb{P})\right)\leq \frac{2}{n}\mathcal{R}(\mathcal{G}_\mathbf{C}),\\
  A_3 \leq \mathbb{E}\sup_{\mathbf C\in\mathcal{H}^k}\left(\mathcal{W}(\mathbf C,\mathbb{P}_n)-\mathcal{W}(\mathbf C,\mathbb{P})\right)\leq \frac{2}{n}\mathcal{R}(\mathcal{G}_\mathbf{C}).
\end{align*}
Thus, we can obtain that
\begin{align}
\label{kmeans-approxiamte-mid}
  \mathbb{E}[\mathcal{W}(\tilde{\mathbf C}_n,\mathbb{P})]-\mathcal{W}^\ast(\mathbb{P})\leq \frac{4}{n}\mathcal{R}(\mathcal{G}_{\mathbf{C}})+c \sqrt{\frac{k}{n}}\log ^{\frac{3}{2}+\delta}
    \left(\frac{n}{\max_i \tilde{\mathcal{R}}_n(\mathcal{F}_{\mathbf C_i})}\right)+\zeta.
\end{align}
From \eqref{lem-diff}, we know that
  \begin{align*}
    \mathcal{R}(\mathcal{G}_\mathbf{C})&\leq {\mathcal{R}}_n(\mathcal{G}_\mathbf{C})+\sqrt{{2n\log\left(\frac{1}{\delta}\right)}}\\
    &\leq  c\sqrt{k}\max_i \tilde{\mathcal{R}}_n(\mathcal{F}_{\mathbf C_i})\log ^{\frac{3}{2}+\delta}\left(\frac{n}{\max_i \tilde{\mathcal{R}}_n(\mathcal{F}_{\mathbf C_i})}\right)
    +\sqrt{{2n\log\left(\frac{1}{\delta}\right)}} & \text{(by Theorem \ref{the-main-restult})}\\
    &\leq 3c\sqrt{kn}\log ^{\frac{3}{2}+\delta}\left(\frac{n}{\max_i \tilde{\mathcal{R}}_n(\mathcal{F}_{\mathbf C_i})}\right)+\sqrt{{2n\log\left(\frac{1}{\delta}\right)}} &(\text{by Lemma \ref{the-middle}})
  \end{align*}
  Substituting the above inequality into \eqref{kmeans-approxiamte-mid}, which proves the result.
\end{proof}
\subsection{Proof of Corollary \ref{cor-gajgagh}}
\begin{proof}[Proof of Corollary \ref{cor-gajgagh}]
  Note that
  \begin{align*}
    \mathbb{E}\left[\mathbb{E}_\mathcal{A}[\mathcal{W}(\mathbf C_n^\mathcal{A},\mathbb{P})]\right]
    &=\mathbb{E}\Big[ \mathbb{E}_\mathcal{A}[\mathcal{W}(\mathbf C_n^\mathcal{A},\mathbb{P})]-
    \mathbb{E}_\mathcal{A}[\mathcal{W}(\mathbf C_n^\mathcal{A},\mathbb{P}_n)]
    +\mathbb{E}_\mathcal{A}[\mathcal{W}(\mathbf C_n^\mathcal{A},\mathbb{P}_n)]\Big]\\
    &\leq \mathbb{E}\Big[ \mathbb{E}_\mathcal{A}[\mathcal{W}(\mathbf C_n^\mathcal{A},\mathbb{P})]-
    \mathbb{E}_\mathcal{A}[\mathcal{W}(\mathbf C_n^\mathcal{A},\mathbb{P}_n)]\Big]+
     \mathbb{E}\Big[\mathbb{E}_\mathcal{A}[\mathcal{W}(\mathbf C_n^\mathcal{A},\mathbb{P}_n)]\Big].
  \end{align*}
  From Lemma \ref{lem-gahgag}, we can obtain that
  \begin{align*}
    &\mathbb{E}\Big[\mathbb{E}_\mathcal{A}[\mathcal{W}(\mathbf C_n^\mathcal{A},\mathbb{P}_n)]\Big]
    \leq \beta\cdot \mathbb{E}[\mathcal{W}(\mathbf C_n),\mathbb{P}_n]\\
    = &\beta\cdot \mathbb{E}\Big[\mathcal{W}(\mathbf C_n,\mathbb{P}_n)-\mathcal{W}(\mathbf C_n,\mathbb{P})+\mathcal{W}(\mathbf C_n,\mathbb{P})\Big]\\
    \leq &\beta\cdot \mathbb{E}\Big[\mathcal{W}(\mathbf C_n,\mathbb{P}_n)-\mathcal{W}(\mathbf C_n,\mathbb{P})\Big]+
    \beta\cdot\mathbb{E}\Big[\mathcal{W}(\mathbf C_n,\mathbb{P})\Big].
  \end{align*}
  Thus, we can obtain that
  \begin{align*}
    \mathbb{E}\Big[\mathbb{E}_\mathcal{A}[\mathcal{W}(\mathbf C_n^\mathcal{A},\mathbb{P})]\Big]&\leq
    \underbrace{\mathbb{E}\Big[ \mathbb{E}_\mathcal{A}[\mathcal{W}(\mathbf C_n^\mathcal{A},\mathbb{P})]-
    \mathbb{E}_\mathcal{A}[\mathcal{W}(\mathbf C_n^\mathcal{A},\mathbb{P}_n)]\Big]}_{A_1}
    \\&+\beta\cdot \underbrace{\mathbb{E}\Big[\mathcal{W}(\mathbf C_n,\mathbb{P}_n)-\mathcal{W}(\mathbf C_n,\mathbb{P})\Big]}_{A_2}+
    \beta\cdot\underbrace{\mathbb{E}\Big[\mathcal{W}(\mathbf C_n,\mathbb{P})\Big]}_{A_3}.
  \end{align*}
  Note that
  \begin{align*}
    A_1, A_2&\leq \mathbb{E}\sup_{\mathbf C\in \mathcal{H}^k}\big(\mathcal{W}(\mathbf C,\mathbb{P}_n)
    -\mathcal{W}(\mathbf C,\mathbb{P})\big) &\\
    &\leq \frac{2}{n}{\mathcal{R}}_n(\mathcal{G}_\mathbf{C})+\sqrt{\frac{8}{n}{\log\left(\frac{1}{\delta}\right)}}  &\text{(by \ref{leq-twwe} and \ref{lem-diff}})\\
     &\leq \frac{2c\sqrt{k}}{n}\max_i {\tilde{\mathcal{R}}_n}(\mathcal{F}_{\mathbf C_i})\log^{\frac{3}{2}+\delta}\left(\frac{n}{{\tilde{\mathcal{R}}_n}(\mathcal{F}_{\mathbf C_i})}\right)
     +\sqrt{\frac{8\log(1/\delta)}{n}}&\text{(by Theorem \ref{the-main-restult})}\\
     &\leq \frac{6\sqrt{k}}{\sqrt{n}}\log^{\frac{3}{2}+\delta}\left(\frac{n}{{\tilde{\mathcal{R}}_n}(\mathcal{F}_{\mathbf C_i})}\right)
     +\sqrt{\frac{8\log(1/\delta)}{n}}&\text{(by Lemma \ref{the-middle})}\\
     &\leq \tilde{\mathcal{O}}\left(\sqrt{\frac{k}{n}}\right)
  \end{align*}
  By Theorem \ref{the-three}, we can obtain that
  \begin{align*}
    \mathbb{E}[\mathcal{W}(\mathbf C_n,\mathbb{P})]
    \leq  \mathcal{W}^\ast(\mathbb{P})+c \sqrt{\frac{k}{n}}\log ^{\frac{3}{2}+\delta}
    \left(\frac{n}{\max_i \tilde{\mathcal{R}}_n(\mathcal{F}_{\mathbf C_i})}\right).
  \end{align*}
  Thus, we can get
  \begin{align*}
    \mathbb{E}\Big[\mathbb{E}_\mathcal{A}[\mathcal{W}(\mathbf C_n^\mathcal{A},\mathbb{P}_n)]\Big]&
    \leq  \left(\frac{n}{\max_i \tilde{\mathcal{R}}_n(\mathcal{F}_{\mathbf C_i})}+\mathcal{W}^\ast(\mathbb{P})\right).
  \end{align*}
\end{proof}
\subsection{Proof of Theorem \ref{the-nsty}}
\begin{lemma}
  \label{Cmnp-pn}
  With probability at least $1-\delta$,
  \begin{align*}
    \mathbb{E}\left[\mathcal{W}(\mathbf C_{n,m},\mathbb{P}_n)-\mathcal{W}(\mathbf C_{n,m},\mathbb{P})\right]
    =\tilde{\mathcal{O}}\left(\sqrt{\frac{k}{n}}\right).
  \end{align*}
\end{lemma}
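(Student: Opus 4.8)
The plan is to exploit the single structural fact that makes the Nystr\"{o}m solution no harder to control than the exact ERM: since $\mathcal{H}_m = \mathrm{span}\{\Phi_i\}_{i=1}^m \subseteq \mathcal{H}$, we have the inclusion $\mathcal{H}_m^k \subseteq \mathcal{H}^k$, and therefore $\mathbf{C}_{n,m} \in \mathcal{H}^k$ no matter how the dictionary $\mathcal{I}$ was sampled. Consequently the (random) centroid collection $\mathbf{C}_{n,m}$ is just a single admissible element of the full hypothesis space, so its generalization gap is dominated by the uniform deviation over $\mathcal{H}^k$:
\begin{align*}
\mathbb{E}\big[\mathcal{W}(\mathbf{C}_{n,m},\mathbb{P}_n)-\mathcal{W}(\mathbf{C}_{n,m},\mathbb{P})\big]
\leq \mathbb{E}\sup_{\mathbf{C}\in\mathcal{H}^k}\big(\mathcal{W}(\mathbf{C},\mathbb{P}_n)-\mathcal{W}(\mathbf{C},\mathbb{P})\big).
\end{align*}
The important point is that this step costs nothing extra: because the Nystr\"{o}m search space is contained in $\mathcal{H}^k$, I never have to analyze the complexity of the data-dependent restricted class $\mathcal{H}_m^k$ separately; the Rademacher complexity of $\mathcal{G}_\mathbf{C}$ over the full space already governs the deviation.

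Next I would reuse the machinery already assembled for the exact ERM. A standard symmetrization argument, exactly as in \eqref{leq-twwe}, bounds the right-hand side by $\frac{2}{n}\mathcal{R}(\mathcal{G}_\mathbf{C})$. The concentration inequality \eqref{lem-diff} (Lemma A.2 of \cite{Oneto2015}) then replaces the expected complexity by its empirical counterpart, giving, with probability at least $1-\delta$,
\begin{align*}
\mathcal{R}(\mathcal{G}_\mathbf{C}) \leq \mathcal{R}_n(\mathcal{G}_\mathbf{C}) + \sqrt{2n\log(1/\delta)}.
\end{align*}
Finally, applying the sharp clustering-Rademacher bound of Theorem \ref{the-main-restult} together with the uniform control $\max_i \tilde{\mathcal{R}}_n(\mathcal{F}_{\mathbf{C}_i}) \leq 3\sqrt{n}$ from Lemma \ref{the-middle} yields $\mathcal{R}_n(\mathcal{G}_\mathbf{C}) = \tilde{\mathcal{O}}(\sqrt{kn})$, and hence $\frac{2}{n}\mathcal{R}(\mathcal{G}_\mathbf{C}) = \tilde{\mathcal{O}}(\sqrt{k/n})$, which is the claimed rate. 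This is precisely the chain of inequalities already invoked to bound $A_1$ and $A_3$ in the decomposition used for Theorem \ref{the-nsty}.

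There is essentially no hard analytic obstacle here; the entire content of the lemma is the observation in the first paragraph, namely that $\mathbf{C}_{n,m}$ lives in $\mathcal{H}^k$ and therefore inherits the uniform-deviation bound of the exact problem. The only point I would handle with a little care is the meaning of the outer expectation and the high-probability qualifier: the sampling of the dictionary $\mathcal{I}$ is independent of the deviation bound, so I would take $\mathbb{E}$ over the landmark randomness (and, as needed, the ghost sample in the symmetrization step) while the $1-\delta$ event refers to the draw of $\mathcal{S}$ entering \eqref{lem-diff}; since the resulting uniform bound does not depend on $\mathbf{C}_{n,m}$ at all, it passes through the expectation over $\mathcal{I}$ unchanged.
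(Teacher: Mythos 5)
Your proposal is correct and follows essentially the same route as the paper's proof of Lemma \ref{Cmnp-pn}: exploit $\mathcal{H}_m^k\subseteq\mathcal{H}^k$ to dominate the gap by the uniform deviation over the full space, symmetrize as in \eqref{leq-twwe}, pass to the empirical complexity via \eqref{lem-diff}, and conclude with Theorem \ref{the-main-restult} and Lemma \ref{the-middle}. Your closing remark disentangling the $1-\delta$ event (over the draw of $\mathcal{S}$) from the expectation over the dictionary randomness is a detail the paper leaves implicit, but it does not change the argument.
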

\begin{proof}
  Note that
  \begin{align*}
    &~~~~~\mathbb{E}\left[\mathcal{W}(\mathbf C_{n,m},\mathbb{P}_n)-\mathcal{W}(\mathbf C_{n,m},\mathbb{P})\right]\\
    &\leq \mathbb{E}\sup_{\mathbf C\in\mathcal{H}^k}\left[\mathcal{W}(\mathbf C,\mathbb{P}_n)-\mathcal{W}(\mathbf C,\mathbb{P})\right]
    \leq \frac{2}{n}\mathcal{R}(\mathcal{G}_\mathbf{C})&\text{(by \eqref{leq-twwe})}\\
    &\leq \frac{2}{n}{\mathcal{R}_n}(\mathcal{G}_\mathbf{C})+2\sqrt{\frac{2\log(1/\delta)}{n}} &\text{(by \eqref{lem-diff})}\\
    &\leq \frac{c\sqrt{k}}{n}\max_i {\tilde{\mathcal{R}}_n}(\mathcal{F}_{\mathbf C_i})\log^{\frac{3}{2}+\delta}
    \left(\frac{n}{{\tilde{\mathcal{R}}_n}(\mathcal{F}_{\mathbf C_i})}\right)+\sqrt{\frac{8\log(1/\delta)}{n}} &\text{(by Theorem \ref{the-main-restult})}\\
  &\leq \frac{c\sqrt{k}}{\sqrt{n}}\log^{\frac{3}{2}+\delta}\left(\frac{n}{{\tilde{\mathcal{R}}_n}(\mathcal{F}_{\mathbf C_i})}\right)+\sqrt{\frac{8\log(1/\delta)}{n}}&\text{(by Lemma \ref{the-middle})}\\
  &\leq \tilde{\mathcal{O}}\left(\sqrt{\frac{k}{n}}\right).
  \end{align*}
  This proves the result.
\end{proof}
\begin{lemma}
  \label{lem-nyfirst}
  If constructing $\mathcal{I}$ by uniformly sampling $m\geq C\sqrt{n}\log(1/\delta)\min(k,\Xi)/\sqrt{k}$,
  then for all $\mathcal{S}\in\mathcal{X}^n$, with probability at least $1-\delta$,
  we have
  \begin{align*}
    \mathcal{W}(\mathbf C_{n,m},\mathbb{P}_n)-\mathcal{W}(\mathbf C_n,\mathbb{P}_n)\leq C\sqrt{\frac{k}{n}},
  \end{align*}
  where $\Xi=\mathrm{Tr}(\mathbf K_n(\mathbf K_n+\mathbf I_n)^{-1})$
  is the effective dimension of $\mathbf K_n$.
\end{lemma}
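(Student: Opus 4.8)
The plan is to bound $A_2:=\mathcal{W}(\mathbf C_{n,m},\mathbb{P}_n)-\mathcal{W}(\mathbf C_n,\mathbb{P}_n)$ by the \emph{reconstruction error of the exact centroids} under the Nyström projection, and then control that error through the effective dimension via the concentration results of \cite{calandriello2018statistical}. Let $P_m$ be the orthogonal projection of $\mathcal{H}$ onto $\mathcal{H}_m=\mathrm{span}\,\mathcal{I}$, let $\mathbf c_1,\dots,\mathbf c_k$ be the centroids of the exact ERM $\mathbf C_n$ with Voronoi cells $\mathcal{C}_j$, and write $c(i)$ for the index with $i\in\mathcal{C}_{c(i)}$. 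Since $P_m\mathbf C_n=(P_m\mathbf c_1,\dots,P_m\mathbf c_k)\in\mathcal{H}_m^k$ is feasible for the Nyström problem, optimality of $\mathbf C_{n,m}$ gives $\mathcal{W}(\mathbf C_{n,m},\mathbb{P}_n)\le\mathcal{W}(P_m\mathbf C_n,\mathbb{P}_n)$, and bounding the minimum in the latter by the (possibly suboptimal) assignment $c(i)$ yields $\mathcal{W}(P_m\mathbf C_n,\mathbb{P}_n)\le\frac1n\sum_i\|\Phi_i-P_m\mathbf c_{c(i)}\|^2$.

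First I would expand $\|\Phi_i-P_m\mathbf c_{c(i)}\|^2=\|\Phi_i-\mathbf c_{c(i)}\|^2+2\langle\Phi_i-\mathbf c_{c(i)},(I-P_m)\mathbf c_{c(i)}\rangle+\|(I-P_m)\mathbf c_{c(i)}\|^2$ and sum cluster by cluster. The crucial point is that at the ERM the centroid is the cluster mean, $\mathbf c_j=\frac1{|\mathcal{C}_j|}\sum_{i\in\mathcal{C}_j}\Phi_i$, so $\sum_{i\in\mathcal{C}_j}(\Phi_i-\mathbf c_j)=0$ and the cross term vanishes on each cluster. This leaves
\[
A_2\le\frac1n\sum_{j=1}^k|\mathcal{C}_j|\,\|(I-P_m)\mathbf c_j\|^2=\mathrm{Tr}\big((I-P_m)\Sigma_c\big),\qquad \Sigma_c:=\frac1n\sum_{j=1}^k|\mathcal{C}_j|\,\mathbf c_j\otimes\mathbf c_j,
\]
where the centroid covariance $\Sigma_c$ has rank at most $k$. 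Because the within-cluster scatter $\widehat\Sigma-\Sigma_c=\frac1n\sum_j\sum_{i\in\mathcal{C}_j}(\Phi_i-\mathbf c_j)\otimes(\Phi_i-\mathbf c_j)$ is positive semidefinite, where $\widehat\Sigma=\frac1n\sum_i\Phi_i\otimes\Phi_i$, we have $\Sigma_c\preceq\widehat\Sigma$. Using that $(I-P_m)\Sigma_c(I-P_m)$ is PSD of rank $\le k$, this gives the rank-$k$ estimate $A_2\le\mathrm{Tr}\big((I-P_m)\Sigma_c(I-P_m)\big)\le k\,\|(I-P_m)\widehat\Sigma^{1/2}\|_{\mathrm{op}}^{2}$.

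Next I would invoke the Nyström concentration of \cite{calandriello2018statistical} (in the spirit of their Lemmas 1 and 2): under uniform sampling of $m\gtrsim d_{\mathrm{eff}}(\gamma)\log(1/\delta)$ landmarks, with $d_{\mathrm{eff}}(\gamma)=\sum_i\lambda_i/(\lambda_i+n\gamma)$ the $\gamma$-effective dimension, one has $\|(I-P_m)\widehat\Sigma^{1/2}\|_{\mathrm{op}}^{2}\le c\gamma$ with probability $1-\delta$. Choosing $\gamma=1/\sqrt{kn}$ turns the rank-$k$ bound into $A_2\le c\,k\gamma=c\sqrt{k/n}$, the claimed rate. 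It then remains to verify that the stated sample size suffices, i.e.\ that $d_{\mathrm{eff}}(1/\sqrt{kn})\lesssim\sqrt n\min(k,\Xi)/\sqrt k$. For this I would use two elementary estimates: since $\gamma\ge 1/n$ (as $n\ge k$) and $d_{\mathrm{eff}}$ is decreasing in $\gamma$, monotonicity gives $d_{\mathrm{eff}}(\gamma)\le d_{\mathrm{eff}}(1/n)=\Xi$; and since $n\gamma=\sqrt{n/k}$, we get $d_{\mathrm{eff}}(\gamma)\le\frac{1}{n\gamma}\mathrm{Tr}(\mathbf K)\le\sqrt{k/n}\cdot n=\sqrt{nk}$, using $\mathrm{Tr}(\mathbf K)\le n$ from $\|\Phi_\mathbf{x}\|\le1$. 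Hence $d_{\mathrm{eff}}(\gamma)\le\min(\Xi,\sqrt{nk})\le\sqrt n\min(k,\Xi)/\sqrt k$, so the hypothesis on $m$ implies $m\gtrsim d_{\mathrm{eff}}(\gamma)\log(1/\delta)$ and the concentration applies. This is also exactly where the quantity $\min(k,\Xi)$ in the sample size originates, and tuning $\gamma$ to target $k/\sqrt n$ instead recovers the $\tilde\Omega(\sqrt n)$ requirement of \cite{calandriello2018statistical}.

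The main obstacle is the concentration step: establishing the operator-norm control $\|(I-P_m)\widehat\Sigma^{1/2}\|_{\mathrm{op}}^{2}\lesssim\gamma$ for \emph{uniform} (rather than leverage-score) sampling with only $m\gtrsim d_{\mathrm{eff}}(\gamma)\log(1/\delta)$ landmarks, and checking that the boundedness assumption $\|\Phi_\mathbf{x}\|\le1$ keeps the per-point leverage contributions under control so that uniform sampling is adequate. This is precisely the content borrowed from Lemmas~1 and~2 of \cite{calandriello2018statistical}; by comparison, the reduction of $A_2$ to a rank-$k$ centroid reconstruction error in the first two paragraphs, and the effective-dimension bookkeeping above, are comparatively routine.
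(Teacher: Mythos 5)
Your deterministic reduction is sound, and it is in fact a reconstruction of (the rank-$k$ branch of) Lemma~1 of \cite{calandriello2018statistical}, which the paper does not reprove: its entire proof of this lemma is the one-line remark that it follows by combining Lemmas~1 and~2 of \cite{calandriello2018statistical} with $\varepsilon=1/2$. The projection-feasibility step, the vanishing cross terms via the cluster-mean property of the ERM centroids, and $\Sigma_c\preceq\widehat\Sigma$ all check out. The genuine gap is in the concentration step: the guarantee ``uniform sampling of $m\gtrsim d_{\mathrm{eff}}(\gamma)\log(1/\delta)$ landmarks yields $\|(I-P_m)\widehat\Sigma^{1/2}\|_{\mathrm{op}}^{2}\lesssim\gamma$'' is the \emph{ridge-leverage-score} sampling bound, not the uniform one. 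For uniform sampling, Lemma~2 of \cite{calandriello2018statistical} (like the Bach/Rudi-et-al.\ results it rests on) requires $m\gtrsim n\max_i\tau_i(\gamma)\log(n/\delta)$, and under $\|\Phi_\mathbf{x}\|\le1$ the quantity $n\max_i\tau_i(\gamma)$ is only bounded by $\min(n,1/\gamma)$, which can exceed $d_{\mathrm{eff}}(\gamma)$ by an arbitrary factor when the leverage scores are non-uniform. So your choice $\gamma=1/\sqrt{kn}$ actually forces $m\gtrsim\sqrt{nk}\log(\cdot)$, which agrees with the lemma's hypothesis only when $\Xi\ge k$; your subsequent arithmetic showing $d_{\mathrm{eff}}(\gamma)\le\sqrt{n}\min(k,\Xi)/\sqrt{k}$ is correct but rests on the false premise.

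In the regime $\Xi<k$ the stated budget is $m\simeq\sqrt{n}\,\Xi\log(1/\delta)/\sqrt{k}\ll\sqrt{nk}$, so uniform sampling can only certify accuracy at level $\gamma\simeq\sqrt{k}/(\sqrt{n}\,\Xi)$, and your rank-$k$-only bound then gives $A_2\le k\gamma\simeq k^{3/2}/(\sqrt{n}\,\Xi)$, worse than the claimed $\sqrt{k/n}$ by the factor $k/\Xi$. What is missing is the trace branch of Lemma~1 of \cite{calandriello2018statistical}, which your own reduction delivers in one extra line: $\mathrm{Tr}\big((I-P_m)\Sigma_c(I-P_m)\big)\le\mathrm{Tr}\big((I-P_m)\widehat\Sigma(I-P_m)\big)$, and together with the regularized accuracy condition $\|(I-P_m)(\widehat\Sigma+\gamma I)^{1/2}\|_{\mathrm{op}}^{2}\lesssim\gamma$ this yields $A_2\lesssim\gamma\,d_{\mathrm{eff}}(\gamma)\le\gamma\,\Xi$ (by monotonicity of $d_{\mathrm{eff}}$, since $\gamma\ge1/n$), hence $A_2\lesssim\gamma\min(k,\Xi)$. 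With the correct uniform-sampling complexity $m\gtrsim\gamma^{-1}\log(1/\delta)$ and the choice $\gamma=\sqrt{k}/\big(\sqrt{n}\min(k,\Xi)\big)$, both the stated sample size and the bound $\sqrt{k/n}$ come out in all regimes; this is exactly the cited argument, and it shows that $\min(k,\Xi)$ originates in the risk bound (via $\gamma\min(k,d_{\mathrm{eff}}(\gamma))$), not in the sampling requirement as your write-up places it. As written, your proof establishes the lemma only when $\Xi\gtrsim k$.
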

    \begin{proof}
      This can be directly proved by combining Lemma 1 and Lemma 2 of \cite{calandriello2018statistical} by setting $\varepsilon=1/2$.
    \end{proof}
\begin{proof}[Proof of Theorem \ref{the-nsty}]
  Given our dictionary $\mathcal{I}$, we decompose
  \begin{align*}
    \mathbb{E}[\mathcal{W}(\mathbf C_{n,m},\mathbb{P})]-\mathcal{W}^\ast(\mathbb{P})
    =\underbrace{\mathbb{E}[\mathcal{W}(\mathbf C_{n,m},\mathbb{P})-\mathcal{W}(\mathbf C_{n},\mathbb{P})]}_{A}+\underbrace{\mathbb{E}[\mathcal{W}(\mathbf C_{n},\mathbb{P})]-\mathcal{W}^\ast({\mathbb{P}})}_{B}.
  \end{align*}
  The second pair $B$ can be bounded by $\tilde{\mathcal{O}}(\sqrt{\frac{k}{n}})$ using Theorem \ref{the-three}.
We further split $A$ as
\begin{align*}
  &~~~~~\mathbb{E}[\mathcal{W}(\mathbf C_{n,m},\mathbb{P})-\mathcal{W}(\mathbf C_n,\mathbb{P})]
  \\&=\underbrace{\mathbb{E}[\mathcal{W}(\mathbf C_{n,m},\mathbb{P})-\mathcal{W}(\mathbf C_{n,m},\mathbb{P}_n)]}_{A_1}
  +\underbrace{\mathbb{E}[\mathcal{W}(\mathbf C_{n,m},\mathbb{P}_n)-\mathcal{W}(\mathbf C_n,\mathbb{P}_n)]}_{A_2}
  +\underbrace{\mathbb{E}[\mathcal{W}(\mathbf C_n,\mathbb{P}_n)-\mathcal{W}(\mathbf C_n,\mathbb{P})]}_{A_3}.
\end{align*}
The last line $A_3$ is negative,
as $\mathbf C_n$ is optimal w.r.t. $\mathcal{W}(\cdot,\mathbb{P}_n)$.
The first line $A_1$  and  second line $A_2$ can both be bounded as $\tilde{\mathcal{O}}(\sqrt{k/n})$ using Lemma \ref{Cmnp-pn}
and  Lemma \ref{lem-nyfirst}, respectively.
\end{proof}

\subsection{Proof of Corollary \ref{cor-first}}
\begin{proof}
From the definition of effective dimension of $\mathbf K$,
we know that
\begin{align*}
  \Xi=\mathrm{Tr}(\mathbf K^\mathrm{T}(\mathbf K+\mathbf I)^{-1})&=\sum_{i=1}^n\frac{\lambda_i}{\lambda_i+1}\leq \sum_{i=1}^{\lfloor\sqrt{k}\rfloor} 1+\sum_{i=\lfloor\sqrt{k}\rfloor+1}^n\lambda_i\\
  &\leq \sqrt{k}+\sum_{i=\lfloor\sqrt{k}\rfloor+1}^n\lambda_i\leq \sqrt{k}+\sum_{i=\lfloor\sqrt{k}\rfloor+1}^n ci^{-\alpha}\\
  &\leq  \sqrt{k}+c\int_{\sqrt{k}}^\infty x^{-\alpha} d x= \sqrt{k}+\frac{c}{\alpha-1}\sqrt{k}^{1-\alpha}\\&\leq (1+\frac{c}{\alpha-1})\sqrt{k}.
\end{align*}
Thus, we can obtain that
\begin{align*}
  \frac{\min(k,\Xi)}{\sqrt{k}}\leq \frac{\Xi}{\sqrt{k}}\leq 1+\frac{c}{\alpha-1}.
\end{align*}
Substituting the above inequality into Theorem \ref{the-nsty}, which can prove this result.
\end{proof}

\subsection{Proof of Corollary \ref{cor-second}}
\begin{proof}[Proof of Corollary \ref{cor-second}]
  Note that
\begin{align*}
  &\mathbb{E}[\mathcal{W}(\tilde{\mathbf C}_{m,n},\mathbb{P})]-\mathcal{W}^\ast(\mathbb{P})
 \\=&\mathbb{E}\Big[\mathcal{W}(\tilde{\mathbf C}_n,\mathbb{P})-\mathcal{W}(\tilde{\mathbf C}_{m,n},\mathbb{P}_n)+
 \mathcal{W}(\tilde{\mathbf C}_{m,n},\mathbb{P}_n)-\mathcal{W}(\mathbf C_{m,n},\mathbb{P}_n)
 \\&~~~+\mathcal{W}(\mathbf C_{m,n},\mathbb{P}_n)-
 \mathcal{W}(\mathbf C_{m,n},\mathbb{P})+\mathcal{W}(\mathbf C_{m,n},\mathbb{P})-\mathcal{W}^\ast(\mathbb{P})\Big]\\
 &\leq \underbrace{\mathbb{E}\Big[\mathcal{W}(\tilde{\mathbf C}_{m,n},\mathbb{P})-\mathcal{W}(\tilde{\mathbf C}_{m,n},\mathbb{P}_n)\Big]}_{A_1}
 +\underbrace{\mathbb{E}\Big[\mathcal{W}(\tilde{\mathbf C}_{m,n},\mathbb{P}_n)-\mathcal{W}(\mathbf C_{m,n},\mathbb{P}_n)\Big]}_{A_2}\\
 &~~~+\underbrace{\mathbb{E}\Big[\mathcal{W}(\mathbf C_{m,n},\mathbb{P}_n)-\mathcal{W}(\mathbf C_{m,n},\mathbb{P})\Big]}_{A_3}+
 \underbrace{\mathbb{E}\Big[\mathcal{W}(\mathbf C_{m,n},\mathbb{P})\Big]-\mathcal{W}^\ast(\mathbb{P})}_{A_4}.
\end{align*}
Note that $A_2$ is bounded by $\zeta$,
$A_4$ can be obtained from Corollary \ref{cor-first},
and $A_1$ and $A_3$ can be bounded by the Rademacher complexity:
\begin{align*}
  A_1 \leq \mathbb{E}\sup_{\mathbf C\in\mathcal{H}^k}\left(\mathcal{W}(\mathbf C,\mathbb{P}_n)-\mathcal{W}(\mathbf C,\mathbb{P})\right)\leq \frac{2}{n}\mathcal{R}(\mathcal{G}_\mathbf{C}),\\
  A_3 \leq \mathbb{E}\sup_{\mathbf C\in\mathcal{H}^k}\left(\mathcal{W}(\mathbf C,\mathbb{P}_n)-\mathcal{W}(\mathbf C,\mathbb{P})\right)\leq \frac{2}{n}\mathcal{R}(\mathcal{G}_\mathbf{C}).
\end{align*}
Thus, we can obtain that
\begin{align}
\label{kmeans-approxiamte-aa-mid}
  \mathbb{E}[\mathcal{W}(\tilde{\mathbf C}_n,\mathbb{P})]-\mathcal{W}^\ast(\mathbb{P})
  \leq \tilde{\mathcal{O}}\left(\frac{\mathcal{R}(\mathcal{G}_{\mathbf{C}})}{n}+\sqrt{\frac{k}{n}}+\zeta\right).
\end{align}
From \eqref{lem-diff}, we know that
  \begin{align*}
    \mathcal{R}(\mathcal{G}_\mathbf{C})&\leq {\mathcal{R}}_n(\mathcal{G}_\mathbf{C})+\sqrt{{2n\log\left(\frac{1}{\delta}\right)}}\\
    &\leq  c\sqrt{k}\max_i \tilde{\mathcal{R}}_n(\mathcal{F}_{\mathbf C_i})\log ^{\frac{3}{2}+\delta}\left(\frac{n}{\max_i \tilde{\mathcal{R}}_n(\mathcal{F}_{\mathbf C_i})}\right)
    +\sqrt{{2n\log\left(\frac{1}{\delta}\right)}} & \text{(by Theorem \ref{the-main-restult})}\\
    &\leq 3c\sqrt{kn}\log ^{\frac{3}{2}+\delta}\left(\frac{n}{\max_i \tilde{\mathcal{R}}_n(\mathcal{F}_{\mathbf C_i})}\right)+\sqrt{{2n\log\left(\frac{1}{\delta}\right)}} &(\text{by Lemma \ref{the-middle}})
  \end{align*}
  Substituting the above inequality into \eqref{kmeans-approxiamte-aa-mid}, which proves the result.
\end{proof}

\subsection{Proof of Corollary \ref{cor-gajgahgahgagh}}
\begin{proof}[Proof of Corollary \ref{cor-gajgahgahgagh}]
  Note that
  \begin{align*}
    \mathbb{E}\left[\mathbb{E}_\mathcal{A}[\mathcal{W}(\mathbf C_{n,m}^\mathcal{A},\mathbb{P})]\right]
    &=\mathbb{E}\Big[ \mathbb{E}_\mathcal{A}[\mathcal{W}(\mathbf C_{n,m}^\mathcal{A},\mathbb{P})]-
    \mathbb{E}_\mathcal{A}[\mathcal{W}(\mathbf C_{n,m}^\mathcal{A},\mathbb{P}_n)]
    +\mathbb{E}_\mathcal{A}[\mathcal{W}(\mathbf C_{n,m}^\mathcal{A},\mathbb{P}_n)]\Big]\\
    &\leq \mathbb{E}\Big[ \mathbb{E}_\mathcal{A}[\mathcal{W}(\mathbf C_n^\mathcal{A},\mathbb{P})]-
    \mathbb{E}_\mathcal{A}[\mathcal{W}(\mathbf C_{n,m}^\mathcal{A},\mathbb{P}_n)]\Big]+
     \mathbb{E}\Big[\mathbb{E}_\mathcal{A}[\mathcal{W}(\mathbf C_{n,m}^\mathcal{A},\mathbb{P}_n)]\Big].
  \end{align*}
  From Lemma \ref{lem-gahgag}, we can obtain that
  \begin{align*}
    &\mathbb{E}\Big[\mathbb{E}_\mathcal{A}[\mathcal{W}(\mathbf C_{n,m}^\mathcal{A},\mathbb{P}_n)]\Big]
    \leq \beta\cdot \mathbb{E}[\mathcal{W}(\mathbf C_{n,m}),\mathbb{P}_n]\\
    = &\beta\cdot \mathbb{E}\Big[\mathcal{W}(\mathbf C_{n,m},\mathbb{P}_n)-\mathcal{W}(\mathbf C_{n,m},\mathbb{P})+\mathcal{W}(\mathbf C_{n,m},\mathbb{P})\Big]\\
    \leq &\beta\cdot \mathbb{E}\Big[\mathcal{W}(\mathbf C_{n,m},\mathbb{P}_n)-\mathcal{W}(\mathbf C_{n,m},\mathbb{P})\Big]+
    \beta\cdot\mathbb{E}\Big[\mathcal{W}(\mathbf C_{n,m},\mathbb{P})\Big].
  \end{align*}
  Thus, we can obtain that
  \begin{align*}
    \mathbb{E}\Big[\mathbb{E}_\mathcal{A}[\mathcal{W}(\mathbf C_{n,m}^\mathcal{A},\mathbb{P})]\Big]&\leq
    \underbrace{\mathbb{E}\Big[ \mathbb{E}_\mathcal{A}[\mathcal{W}(\mathbf C_{n,m}^\mathcal{A},\mathbb{P})]-
    \mathbb{E}_\mathcal{A}[\mathcal{W}(\mathbf C_{n,m}^\mathcal{A},\mathbb{P}_{n,m})]\Big]}_{A_1}
    \\&+\beta\cdot \underbrace{\mathbb{E}\Big[\mathcal{W}(\mathbf C_{n,m},\mathbb{P}_{n,m})-\mathcal{W}(\mathbf C_{n,m},\mathbb{P})\Big]}_{A_2}+
    \beta\cdot\underbrace{\mathbb{E}\Big[\mathcal{W}(\mathbf C_{n,m},\mathbb{P})\Big]}_{A_3}.
  \end{align*}
  Note that
  \begin{align*}
    A_1, A_2&\leq \mathbb{E}\sup_{\mathbf C\in \mathcal{H}^k}\big(\mathcal{W}(\mathbf C,\mathbb{P}_n)
    -\mathcal{W}(\mathbf C,\mathbb{P})\big) &\\
    &\leq \frac{2}{n}{\mathcal{R}}_n(\mathcal{G}_\mathbf{C})+\sqrt{\frac{8}{n}{\log\left(\frac{1}{\delta}\right)}}  &\text{(by \ref{leq-twwe} and \ref{lem-diff}})\\
     &\leq \frac{2c\sqrt{k}}{n}\max_i {\tilde{\mathcal{R}}_n}(\mathcal{F}_{\mathbf C_i})\log^{\frac{3}{2}+\delta}\left(\frac{n}{{\tilde{\mathcal{R}}_n}(\mathcal{F}_{\mathbf C_i})}\right)
     +\sqrt{\frac{8\log(1/\delta)}{n}}&\text{(by Theorem \ref{the-main-restult})}\\
     &\leq \frac{6\sqrt{k}}{\sqrt{n}}\log^{\frac{3}{2}+\delta}\left(\frac{n}{{\tilde{\mathcal{R}}_n}(\mathcal{F}_{\mathbf C_i})}\right)
     +\sqrt{\frac{8\log(1/\delta)}{n}}&\text{(by Lemma \ref{the-middle})}\\
     &\leq \tilde{\mathcal{O}}\left(\sqrt{\frac{k}{n}}\right)
  \end{align*}
  By Theorem \ref{the-three}, we can obtain that
  \begin{align*}
    \mathbb{E}[\mathcal{W}(\mathbf C_n,\mathbb{P})]
    \leq  \mathcal{W}^\ast(\mathbb{P})+c \sqrt{\frac{k}{n}}\log ^{\frac{3}{2}+\delta}
    \left(\frac{n}{\max_i \tilde{\mathcal{R}}_n(\mathcal{F}_{\mathbf C_i})}\right).
  \end{align*}
  Thus, we can get
  \begin{align*}
    \mathbb{E}\Big[\mathbb{E}_\mathcal{A}[\mathcal{W}(\mathbf C_{n,m}^\mathcal{A},\mathbb{P}_n)]\Big]&
    \leq  \left(\frac{n}{\max_i \tilde{\mathcal{R}}_n(\mathcal{F}_{\mathbf C_i})}+\mathcal{W}^\ast(\mathbb{P})\right).
  \end{align*}
\end{proof}

\end{document}